\newcommand{\II}{{\mathcal I}}
\newcommand{\W}{{\mathbf W}}
\newcommand{\vx}{{\mathbf x}}
\newcommand{\vb}{{\mathbf b}}
\newcommand{\vh}{{\mathbf h}}
\DeclareMathOperator*{\argmin}{argmin}
\newtheorem{theorem}{Theorem}
\newtheorem{definition}[theorem]{Definition}
\newcommand{\real}{\mathbb{R}}
\newcommand{\V}[1]{{\mathbf{#1}}}
\newcommand{\T}{\top}
\newcommand{\N}[1]{ {\left\| #1 \right\|}}
\newcommand{\abs}[1]{ {\left| #1 \right|}}
\newcommand{\algName}{{\tt NID}}
\newcommand{\mlp}{{\emph{MLP}}}
\newcommand{\mlpm}{{\emph{MLP-M}}}
\newcommand{\mlpc}{{\emph{MLP-Cutoff}}}
\title{Detecting Statistical Interactions \\From Neural Network Weights}
\author{Michael Tsang, Dehua Cheng, Yan Liu\\
Department of Computer Science\\
University of Southern California\\
\texttt{\{tsangm,dehuache,yanliu.cs\}@usc.edu} \\
}
\begin{document}

\maketitle
\begin{abstract}
Interpreting neural networks is a crucial and challenging task in machine learning. 
In this paper, we develop a novel framework for detecting statistical interactions captured by a feedforward multilayer neural network by directly interpreting its learned weights. Depending on the desired interactions, our method can achieve significantly better or similar interaction detection performance compared to the state-of-the-art without searching an exponential solution space of possible interactions. We obtain this accuracy and efficiency by observing that interactions between input features are created by the non-additive effect of nonlinear activation functions, and that interacting paths are encoded in weight matrices. We demonstrate the performance of our method and the importance of discovered interactions via experimental results on both synthetic datasets and real-world application datasets. 

\end{abstract}
\section{Introduction}

Despite their strong predictive power, neural networks have traditionally been treated as ``black box'' models, preventing their adoption in many application domains. It has been noted that complex machine learning models can learn unintended patterns from data, raising significant risks to stakeholders~\citep{varshney2016safety}. 
Therefore, in applications where machine learning models are intended for making critical decisions, such as healthcare or  finance,  it is paramount to understand how they make predictions~\citep{caruana2015intelligible,goodman2016european}.

Existing approaches to interpreting neural networks can be summarized into two types. One type is direct interpretation, which focuses on 1) explaining individual feature importance, for example by computing input gradients~\citep{simonyan2013deep, ross2017right, sundararajan2017axiomatic} and decomposing predictions~\citep{bach2015pixel,shrikumar2017learning}, 2) developing attention-based models, which illustrate where neural networks focus during inference~\citep{itti1998model,mnih2014recurrent,xu2015show},  and 3) providing model-specific visualizations, such as feature map and gate activation visualizations~\citep{yosinski2015understanding,karpathy2015visualizing}. The other type is indirect interpretation, for example post-hoc interpretations of feature importance~\citep{ribeiro2016should} and knowledge distillation to simpler interpretable models~\citep{che2016interpretable}.

It has been commonly believed that one major advantage of neural networks is their capability of modeling complex statistical interactions between features for automatic feature learning. 
Statistical interactions capture important information on where features often have joint effects with other features on predicting an outcome. 
The discovery of interactions is especially useful for scientific discoveries and hypothesis validation. For example, physicists may be interested in understanding what joint factors provide evidence for new elementary particles; doctors may want to know what interactions are accounted for in risk prediction models, to compare against known interactions from existing medical literature. 

In this paper, we propose an accurate and efficient framework, called \emph{Neural Interaction Detection} ({\algName}), which detects statistical interactions of any order or form captured by a feedforward neural network, by examining its weight matrices. Our approach is efficient because it avoids searching over an exponential solution space of interaction candidates by making an approximation of hidden unit importance at the first hidden layer via all weights above and doing a 2D traversal of the input weight matrix. We provide theoretical justifications on why interactions between features are created at hidden units and why our hidden unit importance approximation satisfies bounds on hidden unit gradients.  
Top-$K$ true interactions are determined from interaction rankings  by using a special form of generalized additive model, which accounts for interactions of variable order~\citep{wood2006generalized,lou2013accurate}. Experimental results on simulated datasets and real-world datasets demonstrate the effectiveness of {\algName} compared to the state-of-the-art methods in detecting statistical interactions. 

The rest of the paper is organized as follows: we first review related work and define notations in Section~\ref{sec:relatedwork}.
In Section~\ref{sec:methods}, 
we examine and quantify the interactions encoded in a neural network,
which leads to our framework for interaction detection detailed in Section \ref{sec:detection}. Finally, we study our framework empirically and demonstrate its practical utility on real-world datasets in Section~\ref{sec:experiments}.
 
\section{Related Work and Notations}
\label{sec:relatedwork}

\subsection{Interaction Detection}
Statistical interaction detection has been a well-studied topic in statistics, dating back to the 1920s when two-way ANOVA was first introduced~\citep{fisher1925statistical}. Since then, two general approaches emerged for conducting interaction detection. One approach has been to conduct individual tests for each combination of features~\citep{lou2013accurate}. The other approach has been to pre-specify all interaction forms of interest, then use lasso to simultaneously select which are important~\citep{tibshirani1996regression,bien2013lasso,min2014interpretable,purushotham2014factorized}.

Notable approaches such as \emph{ANOVA} and \emph{Additive Groves}~\citep{sorokina2008detecting} belong to the first group. 
Two-way ANOVA has been a standard method of performing pairwise interaction detection that involves conducting hypothesis tests for each interaction candidate by checking each hypothesis with F-statistics~\citep{wonnacott1972introductory}.
Besides two-way ANOVA, there is also three-way ANOVA that performs the same analyses but with interactions between three variables instead of two; however, four-way ANOVA and beyond are rarely done because of how computationally expensive such tests become. Specifically, the number of interactions to test grows exponentially with interaction order. 

\emph{Additive Groves} is another method that conducts individual tests for interactions and hence must face the same computational difficulties; however, it is special because the interactions it detects are not constrained to any functional form e.g. multiplicative interactions. The unconstrained manner by which interactions are detected is advantageous when the interactions are present in highly nonlinear data~\citep{sorokina2007additive,sorokina2008detecting}. \emph{Additive Groves} accomplishes this by comparing two regression trees, one that fits all interactions, and the other that has the interaction of interest forcibly removed. 

In interaction detection, lasso-based methods are popular in large part due to how quick they are at selecting interactions. One can construct an additive model with many different interaction terms and let lasso shrink the coefficients of unimportant terms to zero~\citep{tibshirani1996regression}. While lasso methods are fast, they require specifying all interaction terms of interest. For pairwise interaction detection, this requires $O(p^2)$ terms (where $p$ is the number of features), and $O(2^p)$ terms for higher-order interaction detection. Still, the form of interactions that lasso-based methods capture is limited by which are pre-specified.

Our approach to interaction detection is unlike others in that it is both fast and capable of detecting interactions of variable order without limiting their functional forms. The approach is fast because it does not conduct individual tests for each interaction to accomplish higher-order interaction detection. This property has the added benefit of avoiding a high false positive-, or \emph{false discovery rate}, that commonly arises from multiple testing~\citep{benjamini1995controlling}.

\subsection{Interpretability}

\begin{wrapfigure}{R}{0.5\textwidth}
\begin{center}
\includegraphics[scale=0.30]{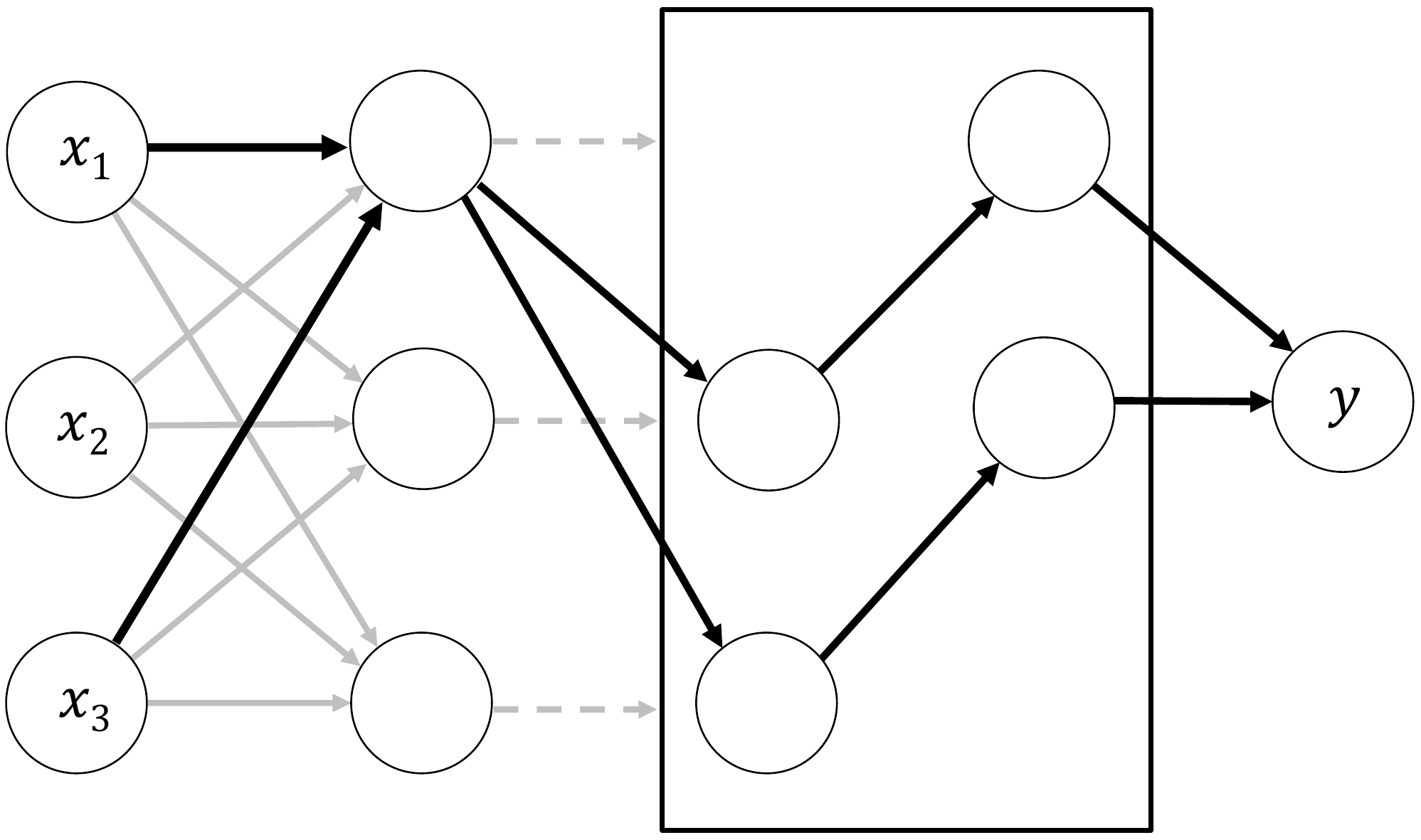}
\end{center}
\caption{An illustration of an interaction within a fully connected feedforward neural network, where the box contains later layers in the network. 
The first hidden unit takes inputs from $x_1$ and $x_3$ with large weights and creates an interaction between them. 
The strength of the interaction is determined
 by both incoming weights and the outgoing paths between a hidden unit and the final output, $y$. \label{fig:interactionnetwork} }
\end{wrapfigure}

The interpretability of neural networks has largely been a mystery since their inception; however, many approaches have been developed in recent years to interpret neural networks in their traditional feedforward form and as deep architectures. Feedforward neural networks have undergone multiple advances in recent years, with theoretical works justifying the benefits of neural network depth~\citep{telgarsky2016benefits,liang2016deep,rolnick2017power} and new research on interpreting feature importance from input gradients~\citep{hechtlinger2016interpretation, ross2017right, sundararajan2017axiomatic}. 
Deep architectures have seen some of the greatest breakthroughs, with the widespread use of attention mechanisms in both convolutional and recurrent architectures to show where they focus on for their inferences~\citep{itti1998model,mnih2014recurrent,xu2015show}. Methods such as feature map visualization~\citep{yosinski2015understanding}, de-convolution~\citep{zeiler2014visualizing}, saliency maps~\citep{simonyan2013deep}, and many others have been especially important to the vision community for understanding how convolutional networks represent images. 
With long short-term memory networks (LSTMs), a research direction has studied multiplicative interactions in the unique gating equations of LSTMs to extract relationships between variables across a sequence~\citep{arras2017explaining,murdoch2018beyond}.

Unlike previous works in interpretability, our approach extracts generalized non-additive interactions between variables from the weights of a neural network.

\subsection{Notations}
\label{sec:preliminaries}

Vectors  are represented by boldface lowercase letters, such as $\V{x}, \V{w}$; matrices are represented by boldface capital letters, such as $\W$. The $i$-th entry of a vector $\V{w}$ is denoted by $w_i$, and element $(i,j)$ of a matrix $\W$ is denoted by $W_{i,j}$. 
The $i$-th row and $j$-th column of $\W$ are denoted 
by $\W_{i,:}$ and $\W_{:,j}$, respectively. 
For a vector $\V{w}\in\real^n $, 
let $\mathrm{diag}(\V{w})$ be a diagonal matrix of size $n\times n$, 
where $\{\mathrm{diag}(\V{w})\}_{i,i} = w_i$.
For a matrix $\W$, let $\abs{\W}$ be a matrix of the same size where $\abs{\W}_{i,j} = \abs{\W_{i,j}}$.

Let $[p]$ denote the set of integers from $1$ to $p$.
An \emph{interaction}, $\II$, is a subset of all input features $[p]$ with $\abs{\II}\geq 2$.
 For a vector $\V{w}\in \real^p$ and $\II\subseteq [p]$,
  let $\V{w}_{\II}\in \real^{\abs{\II}}$ be the vector restricted to the dimensions specified by $\II$.

\textbf{Feedforward Neural Network}\footnote{In this paper, we mainly focus on the \emph{multilayer perceptron} architecture with ReLU activation functions, while some of our results can be generalized to a broader class of feedforward neural networks.}
Consider a feedforward neural network with $L$ hidden layers.
 Let $p_\ell$ be the number of hidden units in the $\ell$-th layer. We treat the input features as the $0$-th layer and $p_0 = p$ is the number of input features.
There are $L$ weight matrices $\W^{(\ell)} 
\in 
\real^{p_\ell \times  p_{\ell-1} }$
 and
 $L$ bias vectors 
 $\vb^{(\ell)}\in \real^{p_\ell}$,
 $\ell = 1,2,\dots,L$.
  Let $\phi \left( \cdot \right)$ be the activation function (nonlinearity), and let $\V{w}^y\in \real^{p_L}$ and $b^y\in \real$
  be the coefficients and bias for the final output.
 Then, the hidden units $\vh^{(\ell)}$ of the neural network and the output $y$ with input $\vx\in\real^p$ can be expressed as:
\begin{align*}
\vh^{(0)} =  \vx,
\quad y= 
{\left(\V{w}^y\right)}^\T  \vh^{(L)} + b^y,
\quad
\text{ and }
\vh^{(\ell)} 
= 
\phi\left(
\W^{(\ell)}\vh^{(\ell-1)} 
+\vb^{(\ell)}
 \right),
 \quad \forall 
 \ell=1,2,\dots,L.
\end{align*}
We can construct a directed acyclic graph $G=\left(V,E\right)$ based on non-zero weights, where we create vertices for input features and hidden units in the neural network and edges based on the non-zero entries in the weight matrices. See Appendix \ref{apd:commonhidden} for a formal definition.

\section{Feature Interactions in Neural Networks}
\label{sec:methods}

\label{sec:properties} 

A statistical interaction describes a situation in which the joint influence of multiple variables on an output variable is not additive~\citep{dodge2006oxford,sorokina2008detecting}. Let $x_i,i\in[p]$ be the features and $y$ be the response variable, a statistical interaction $\II\subseteq [p]$ exists if and only if
$\mathbb{E}\left[y|\V{x}\right]$, which is a function of $\V{x}=\left(x_1,x_2,\dots,x_p\right)$, contains a non-additive interaction between variables $\V{x}_{\II}$:
\begin{definition}[Non-additive Interaction]\label{def:interaction}
Consider a function $f(\cdot)$ with input variables $x_i, i\in [p]$, and an interaction $\II \subseteq [p]$. 
Then $\II$ is a non-additive interaction of function $f(\cdot)$ if and only if there does not exist a set of functions $f_i(\cdot),\forall i \in \II$ where $f_i(\cdot)$ is not a function of $x_i$, such that 
\begin{align*}
f\left(\V{x}\right)
=
\sum_{i\in \II}
f_i\left(\V{x}_{[p] \backslash \{i\} }\right).
\end{align*}
\label{definition}
\end{definition}
\vspace{-0.1in}
For example, in $x_1x_2 + \sin\left(x_2 + x_3 + x_4\right)$, there is a pairwise interaction 
$\{1,2\}$ and a 3-way \emph{higher-order} interaction $\{2,3,4\}$, where higher-order denotes $\abs{\II}\geq3$. 
Note that from the definition of statistical interaction, a
\emph{$d$-way interaction can only exist if all its corresponding $(d - 1)$-interactions exist}~\citep{sorokina2008detecting}. 
For example, the interaction $\{1,2,3\}$ can only exist if interactions $\{1,2\}$, $\{1,3\}$, and $\{2,3\}$ also exist. We will often use this property in this paper.

In feedforward neural networks, statistical interactions between features, or feature interactions for brevity, are created at hidden units with nonlinear activation functions, and the influences of the interactions are propagated layer-by-layer to the final output (see Figure \ref{fig:interactionnetwork}). 
In this section, 
we propose a framework to identify and quantify interactions at a hidden unit
for efficient interaction detection, then the interactions are combined across hidden units in Section \ref{sec:detection}.

\subsection{Feature Interactions at Individual Hidden Units}
\label{sec:interactionAtHidden} 

In feedforward neural networks, any interacting features must follow strongly weighted connections to a common hidden unit before the final output.
That is, in the corresponding directed graph, interacting features will share at least one common descendant.
The key observation is that non-overlapping paths in the network are aggregated via weighted summation
at the final output without creating any interactions between features.
The statement is rigorized in the following proposition and a proof is provided in Appendix \ref{apd:commonhidden}.  
 The reverse of this statement, that a common descendant will create an interaction among input features, holds true in most cases.

\begin{restatable}[Interactions at Common Hidden Units]{proposition}{commonhidden}\label{thm:commonhidden}
Consider a feedforward neural network with input feature $x_i, i\in[p]$, where 
$y=\varphi\left(x_1,\dots, x_p\right)$.
For any interaction $\II \subset [p]$ in $\varphi\left(\cdot\right)$,
there exists a vertex $v_{\II}$ in the associated directed graph such that $\II$ is a subset of the ancestors of $v_{\II}$ at the input layer (i.e., $\ell=0$).
\end{restatable}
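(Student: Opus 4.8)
The plan is to prove the contrapositive. Assuming $\II$ is a non-additive interaction of $\varphi$ in the sense of Definition~\ref{def:interaction}, I will show that some vertex of the associated graph $G$ has all of $\II$ among its input-layer ancestors. Equivalently, I suppose toward a contradiction that \emph{no} vertex $v$ of $G$ has $\II$ contained in its ancestors at $\ell=0$, and then derive that $\varphi$ admits an additive decomposition over $\II$, contradicting the hypothesis. Note that if some hidden unit already has all of $\II$ as ancestors we are immediately done, so the contradiction hypothesis is the only case requiring work.

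First I would reduce everything to the final hidden layer by writing the output as the linear read-out $\varphi(\vx) = \sum_{j=1}^{p_L} w^y_j\, h^{(L)}_j(\vx) + b^y$. The central observation, which formalizes the informal claim that non-overlapping paths are merely summed at the output, is that each coordinate $h^{(L)}_j$ depends only on those input features that are ancestors of the corresponding vertex: if $x_i$ is not an ancestor of the $j$-th last-layer unit, then $h^{(L)}_j$ is invariant to $x_i$. This step, translating the combinatorial graph condition (edges exactly where weights are nonzero) into genuine functional independence, is what I expect to be the main obstacle. I would establish it by a layer-by-layer propagation argument showing that the absence of any directed path of nonzero weights from $x_i$ to a unit makes that unit's value constant in $x_i$; this is presumably the content underlying the formal graph definition in Appendix~\ref{apd:commonhidden}.

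Next, under the contradiction hypothesis no last-layer unit has all of $\II$ as ancestors, so for every $j$ there exists at least one index $i(j)\in\II$ such that $x_{i(j)}$ is not an ancestor of the $j$-th unit; fixing one such choice assigns each $j$ to a feature it ``misses.'' I would then regroup the read-out by this assignment: for each $i\in\II$ set $f_i := \sum_{j:\, i(j)=i} w^y_j\, h^{(L)}_j$, folding the bias $b^y$ into a single arbitrary group. Since the index sets partition $\{1,\dots,p_L\}$ and every unit in the $i$-th group is independent of $x_i$, each $f_i$ is a function of $\vx_{[p]\backslash\{i\}}$ only, and by construction $\sum_{i\in\II} f_i = \varphi$.

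This is exactly the additive form that Definition~\ref{def:interaction} forbids for a non-additive interaction, yielding the contradiction; hence some vertex of $G$ must carry all of $\II$ among its input-layer ancestors. A few routine points remain to be verified: that every feature of a genuine interaction actually influences the output (ruling out degenerate cases where a factor absent from $\varphi$ trivially forces additivity), that the partition step is legitimate when a unit misses several features of $\II$ (any single choice of $i(j)$ suffices), and the precise non-ancestor invariance lemma. None of these should pose difficulty beyond careful bookkeeping, so the crux is genuinely the functional-independence translation in the second step.
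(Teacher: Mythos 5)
Your proposal is correct and follows essentially the same route as the paper's proof: both argue by contradiction, assign each last-layer unit to a feature of $\II$ it misses (the paper canonically picks the \emph{first} missing feature, you note any choice works), and regroup the linear read-out into summands $f_i$ independent of $x_i$, contradicting Definition~\ref{def:interaction}. The functional-independence step you flag as the crux is indeed the content the paper builds into its graph construction, so there is no substantive difference.
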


In general, the weights in a neural network are nonzero,
 in which case Proposition \ref{thm:commonhidden} blindly infers that all features are interacting.
For example, in a neural network with just a single hidden layer, any hidden unit in the network can imply up to $2^{\N{\W_{j,:}}_0}$ potential interactions, 
where  $\N{\W_{j,:}}_0$ is the number of nonzero values
in the weight vector $\W_{j,:}$ for the $j$-th hidden unit.
 Managing the large solution space of interactions based on nonzero weights
 requires us to characterize the relative importance of interactions, so we must mathematically define the concept of interaction strength. In addition, we limit the search complexity of our task by only quantifying interactions created at the first hidden layer, which is important for fast interaction detection and sufficient for high detection performance based on empirical evaluation (see evaluation in Section~\ref{sec:pairwise_experiments} and Table~\ref{table:auc}).

Consider a hidden unit in the first layer: 
 $\phi\left(\V{w}^\T \V{x} + b\right)$, where $\V{w}$ is the associated weight vector and $\V{x}$ is the input vector.
  While having the weight $w_i$ of each feature $i$,
 the correct way of summarizing feature weights for defining interaction strength is not trivial.
 For an interaction $\II \subseteq [p]$, we propose to
 use an  average of
 the relevant feature weights $\V{w}_{\II}$ as the 
 surrogate for the interaction strength:
$  \mu\left(\abs{\V{w}_{\II}}\right)$, 
where $\mu\left(\cdot\right)$ is the averaging function for an interaction
that represents the interaction strength due to feature weights. 

We provide guidance on how $\mu$ should be defined by first considering representative averaging functions from the generalized mean family: maximum value, root mean square, arithmetic mean, geometric mean, harmonic mean, and minimum value~\citep{bullen1988means}. These options can be narrowed down by accounting for intuitive properties of interaction strength
: 1) interaction strength is evaluated as zero whenever an interaction does not exist (one of the features has zero weight);
  2) interaction strength does not decrease with any increase in magnitude of feature weights;
  3) interaction strength is less sensitive to changes in large feature weights.

While the first two properties place natural constraints on interaction strength behavior, the third property is subtle in its intuition. 
Consider the scaling between the magnitudes of multiple feature weights, where one weight has much higher magnitude than the others. In the worst case, there is one large weight in magnitude while the rest are near zero. If the large weight grows in magnitude, then interaction strength may not change significantly, but if instead the smaller weights grow at the same rate, then interaction strength should strictly increase. As a result, maximum value, root mean square, and arithmetic mean should be ruled out because they do not satisfy either property 1 or 3.

\subsection{Measuring the Influence of Hidden Units}

\label{sec:hiddenInfluence}

Our definition of interaction strength at individual hidden units is not complete without considering their outgoing paths,
because an outgoing path of zero weight cannot contribute an interaction to the final output. To propose a way of quantifying the influence of an outgoing path on the final output, we draw inspiration from Garson's algorithm~\citep{garson1991interpreting,goh1995back}, which instead of computing the influence of a hidden unit, computes the influence of features on the output.  
This is achieved by cumulative matrix multiplications of the absolute values of 
weight matrices. 
In the following, we propose our definition of hidden unit influence, then prove that this definition upper bounds the gradient magnitude of the hidden unit with its activation function. To represent the influence of a hidden unit $i$ at the $\ell$-th hidden layer, we define the  \emph{aggregated weight} ${z}^{(\ell)}_{i}$,
\begin{align*}
\V{z}^{(\ell)}
=
\abs{\V{w}^{y}}^\T
\abs{\W^{(L)}}
\cdot 
\abs{\W^{(L-1)}}
\cdots
\abs{\W^{(\ell+1)}},
\end{align*} 
where $\V{z}^{(\ell)}\in \real^{p_\ell}$. This definition upper bounds the gradient magnitudes of hidden units because it computes Lipschitz constants for the corresponding units. Gradients have been commonly used as variable importance measures in neural networks, especially input gradients which compute directions normal to decision boundaries~\citep{ross2017right,goodfellow2015explaining,simonyan2013deep}. Thus, an upper bound on the gradient magnitude approximates how important the variable can be. 
A full proof is shown in Appendix \ref{apd:proofLipz}.

\begin{restatable}[Neural Network Lipschitz Estimation]{lemma}{lipschitz}\label{thm:lipschitz}
Let the activation function $\phi\left(\cdot\right)$ be a $1$-Lipschitz function. Then the output $y$
  is ${z}^{(\ell)}_{i}$-Lipschitz with respect to $h^{(\ell)}_{i}$. 
\end{restatable}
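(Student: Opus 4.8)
The plan is to regard the output $y$ as a function of the hidden layer $\vh^{(\ell)}$, obtained by pushing $\vh^{(\ell)}$ through the sub-network formed by layers $\ell+1,\dots,L$ and then the final linear read-out, and to track how a perturbation confined to the single coordinate $h^{(\ell)}_i$ propagates forward. Concretely, I would take two copies of $\vh^{(\ell)}$ that agree in every coordinate except the $i$-th, propagate both forward, and bound the resulting output gap $\abs{\Delta y}$ by $z^{(\ell)}_i \,\abs{\Delta h^{(\ell)}_i}$, where $\Delta$ denotes the difference between the perturbed and unperturbed quantities. This reduces the claim to controlling, layer by layer, a nonnegative componentwise sensitivity vector.

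First I would set up an induction on the layer index $k=\ell,\ell+1,\dots,L$ asserting that $\abs{\Delta \vh^{(k)}} \le \V{c}^{(k)}\,\abs{\Delta h^{(\ell)}_i}$ componentwise, for a nonnegative vector $\V{c}^{(k)}$. The base case is $\V{c}^{(\ell)}=\V{e}_i$, the $i$-th standard basis vector, since only coordinate $i$ is perturbed at layer $\ell$. For the inductive step I would use $\vh^{(k+1)}=\phi\!\left(\W^{(k+1)}\vh^{(k)}+\vb^{(k+1)}\right)$ together with the componentwise $1$-Lipschitz property of $\phi$, followed by the triangle inequality to move absolute values inside the weighted sum:
\[
\abs{\Delta h^{(k+1)}_m}\;\le\;\sum_{j}\abs{W^{(k+1)}_{m,j}}\,\abs{\Delta h^{(k)}_j}.
\]
This gives the recursion $\V{c}^{(k+1)}=\abs{\W^{(k+1)}}\,\V{c}^{(k)}$, and unrolling from $\V{c}^{(\ell)}=\V{e}_i$ yields $\V{c}^{(L)}=\abs{\W^{(L)}}\cdots\abs{\W^{(\ell+1)}}\,\V{e}_i$. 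Finally, the read-out $y=\left(\V{w}^y\right)^\T\vh^{(L)}+b^y$ gives $\abs{\Delta y}\le \abs{\V{w}^y}^\T\abs{\Delta \vh^{(L)}}\le\left(\abs{\V{w}^y}^\T\V{c}^{(L)}\right)\abs{\Delta h^{(\ell)}_i}$, and by the definition of $\V{z}^{(\ell)}$ the leading coefficient is exactly $z^{(\ell)}_i$, which is the desired Lipschitz constant.

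The only delicate step, and the one I expect to be the crux, is the inductive bound itself: it is here that the componentwise $1$-Lipschitz assumption on $\phi$ and the triangle inequality combine to replace each signed weight matrix $\W^{(k)}$ by its absolute-value counterpart $\abs{\W^{(k)}}$, thereby matching the definition of $\V{z}^{(\ell)}$. Everything else — the base case, the matrix unrolling, and the final contraction against $\abs{\V{w}^y}$ — is routine bookkeeping once this conversion from signed to absolute weights is established.
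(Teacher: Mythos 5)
Your proof is correct, but it reaches the bound by a genuinely different route than the paper. The paper differentiates: it writes $\partial y/\partial h^{(\ell)}_i$ via the chain rule as a sum over paths $j_{\ell+1},\dots,j_L$ through the upper layers, i.e.\ as ${\V{w}^{y}}^\T \mathrm{diag}(\bm{\dot{\phi}}^{(L)})\W^{(L)}\cdots\mathrm{diag}(\bm{\dot{\phi}}^{(\ell+1)})\W^{(\ell+1)}$ evaluated at column $i$, and then bounds each signed path term by its absolute value using $\abs{\partial_x\phi}\le 1$. Because ReLU is not differentiable, the paper must first replace $\phi$ by a sequence of differentiable $1$-Lipschitz surrogates and pass to the limit. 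Your argument instead works directly with finite increments: you perturb only coordinate $i$ of $\vh^{(\ell)}$ and inductively propagate the componentwise sensitivity vector $\V{c}^{(k)}$ through the recursion $\V{c}^{(k+1)}=\abs{\W^{(k+1)}}\V{c}^{(k)}$, using only the scalar $1$-Lipschitz property of $\phi$ and the triangle inequality. This sidesteps the smoothing/limiting step entirely and proves the Lipschitz claim in its native finite-difference form, rather than bounding a gradient and implicitly converting back. What the paper's derivative formulation buys in exchange is the explicit path-sum expression for the gradient, which supports the surrounding narrative that $z^{(\ell)}_i$ is a Lipschitz constant obtained from aggregating weighted paths; the two arguments unroll the same product $\abs{\V{w}^{y}}^\T\abs{\W^{(L)}}\cdots\abs{\W^{(\ell+1)}}$ in the end, and the crux in both is the same conversion of signed weights to absolute values.
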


\subsection{Quantifying Interaction Strength}

We now combine our definitions from Sections \ref{sec:interactionAtHidden} and \ref{sec:hiddenInfluence} to obtain the interaction strength $\omega_{i}(\II)$ of a potential interaction $\II$ at the $i$-th unit in the first hidden layer $h^{(1)}_i$,
\begin{equation}
\omega_{i}(\II) = {z}^{(1)}_{i} \mu\left(\abs{\V{W}^{(1)}_{i,\II}}\right).
\label{eq:strength}\end{equation}
Note that $\omega_{i}(\II)$ is  defined on a single hidden unit, and it is agnostic to scaling ambiguity within a ReLU based neural network. In Section \ref{sec:detection}, we discuss our scheme of aggregating strengths across hidden units, so we can compare interactions of different orders. 

\section{Interaction Detection}
\label{sec:detection}
\vspace{-0.05in}

In this section, we propose our feature interaction detection algorithm {\algName}, 
which can extract interactions of all orders without individually testing for each of them.
Our methodology for interaction detection is comprised of three main steps:
1) train a feedforward network with regularization,
2) interpret learned weights to obtain a ranking of interaction candidates, and
3) determine a cutoff for the top-$K$ interactions.

\subsection{Architecture}

\begin{wrapfigure}{R}{0.38\textwidth}
  \begin{center}
    \includegraphics[scale=0.3]{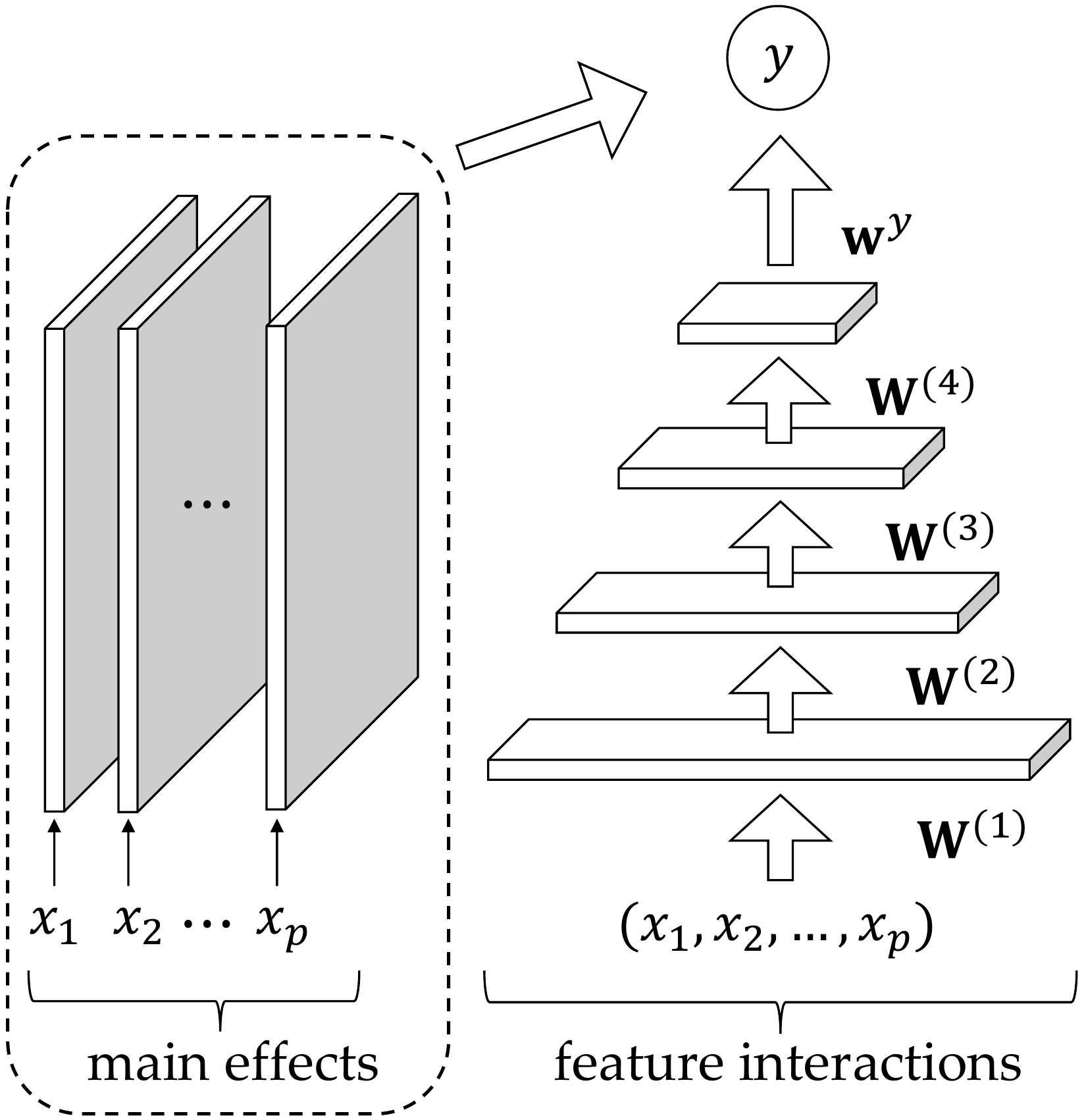}
  \end{center}
  \caption{Standard feedforward neural network for interaction detection, with optional univariate networks
  } \label{fig:structure}
\end{wrapfigure}

Data often contains both statistical interactions and main effects~\citep{winer1971statistical}. Main effects describe the univariate influences of variables on an outcome variable. We study two architectures: {\mlp} and {\mlpm}. {\mlp} is a standard multilayer perceptron, and {\mlpm} is an {\mlp} with additional univariate networks summed at the output (Figure \ref{fig:structure}). The univariate networks are intended to discourage the modeling of 
main effects
away from the standard {\mlp}, which can create spurious interactions using the main effects. When training the neural networks, we apply sparsity regularization on the {\mlp} portions of the architectures to 1) suppress unimportant interacting paths and 2) push the modeling of main effects towards any univariate networks. We note that this approach can also generalize beyond sparsity regularization (Appendix~\ref{apd:regularization}).

\vspace{-0.05in}
\subsection{Ranking Interactions of Variable Order}
\label{sec:ranking}
\vspace{-0.05in}

\begin{algorithm}[t]
    \renewcommand{\algorithmicrequire}{\textbf{Input:}}
    \renewcommand{\algorithmicensure}{\textbf{Output:}}
\caption{{\algName} Greedy Ranking Algorithm}
\begin{algorithmic}[1]
\Require input-to-first hidden layer weights $\V{W}^{(1)}$, aggregated weights $\V{z}^{(1)}$
\Ensure ranked list of interaction candidates $\{\II_i\}_{i=1}^m$
\State $d\gets$ initialize an empty dictionary mapping interaction candidate to interaction strength
\For{each row $\V{w}'$ of $\V{W}^{(1)}$ indexed by $r$}
    \For{$j=2$ to $p$}
    	\State$\II\gets$ sorted indices of top $j$ weights in $\V{w}'$
      \State $d[\II]\gets d[\II] + z^{(1)}_{r}\mu\left(\abs{\V{w}'_{\II}}\right)
$     
    \EndFor
\EndFor
\State $\{\II_i\}_{i=1}^m\gets$ interaction candidates in $d$ sorted by their strengths in descending order
\end{algorithmic}
\label{alg:rankingAlg}
\end{algorithm}

We design a greedy algorithm that generates a ranking of interaction candidates by only considering, at each hidden unit, the top-ranked interactions of every order, where  $2\leq \abs{\II} \leq p$,  thereby drastically reducing the search space of potential interactions while still considering all orders. 
The greedy algorithm (Algorithm \ref{alg:rankingAlg}) 
traverses the learned input weight matrix $\V{W}^{(1)}$ across hidden units and selects only top-ranked interaction candidates per hidden unit based on their interaction strengths (Equation~\ref{eq:strength}). By selecting the top-ranked interactions of every order and summing their respective strengths across hidden units, 
we obtain final interaction strengths, allowing variable-order interaction candidates to be ranked relative to each other. For this algorithm, we set the averaging function $\mu\left(\cdot\right)=\min\left(\cdot\right)$ based on its performance in experimental evaluation (Section~\ref{sec:setup}).

With the averaging function set to $\min\left(\cdot\right)$, Algorithm \ref{alg:rankingAlg}'s greedy strategy
automatically improves the ranking of a higher-order interaction over its redundant subsets\footnote{When a higher-order interaction is ranked above any of its subset interactions, those subset interactions can be automatically pruned from the ranking due to their redundancy.} (for redundancy, see Definition \ref{definition}). This allows the higher-order interaction to have a better chance of ranking above any false positives and being captured in the cutoff stage. We justify this improvement by proving Theorem \ref{thm:boost} under a mild assumption.

\begin{restatable}[Improving the ranking of higher-order interactions]{theorem}{boost}\label{thm:boost}

Let $\mathcal{R}$ be the set of interactions proposed by Algorithm \ref{alg:rankingAlg} with $\mu\left(\cdot\right)=\min\left(\cdot\right)$,
 let
 $\II \in \mathcal{R}$ be a $d$-way interaction where $d \geq 3$, and let $\mathcal{S}$ be the set of subset ($d-1$)-way interactions of $\II$ where $|\mathcal{S}|=d$.
 Assume that for any hidden unit $j$ which proposed $s \in \mathcal{S} \cap \mathcal{R}$, 
 $\II$ will also be proposed at the same hidden unit, and $\omega_j(\II)>\frac{1}{d}\omega_j(s)$.
 Then, one of the following must be true: a)  $\exists s \in \mathcal{S} \cap \mathcal{R}$ ranked lower than $\II$, i.e., $\omega(\II)>\omega(s)$,
 or b)  $\exists s \in \mathcal{S}$ where $s \notin \mathcal{R}$.
\end{restatable}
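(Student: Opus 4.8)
The plan is to argue by contradiction: I assume that neither conclusion (a) nor (b) holds and derive a contradiction from the stated assumption. Negating (b) gives $\mathcal{S}\subseteq\mathcal{R}$, so every $(d-1)$-subset of $\II$ is in fact proposed by the algorithm and $\mathcal{S}\cap\mathcal{R}=\mathcal{S}$. Negating (a) gives $\omega(s)\ge\omega(\II)$ for every $s\in\mathcal{S}$; summing over the $d$ elements of $\mathcal{S}$ yields the lower bound $\sum_{s\in\mathcal{S}}\omega(s)\ge d\,\omega(\II)$. The goal is then to contradict this by establishing a strict upper bound $\sum_{s\in\mathcal{S}}\omega(s)<d\,\omega(\II)$.

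To set up the upper bound, I would first write the aggregated strength of any interaction as a sum over the hidden units that propose it: for each interaction $T$ let $J_T$ be the set of first-layer units $j$ at which $T$ is the top-$\abs{T}$ candidate, so that $\omega(T)=\sum_{j\in J_T}\omega_j(T)$ and $\omega(T)$ is nonzero exactly when $T\in\mathcal{R}$. The assumption then supplies two facts for each $s\in\mathcal{S}$: if $j\in J_s$ then $\II$ is also proposed at $j$, so $J_s\subseteq J_{\II}$; and for every such $j$, $\omega_j(s)<d\,\omega_j(\II)$.

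The key step — and the one I expect to be the crux of the argument — is a combinatorial observation about the greedy selection. At a single hidden unit $j$ the algorithm proposes exactly one interaction of each order, namely the indices of the largest-magnitude weights; in particular it proposes a single $(d-1)$-way candidate. Hence at most one element of $\mathcal{S}$ can be proposed at any given unit $j$, which means the sets $\{J_s\}_{s\in\mathcal{S}}$ are pairwise disjoint. This disjointness is precisely what makes the counting work: without it one only obtains a factor of $d^2$ on the right-hand side and no contradiction results.

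Combining the pieces, I would estimate
\[
\sum_{s\in\mathcal{S}}\omega(s)=\sum_{s\in\mathcal{S}}\sum_{j\in J_s}\omega_j(s)<d\sum_{s\in\mathcal{S}}\sum_{j\in J_s}\omega_j(\II)=d\!\!\sum_{j\in\bigsqcup_{s} J_s}\!\!\omega_j(\II)\le d\sum_{j\in J_{\II}}\omega_j(\II)=d\,\omega(\II),
\]
where the strict inequality applies $\omega_j(s)<d\,\omega_j(\II)$ on each $J_s$, the middle equality uses the pairwise disjointness, and the final inequality uses $\bigsqcup_{s}J_s\subseteq J_{\II}$ together with $\omega_j(\II)\ge0$. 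This strict upper bound contradicts the lower bound from negating (a), completing the argument. The only subtlety to check carefully is that each $J_s$ is nonempty, so the strict per-unit inequality genuinely propagates to the total sum; this holds because $\mathcal{S}\subseteq\mathcal{R}$ under the negation of (b) forces every $s\in\mathcal{S}$ to be proposed at least once.
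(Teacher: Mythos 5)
Your proof is correct and follows essentially the same route as the paper's: contradiction via negating both (a) and (b), the per-unit bound $\omega_j(s)<d\,\omega_j(\II)$ summed over the proposing units, and the count $|\mathcal{S}\cap\mathcal{R}|=d$. You are in fact somewhat more careful than the paper, which writes $\omega(\II)=\sum_{s}\sum_{j}\omega_j(\II)$ where your version correctly treats this as an inequality via the disjointness of the sets $J_s$ and the containment $\bigsqcup_s J_s\subseteq J_{\II}$.
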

The full proof is included in Appendix \ref{apd:boost}. Under the noted assumption, the theorem in part a) shows that a $d$-way interaction will improve over one its $d-1$ subsets in rankings as long as there is no sudden drop from the weight of the $(d-1)$-way to the $d$-way interaction at the same hidden units. We note that the improvement extends to b) as well, when $d=|\mathcal{S}\cap \mathcal{R}|>1$.

Lastly, we note that 
Algorithm \ref{alg:rankingAlg} assumes there are at least as many first-layer hidden units as there are the true number of non-redundant interactions. In practice, we use an arbitrarily large number of first-layer hidden units because true interactions are initially unknown.

\vspace{-0.05in}
\subsection{Cutoff on Interaction Ranking}
\vspace{-0.05in}

\label{sec:cutoff}

In order to predict the true top-$K$ interactions $\{\II_i\}_{i=1}^K$, we must find a cutoff point on our interaction ranking from Section \ref{sec:ranking}. We obtain this cutoff by constructing a Generalized Additive Model (\emph{GAM}) with interactions: 
\vspace{-0.1in}
\begin{align*}
c_K(\vx)
=
\sum_{i=1}^p g_i(x_i)
+
\sum_{i=1}^K
g'_i(\vx_{\II}),
\end{align*}
where $g_i(\cdot)$ captures the main effects, $g'_i(\cdot)$ captures the interactions, and both $g_i$ and $g'_i$ are small feedforward networks trained jointly via backpropagation. We refer to this model as {\mlpc}.

We gradually add top-ranked interactions to the \emph{GAM}, increasing $K$, until \emph{GAM} performance on a validation set plateaus. The exact plateau point can be found by early stopping or other heuristic means, and we report $\{\II_i\}_{i=1}^K$ as the identified feature interactions.

\vspace{-0.05in}
\subsection{Pairwise Interaction Detection}
\vspace{-0.05in}

\label{sec:pairwise}
A variant to our interaction ranking algorithm tests for all pairwise interactions. Pairwise interaction detection has been a standard problem in the interaction detection literature~\citep{lou2013accurate,fan2016interaction} due to its simplicity. 
Modeling pairwise interactions is also the \emph{de facto} objective of many successful machine learning algorithms such as factorization machines~\citep{rendle2010factorization} and hierarchical lasso~\citep{bien2013lasso}.

We rank all pairs of features $\{i,j\}$ according to their interaction strengths $\omega(\{i,j\})$ calculated on the first hidden layer, where again the averaging function is $\min\left(\cdot\right)$, and
$
\omega(\{i,j\})  = \sum_{s=1}^{p_1} \omega_{s}(\{i,j\}).
$
The higher the rank, the more likely the interaction exists. 

\section{Experiments}
\label{sec:experiments}

\vspace{-0.1in}
In this section, we discuss our experiments on both simulated and real-world datasets to study the performance of our approach on interaction detection.

\vspace{-0.05in}
\subsection{Experimental Setup}
\vspace{-0.05in}

\label{sec:setup}

\begin{table}[!t]
\begin{center}
\caption{Test suite of data-generating functions}
\label{table:testsuite}
\resizebox{\columnwidth}{!}{%
\begin{tabular}{c|c}
\hline 
 $F_1(\vx)$\ & $\begin{aligned}[t] 
&\pi^{x_{1}x_{2}}\sqrt{2x_{3}} - \sin^{-1}(x_{4}) + \log(x_{3} + x_{5}) - \frac{x_{9}}{x_{10}}\sqrt{\frac{x_{7}}{x_{8}}} - x_{2}x_{7}
\end{aligned}$  \\\hline

 $F_2(\vx)$ & $\begin{aligned}[t] 
&  \pi^{x_1x_2}\sqrt{2|x_3|} -\sin^{-1}(0.5x_4) + \log(|x_3+x_5|+1) + \frac{x_9}{1+|x_{10}|}\sqrt{\frac{x_7}{1+|x_8|}}-x_2x_7
\end{aligned}$\\ \hline

 $F_3(\vx)$\ & $\begin{aligned}[t] 
&\exp|x_1-x_2| + |x_2x_3| - x_3^{2|x_4|} + \log(x_4^2+x_5^2+x_7^2+x_8^2)+x_9 + \frac{1}{1+x_{10}^2}
\end{aligned}$  \\\hline

$F_4(\vx)$\ & $\begin{aligned}[t] 
&\exp{|x_{1} - x_{2}|} + |x_{2}x_{3}| - x_{3}^{2|x_{4}|} + (x_{1}x_{4})^{2} + \log(x_{4}^{2} + x_{5}^{2} + x_{7}^{2} + x_{8}^{2}) + x_{9} + \frac{1}{1+x_{10}^{2}} 
\end{aligned}$  \\\hline

 $F_5(\vx)$ & $\begin{aligned}[t] 
& \frac{1}{1+x_{1}^{2}+x_{2}^{2}+x_{3}^{2}} + \sqrt{\exp(x_{4}+x_{5})} + |x_{6} + x_{7}|  + x_{8}x_{9}x_{10}
\end{aligned}$ \\ \hline
 $F_6(\vx)$ & $\begin{aligned}[t] 
&\exp{(|x_1x_2|+1)} -\exp(|x_3+x_4|+1) + \cos(x_5+x_6-x_8) + \sqrt{x_8^2+x_9^2+x_{10}^2} 
\end{aligned}$ \\ \hline
$F_7(\vx)$  & $\begin{aligned}[t] 
&(\arctan(x_1)+\arctan(x_2))^2 +\max(x_3x_4+x_6,0)-\frac{1}{1+(x_4x_5x_6x_7x_8)^2} + \left(\frac{|x_7|}{1+|x_9|}\right)^5 + \sum_{i=1}^{10}{x_i}
\end{aligned}$ \\ \hline
 $F_8(\vx)$ & $\begin{aligned}[t] 
&x_1x_2 + 2^{x_3+x_5+x_6} + 2^{x_3+x_4+x_5+x_7} + \sin(x_7\sin(x_8+x_9)) + \arccos(0.9x_{10})
\end{aligned}$  \\ \hline
 $F_{9}(\vx)$ & $\begin{aligned}[t] 
&\tanh(x_1x_2+x_3x_4)\sqrt{|x_5|}+\exp(x_5+x_6) + \log\left((x_6x_7x_8)^2+1\right) + x_9x_{10} + \frac{1}{1+|x_{10}|}
\end{aligned}$  \\ \hline
 $F_{10}(\vx)$ & $\begin{aligned}[t] 
&\sinh{(x_1+x_2)}+\arccos\left(\tanh(x_3+x_5+x_7)\right) + \cos(x_4+x_5) + \sec(x_7x_9)

\end{aligned}$  \\
\hline
\end{tabular}
}
\end{center}
\end{table}

\begin{wrapfigure}{R}{0.37\textwidth}
 \center
  \includegraphics[scale=0.45]{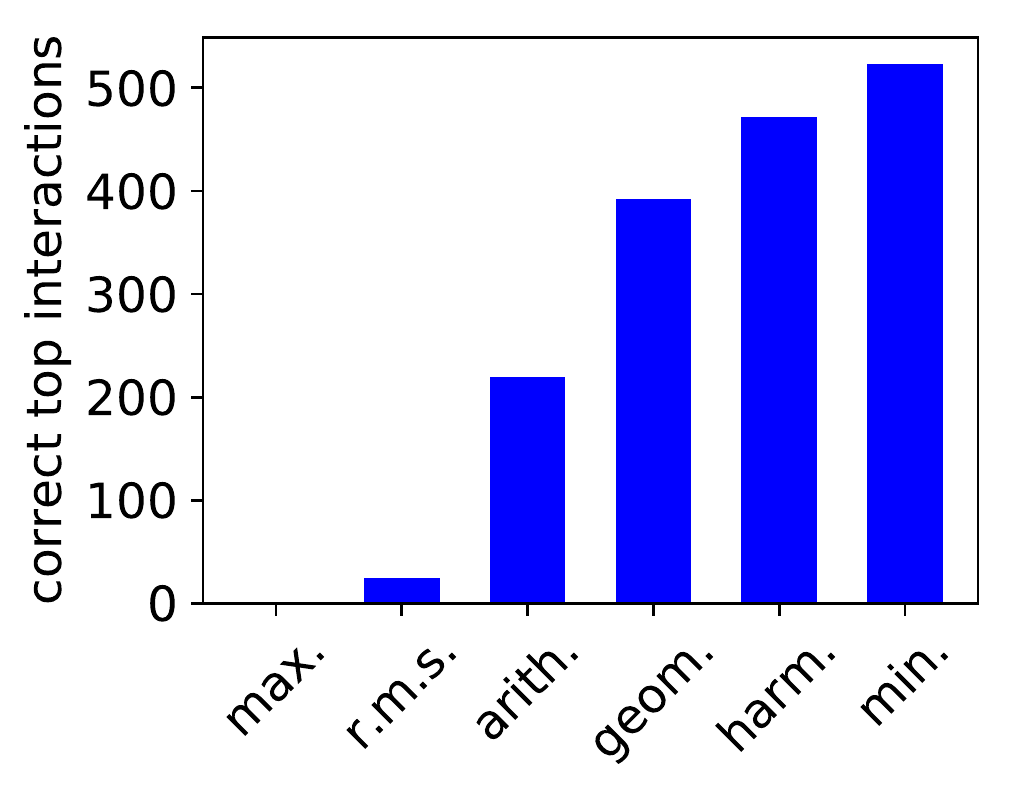}
  \caption{A comparison of averaging functions by the total number of correct interactions ranked before any false positives, evaluated on the test suite (Table \ref{table:testsuite}) over 10 trials. $x$-axis labels are maximum, root mean square, arithmetic mean, geometric mean, harmonic mean, and minimum. \label{fig:average}}
\end{wrapfigure}

\textbf{Averaging Function} 
Our proposed {\algName} framework relies on the selection of an averaging function (Sections~\ref{sec:interactionAtHidden},~\ref{sec:ranking}, and~\ref{sec:pairwise}). We experimentally determined the averaging function by comparing representative functions from the generalized mean family~\citep{bullen1988means}: maximum, root mean square, arithmetic mean, geometric mean, harmonic mean, and minimum, intuitions behind which were discussed in Section~\ref{sec:interactionAtHidden}. 
To make the comparison,
we used a test suite of 10 synthetic functions, which consist of a variety of interactions of varying order and overlap, as shown in Table ~\ref{table:testsuite}. 
We trained 10 trials of {\mlp} and {\mlpm} on each of the synthetic functions, obtained interaction rankings with our proposed greedy ranking algorithm (Algorithm~\ref{alg:rankingAlg}), and counted the total number of correct interactions ranked before any false positive. 
In this evaluation, we ignore predicted interactions that are subsets of true higher-order interactions because the subset interactions are redundant 
(Section~\ref{sec:relatedwork}).
As seen in Figure~\ref{fig:average}, the number of true top interactions we recover is highest with the averaging function, \emph{minimum}, which we will use in all of our experiments. A simple analytical study on a bivariate hidden unit also suggests that the minimum is closely correlated with interaction strength (Appendix \ref{thm:minWeight}).

\textbf{Neural Network Configuration} 
We trained feedforward networks of {\mlp} and {\mlpm} architectures to obtain interaction rankings, and we trained {\mlpc} to find cutoffs on the rankings. In our experiments, all networks that model feature interactions consisted of four hidden layers with first-to-last layer sizes of: 140, 100, 60, and 20 units. In contrast, all individual univariate networks had three hidden layers with sizes of: 10, 10, and 10 units. 
All networks used ReLU activation and were trained using backpropagation. In the cases of {\mlpm} and {\mlpc}, summed networks were trained jointly. The objective functions were mean-squared error for regression and 
 cross-entropy for classification tasks. On the synthetic test suite, {\mlp} and {\mlpm} were trained with L1 constants in the range of 5e-6 to 5e-4, based on parameter tuning on a validation set. On real-world datasets, L1 was fixed at 5e-5. {\mlpc} used a fixed L2 constant of 1e-4 in all experiments involving cutoff. Early stopping was used to prevent overfitting. 

\textbf{Datasets} We study our interaction detection framework on both simulated and real-world experiments. For simulated experiments, we used a test suite of synthetic functions, as shown in Table \ref{table:testsuite}. The test functions were designed to have a mixture of pairwise and higher-order interactions, with varying order, strength, nonlinearity, and overlap. $F_1$ is a commonly used function in interaction detection literature~\citep{hooker2004discovering,sorokina2008detecting, lou2013accurate}.  
All features were uniformly distributed between $-1$ and $1$ except in $F_1$, where we used the same variable ranges as reported in literature~\citep{hooker2004discovering}. In all synthetic experiments, we used random train/valid/test splits of $1/3$ each on 30k data points.

We use four real-world datasets, of which two are regression datasets, and the other two are binary classification datasets. The datasets are a mixture of common prediction tasks in the cal housing and bike sharing datasets, a scientific discovery task in the higgs boson dataset, and an example of very-high order interaction detection in the letter dataset. 
Specifically, the cal housing dataset is a regression dataset with 21k data points for predicting California housing prices ~\citep{pace1997sparse}. The bike sharing dataset contains 17k data points of weather and seasonal information to predict the hourly count of rental bikes in a bikeshare system~\citep{fanaee2014event}. The higgs boson dataset has 800k data points for classifying whether a particle environment originates from the decay of a Higgs Boson~\citep{adam2014learning}. Lastly, the letter recognition dataset contains 20k data points of transformed features for binary classification of letters on a pixel display~\citep{frey1991letter}. For all real-world data, we used random train/valid/test splits of $80/10/10$.

\textbf{Baselines}
We compare the performance of {\algName} to that of three baseline interaction detection methods. 
 Two-Way \emph{ANOVA}~\citep{wonnacott1972introductory} utilizes linear models to 
 conduct significance tests on the existence of interaction terms. 
\emph{Hierarchical lasso} (HierLasso)~\citep{bien2013lasso} applies lasso feature selection to extract pairwise interactions.
\emph{RuleFit}~\citep{friedman2008predictive} contains a statistic to measure pairwise interaction strength using partial dependence functions.  
\emph{Additive Groves} (AG) ~\citep{sorokina2008detecting} is a nonparameteric means of testing for interactions by placing structural constraints on an additive model of regression trees. 
\emph{AG} is a reference method for interaction detection because it directly detects interactions based on their non-additive definition. 

\begin{table}[t]
\begin{center}
\caption{AUC of pairwise interaction strengths proposed by {\algName} and baselines on a test suite of synthetic functions (Table \ref{table:testsuite}). \emph{ANOVA}, \emph{HierLasso}, and \emph{RuleFit} are deterministic.\label{table:auc}}
\resizebox{\columnwidth}{!}{%
\begin{tabular}{l|cccc|cc}
\hline \\
 & ANOVA & HierLasso & RuleFit& AG & {\algName}, {\mlp} & {\algName}, {\mlpm}\\
\hline
 $F_1(\vx)$ & $0.992$ & $1.00$ & $0.754$ & $1\pm0.0$ & $0.970\pm9.2\mathrm{e}{-3}$ & $0.995\pm4.4\mathrm{e}{-3}$  \\
 $F_2(\vx)$   & $0.468$ & $0.636$ & $0.698$ & $0.88\pm1.4\mathrm{e}{-2}$ & $0.79\pm3.1\mathrm{e}{-2}$   & $0.85\pm3.9\mathrm{e}{-2}$  \\
 $F_3(\vx)$   & $0.657$ & $0.556$ & $0.815$ & $1\pm0.0$  & $0.999\pm2.0\mathrm{e}{-3}$ & $1\pm0.0$\\

 $F_4(\vx)$   & $0.563$ & $0.634$  & $0.689$ & $0.999\pm1.4\mathrm{e}{-3}$ & $0.85\pm6.7\mathrm{e}{-2}$& $0.996\pm4.7\mathrm{e}{-3}$\\
 
 $F_5(\vx)$   & $0.544$ & $0.625$ & $0.797$ & $0.67\pm5.7\mathrm{e}{-2}$ & $1\pm0.0$ & $1\pm0.0$\\	
 $F_6(\vx)$   & $0.780$ & $0.730$ & $0.811$ & $0.64\pm1.4\mathrm{e}{-2}$ & $0.98\pm6.7\mathrm{e}{-2}$ & $0.70\pm4.8\mathrm{e}{-2}$\\
 $F_7(\vx)$   & $0.726$ & $0.571$ & $0.666$ & $0.81\pm4.9\mathrm{e}{-2}$ & $0.84\pm1.7\mathrm{e}{-2}$ & $0.82\pm2.2\mathrm{e}{-2}$\\
 $F_8(\vx)$   & $0.929$ & $0.958$ & $0.946$ & $0.937\pm1.4\mathrm{e}{-3}$ & $0.989\pm4.4\mathrm{e}{-3}$ & $0.989\pm4.5\mathrm{e}{-3}$\\
 $F_9(\vx)$ & $0.783$ & $0.681$ & $0.584$ & $0.808\pm5.7\mathrm{e}{-3}$ & $0.83\pm5.3\mathrm{e}{-2}$  & $0.83\pm3.7\mathrm{e}{-2}$ \\
 $F_{10}(\vx)$ & $0.765$ & $0.583$ & $0.876$ & $1\pm0.0$ & $0.995\pm9.5\mathrm{e}{-3}$ & $0.99\pm2.1\mathrm{e}{-2}$\\
 \hline
 average & $0.721$ & $0.698$ & $0.764$ & $0.87\pm1.4\mathrm{e}{-2}$ & $\mathbf{0.92}\text{\textbf{*}}\pm 2.3\mathrm{e}{-2}$ & $\mathbf{0.92}\pm1.8\mathrm{e}{-2}$\\
\hline
\multicolumn{7}{r}{
  \begin{minipage}{6.3cm}
    \tiny *Note: The high average AUC of {\algName}, {\mlp} is heavily influenced by $F_6$.
  \end{minipage}
}\\
\end{tabular}
}
\end{center}
\vspace{-0.2in}
\end{table}

\vspace{-0.1in}
\subsection{Pairwise Interaction Detection}
\vspace{-0.05in}

\label{sec:pairwise_experiments}

As discussed in Section \ref{sec:detection}, our framework {\algName} can be used for pairwise interaction detection. To evaluate this approach, we used datasets generated by synthetic functions $F_1$-$F_{10}$ (Table \ref{table:testsuite}) that contain a mixture of pairwise and higher-order interactions, where in the case of higher-order interactions we tested for their  pairwise subsets as in \citet{sorokina2008detecting,lou2013accurate}. 
AUC scores of interaction strength proposed by baseline methods and  {\algName} for both {\mlp} and {\mlpm} are shown in Table \ref{table:auc}. 
We ran ten trials of \emph{AG} and {\algName} 
on each dataset 
and removed two trials with highest and lowest AUC scores.

When comparing the AUCs of {\algName} applied to {\mlp} and {\mlpm}, we observe that the scores of {\mlpm} tend to be comparable or better, except the AUC for $F_6$. On one hand, {\mlpm} performed better on $F_2$ and $F_4$ because these functions contain main effects that {\mlp} would model as spurious interactions with other variables. On the other hand, {\mlpm} performed worse on $F_6$ because it modeled \emph{spurious main effects} in the $\{8,9,10\}$ interaction. Specifically, $\{8,9,10\}$ can be approximated as independent parabolas for each variable (shown in Appendix \ref{apd:spurious}). In our analyses of {\algName}, we mostly focus on {\mlpm} because handling main effects is widely considered an important problem in interaction detection~\citep{bien2013lasso,lim2015learning,kong2017interaction}. 
Comparing the AUCs of \emph{AG} and {\algName} for {\mlpm}, the scores tend to close, except for $F_5$, $F_6$, and $F_8$, where {\algName} performs significantly better than \emph{AG}. This performance difference may be due to limitations on the model capacity of \emph{AG}, which is tree-based. 
In comparison to \emph{ANOVA}, \emph{HierLasso} and \emph{RuleFit}, {\algName}-{\mlpm} generally performs on par or better. This is expected for \emph{ANOVA} and \emph{HierLasso} because they are based on quadratic models, which can have difficulty approximating  the interaction nonlinearities present in the test suite. 

In Figure \ref{fig:synth_pairwise}, heat maps of synthetic functions show the relative strengths of all possible pairwise interactions as interpreted from {\mlpm}, and ground truth is indicated by red cross-marks. The interaction strengths shown are normally high at the cross-marks. An exception is $F_6$, where {\algName} proposes weak or negligible interaction strengths at the cross-marks corresponding to the $\{8,9,10\}$ interaction, which is consistent with previous remarks about this interaction.
Besides $F_6$, $F_7$ also shows erroneous interaction strengths; however, comparative detection performance by the baselines is similarly poor. 
Interaction strengths are also visualized on real-world datasets via heat maps (Figure \ref{fig:real_pairwise}). For example, in the cal housing dataset, there is a high-strength interaction between $x_1$ and $x_2$. These variables mean longitude and latitude respectively, and it is clear to see that the outcome variable, California housing price, should indeed strongly depend on geographical location. We further observe high-strength interactions appearing in the heat maps of the bike sharing, higgs boson dataset, and letter datasets. For example, all feature pairs appear to be interacting in the letter dataset. The binary classification task from the letter dataset is to distinguish letters A-M from N-Z using 16 pixel display features. Since the decision boundary between A-M and N-Z is not obvious, it would make sense that a neural network learns a highly interacting function to make the distinction.

\begin{figure*}[t]
\begin{center}
\setlength\tabcolsep{0.5pt} 
\begin{tabular}{ccccc}
\includegraphics[scale=0.2]{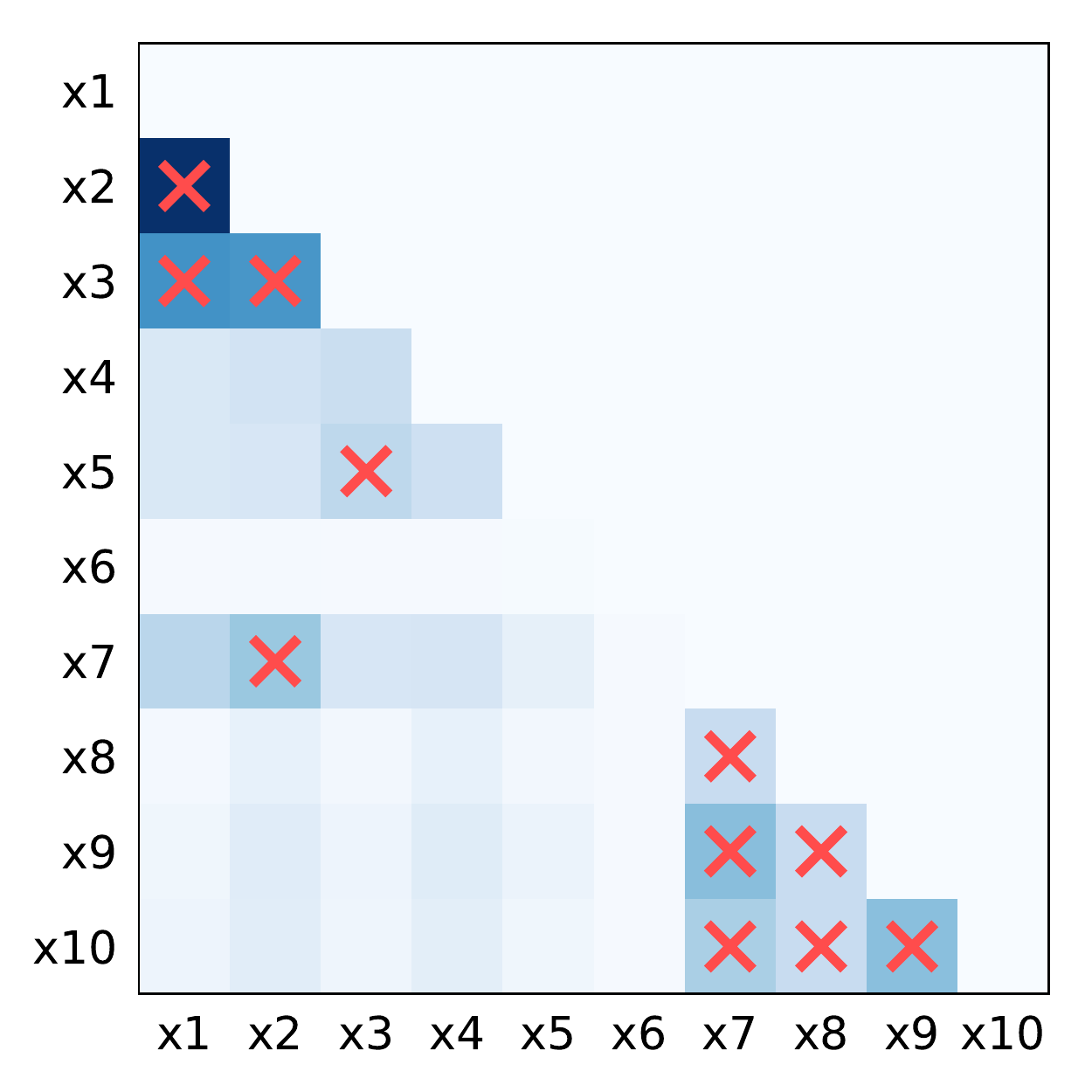}&
\includegraphics[scale=0.2]{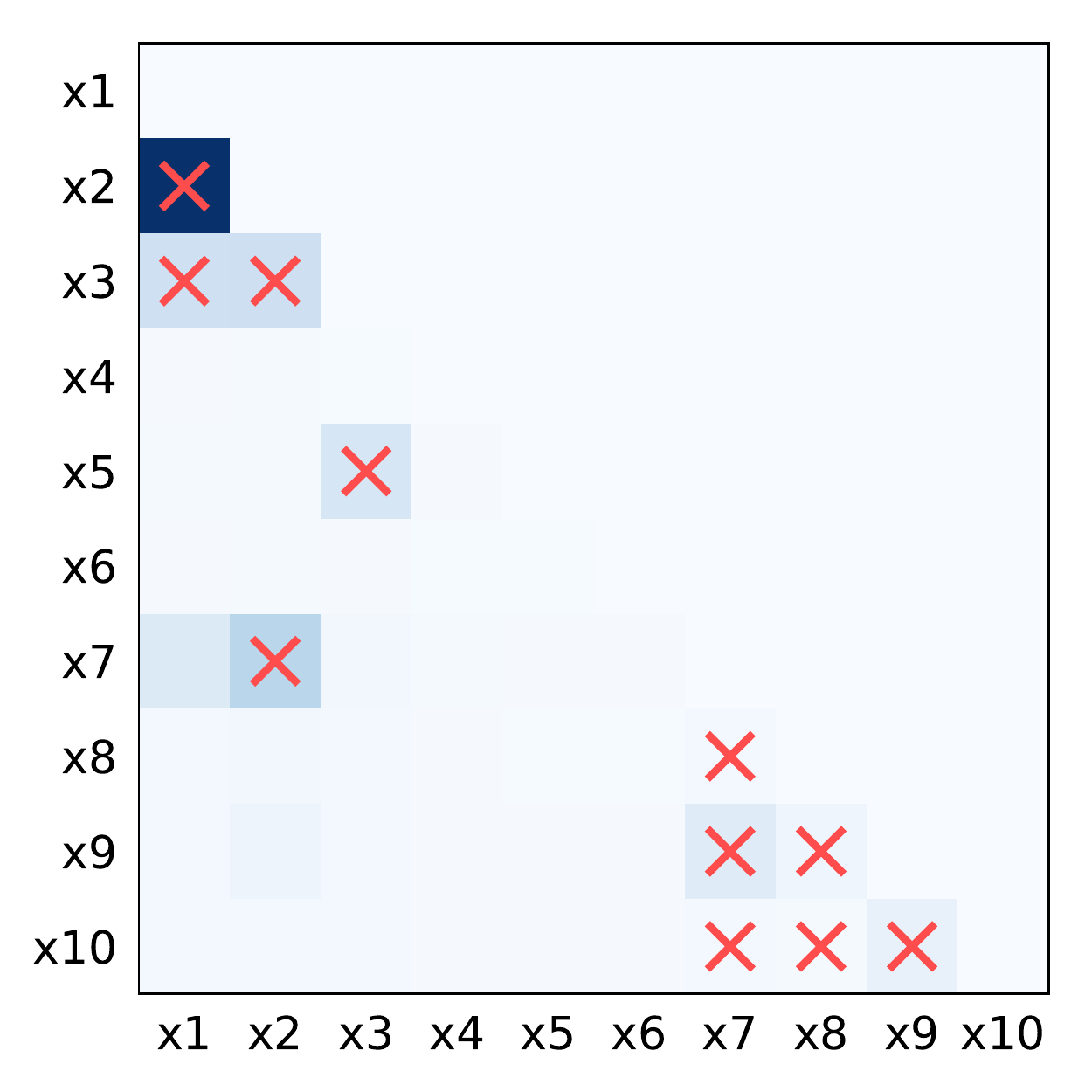}&
\includegraphics[scale=0.2]{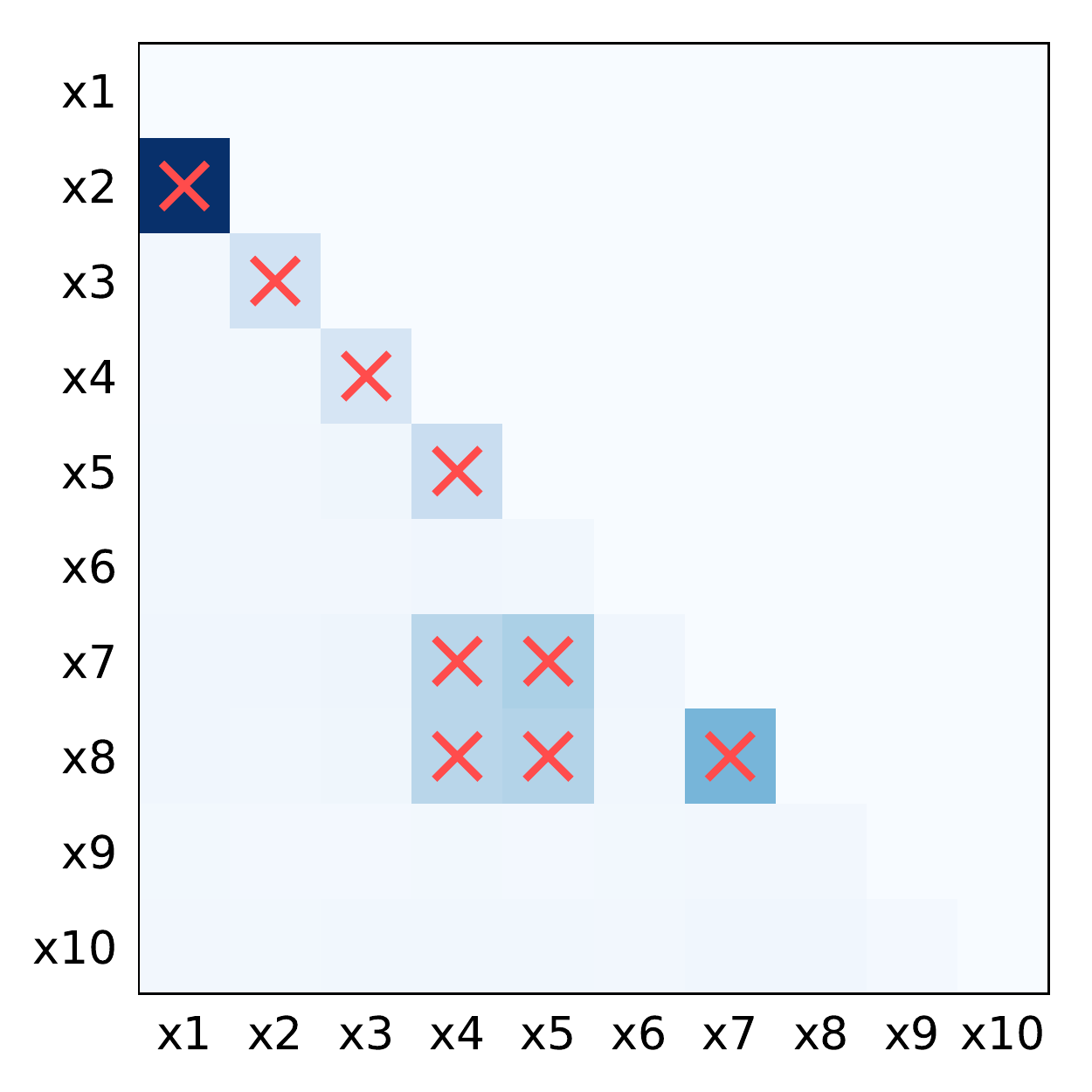}&
\includegraphics[scale=0.2]{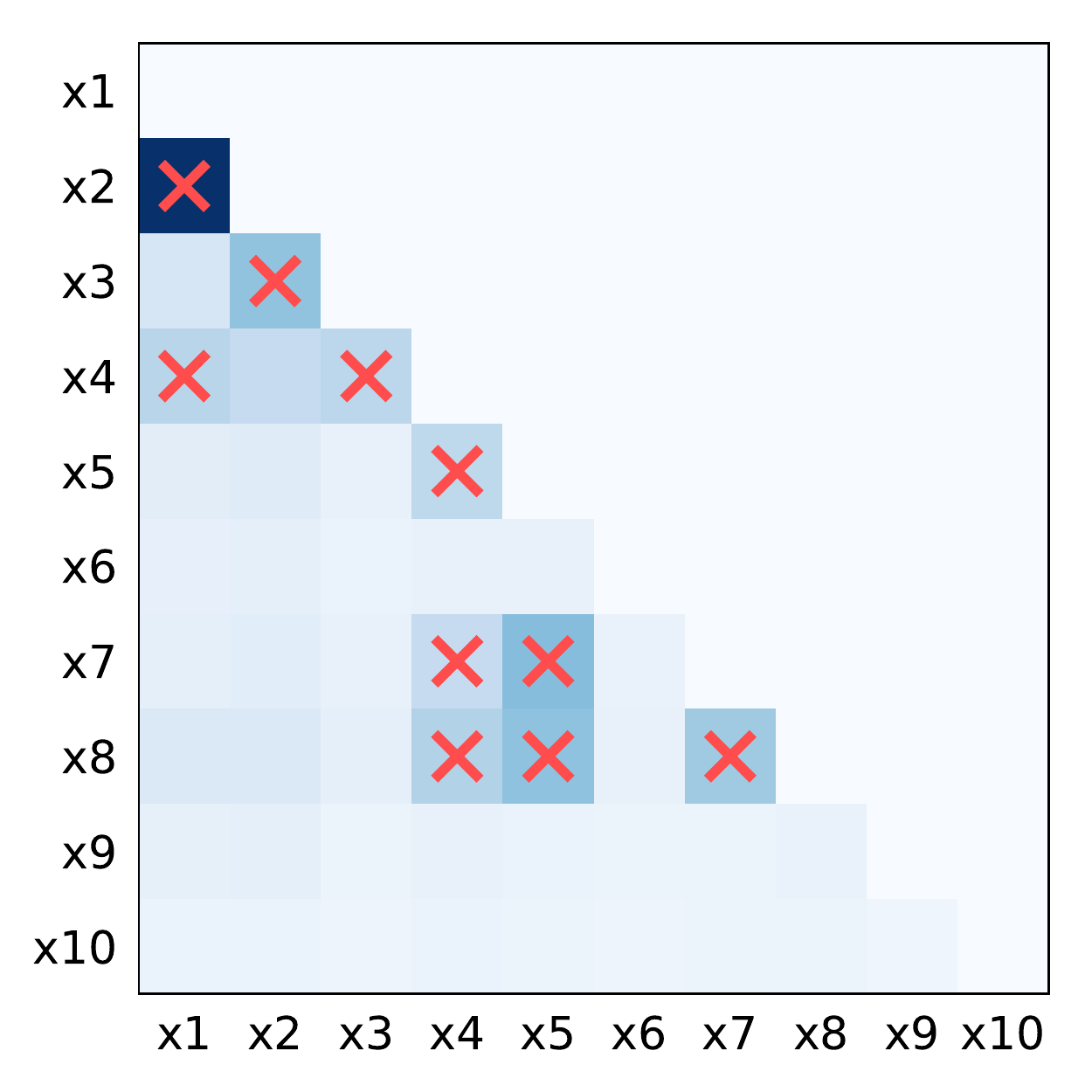}&
\includegraphics[scale=0.2]{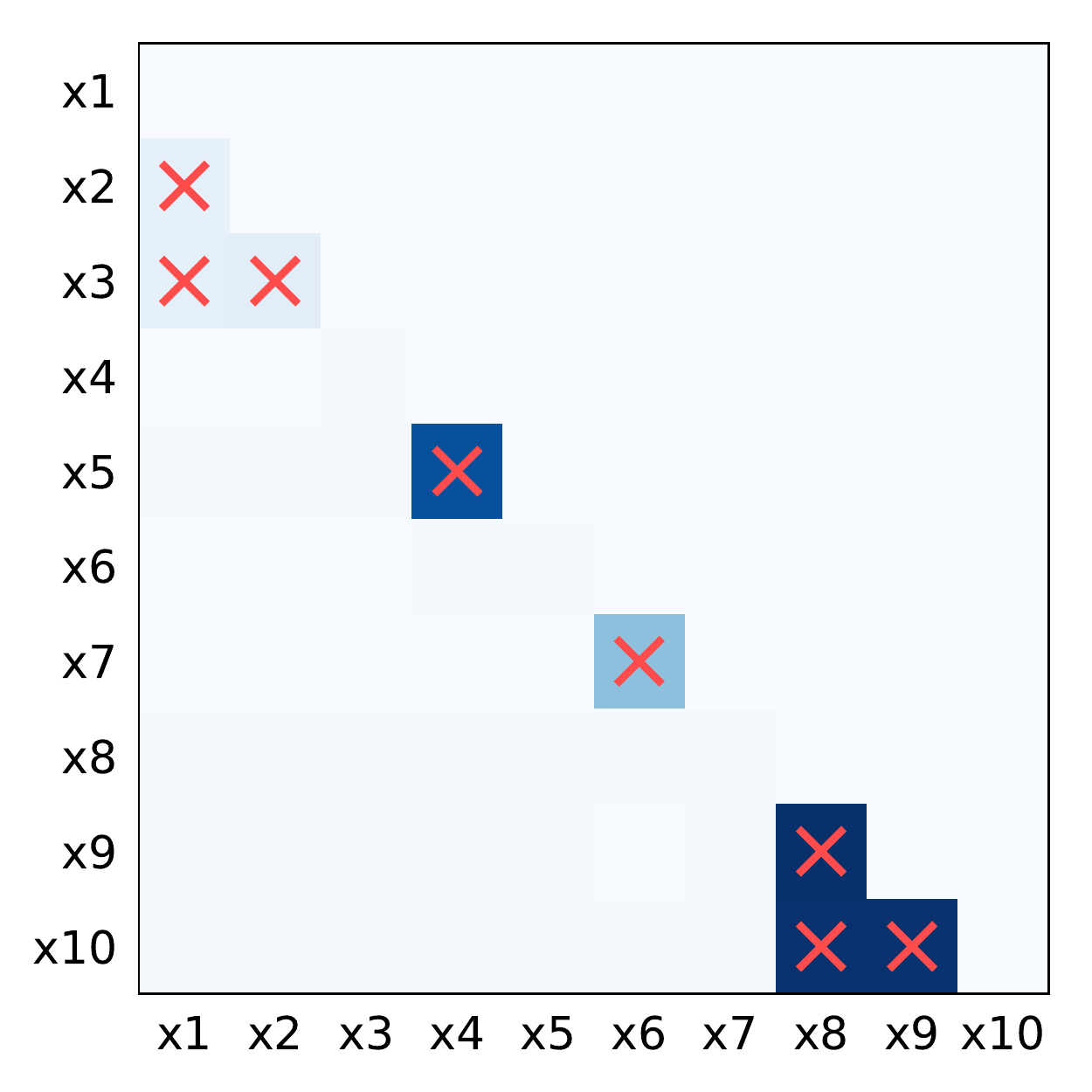}\\

$F_{1}$ & $F_{2}$ & $F_{3}$ & $F_{4}$ & $F_{5}$  \\
\includegraphics[scale=0.2]{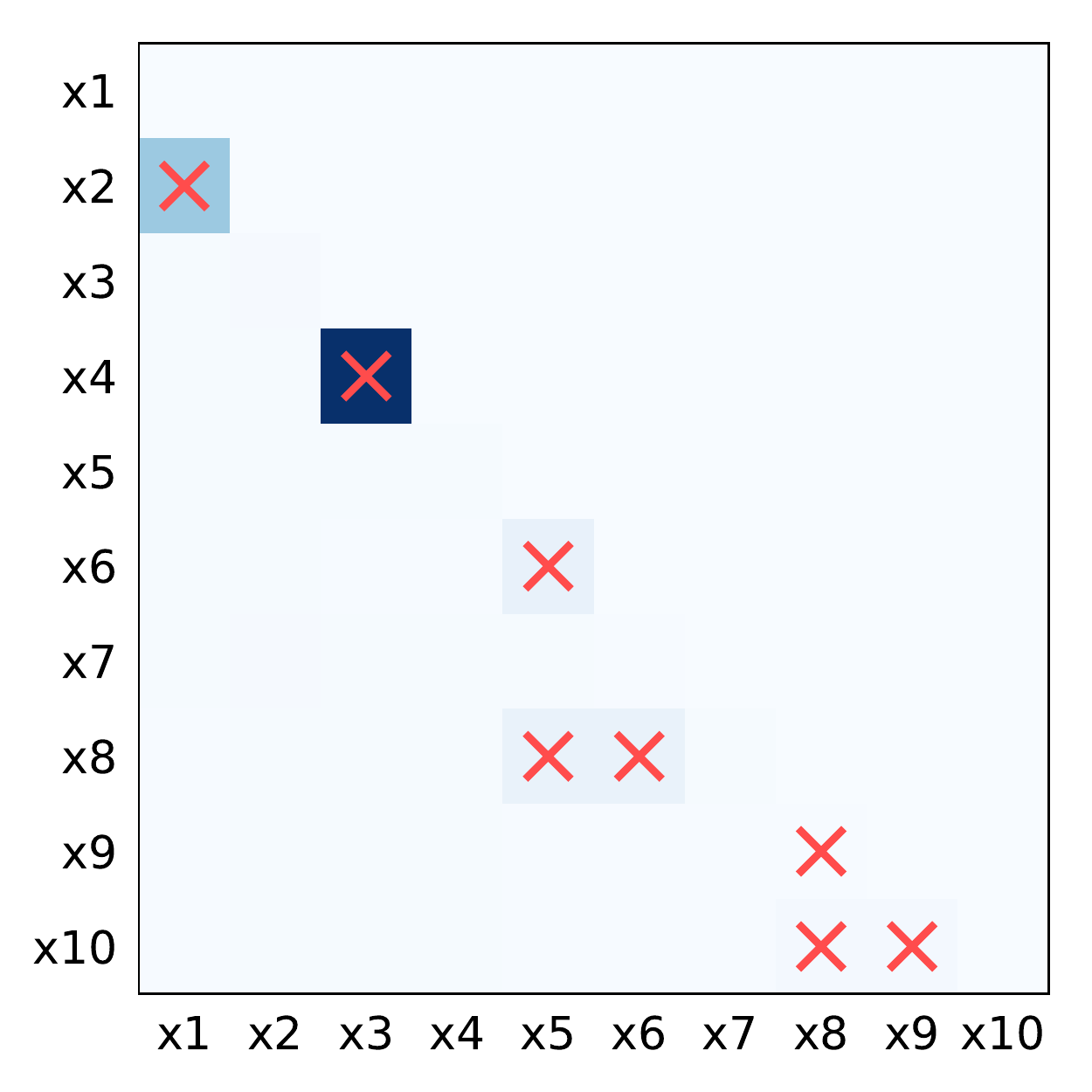}&
\includegraphics[scale=0.2]{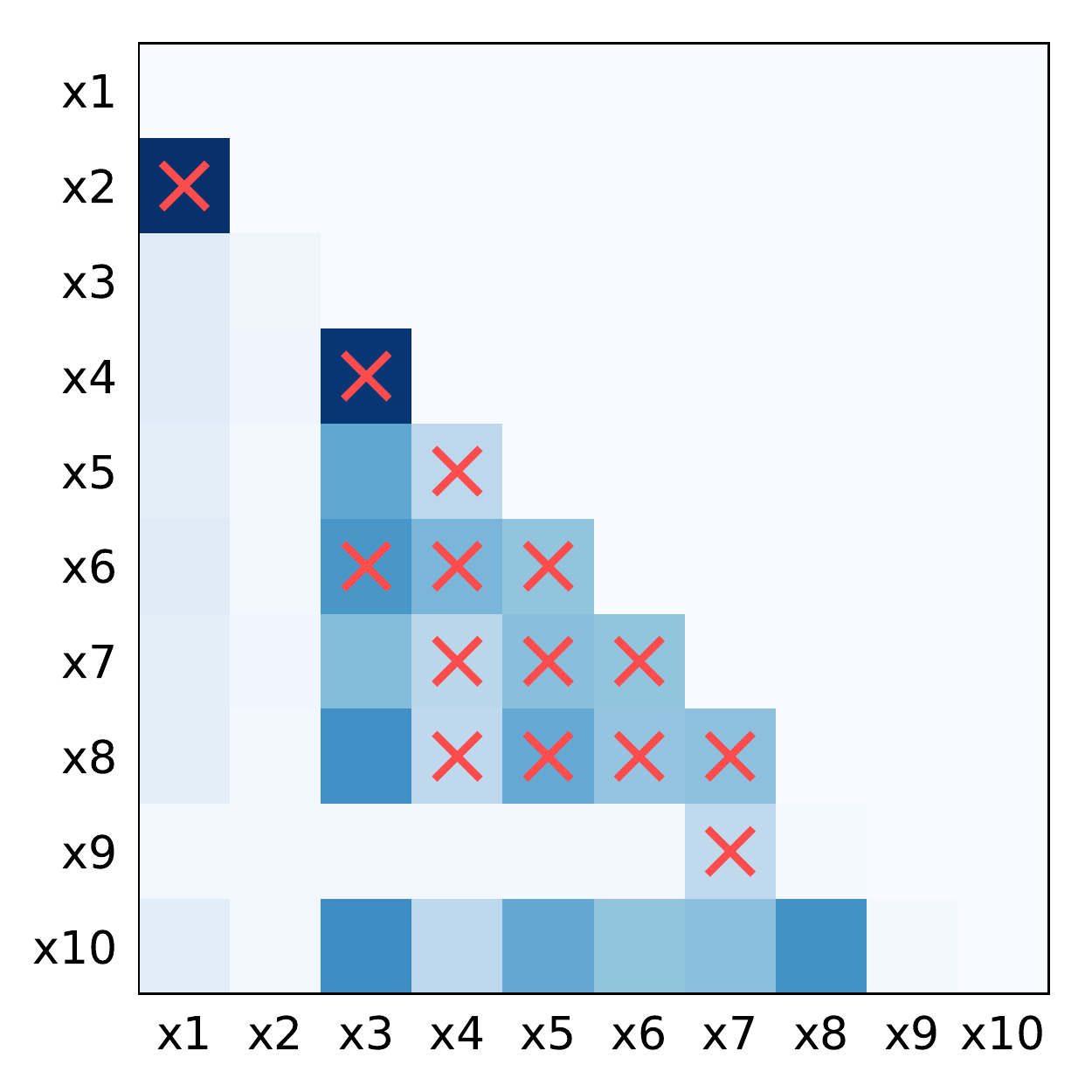}&
\includegraphics[scale=0.2]{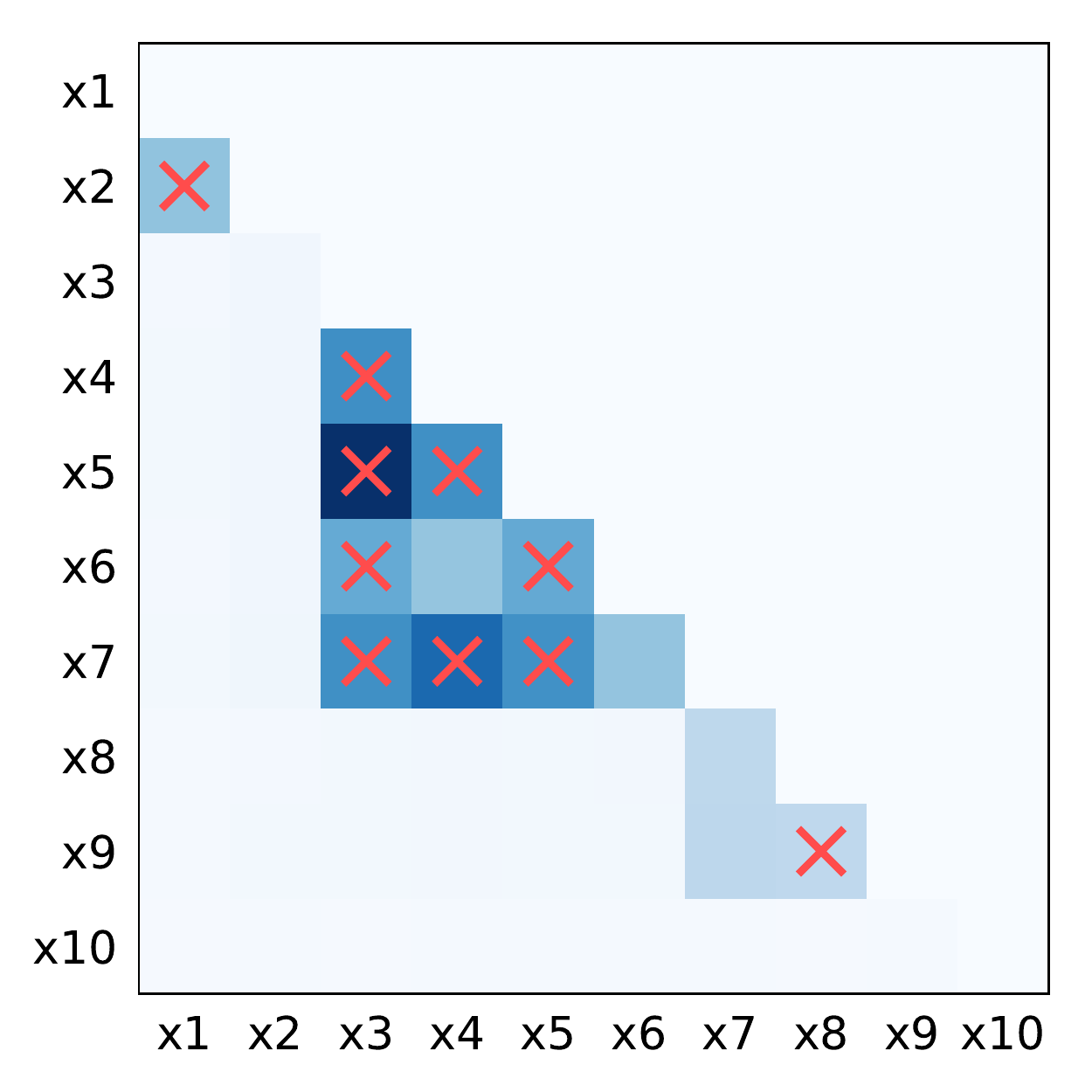}&
\includegraphics[scale=0.2]{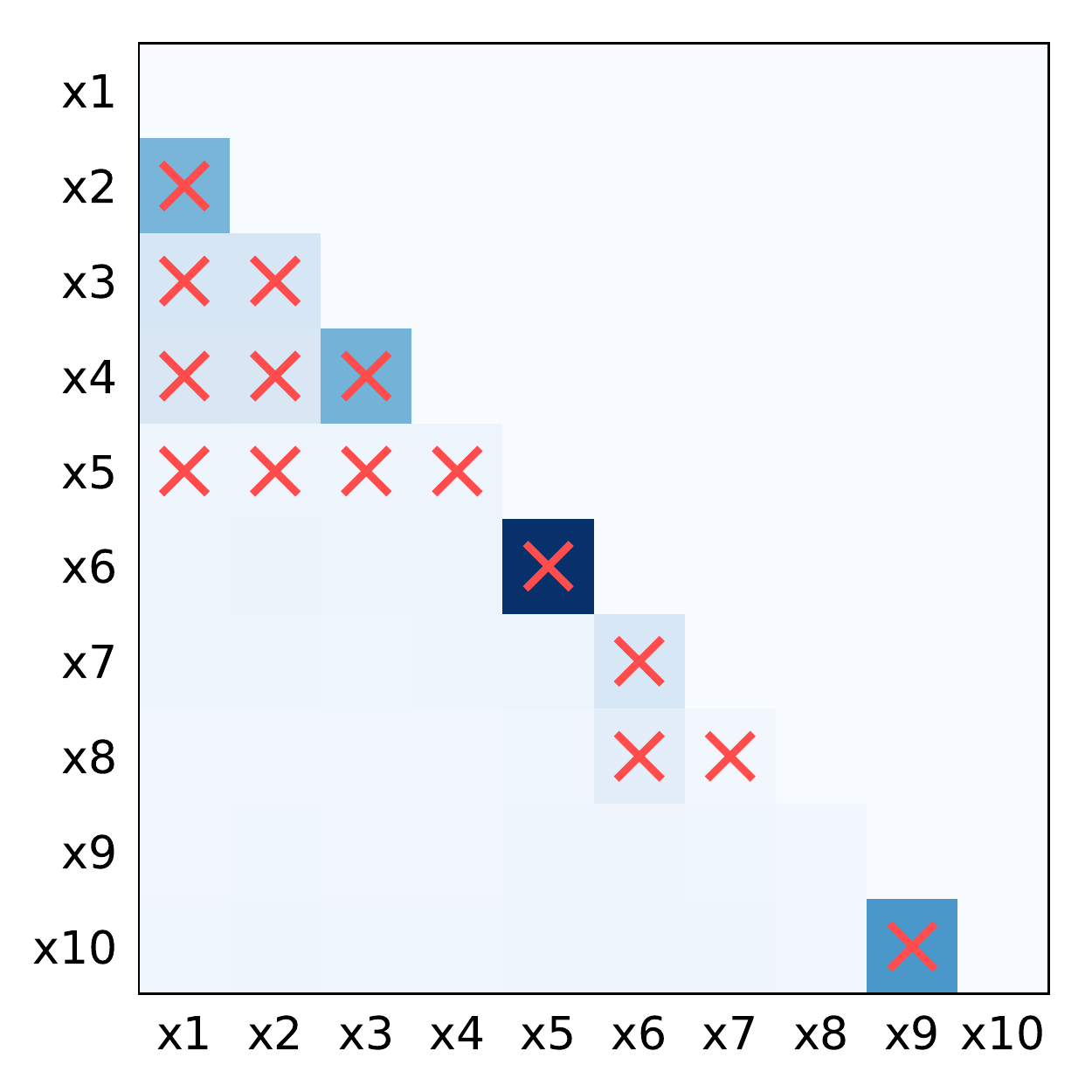}&
\includegraphics[scale=0.2]{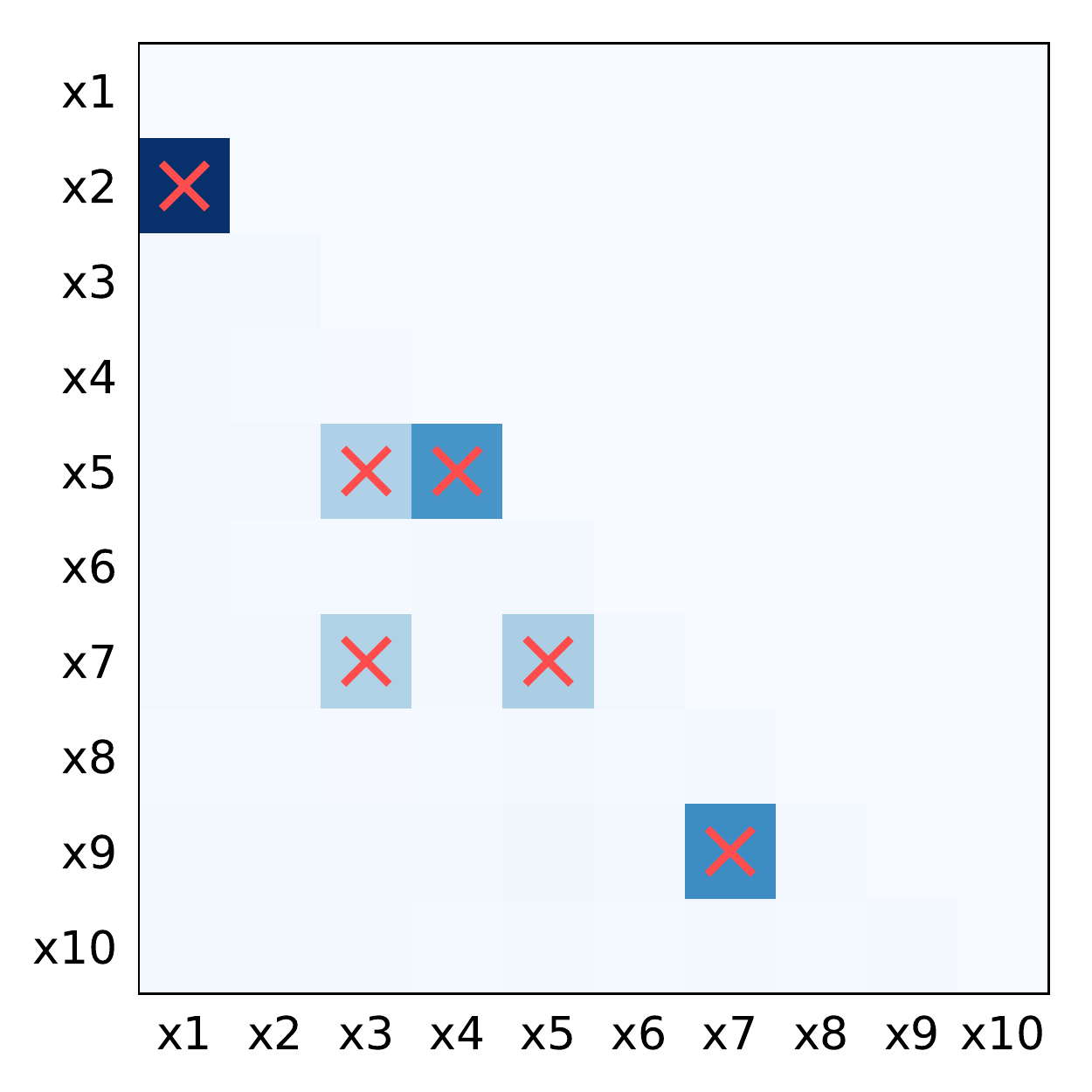}
\\
$F_{6}$ & $F_{7}$ & $F_{8}$ & $F_{9}$ & $F_{10}$  \\
\end{tabular}
\end{center}
\vspace{-0.1in}
\caption{
Heat maps of pairwise interaction strengths proposed by our {\algName} framework on {\mlpm} for datasets generated by functions $F_1$-$F_{10}$ (Table~\ref{table:testsuite}). Cross-marks indicate ground truth interactions.\label{fig:synth_pairwise}\vspace{-0.1in}
}
\end{figure*}

\begin{figure*}[t]
\begin{center}
\setlength\tabcolsep{1.5pt} 
\begin{tabular}{cccc}
\includegraphics[scale=0.24]{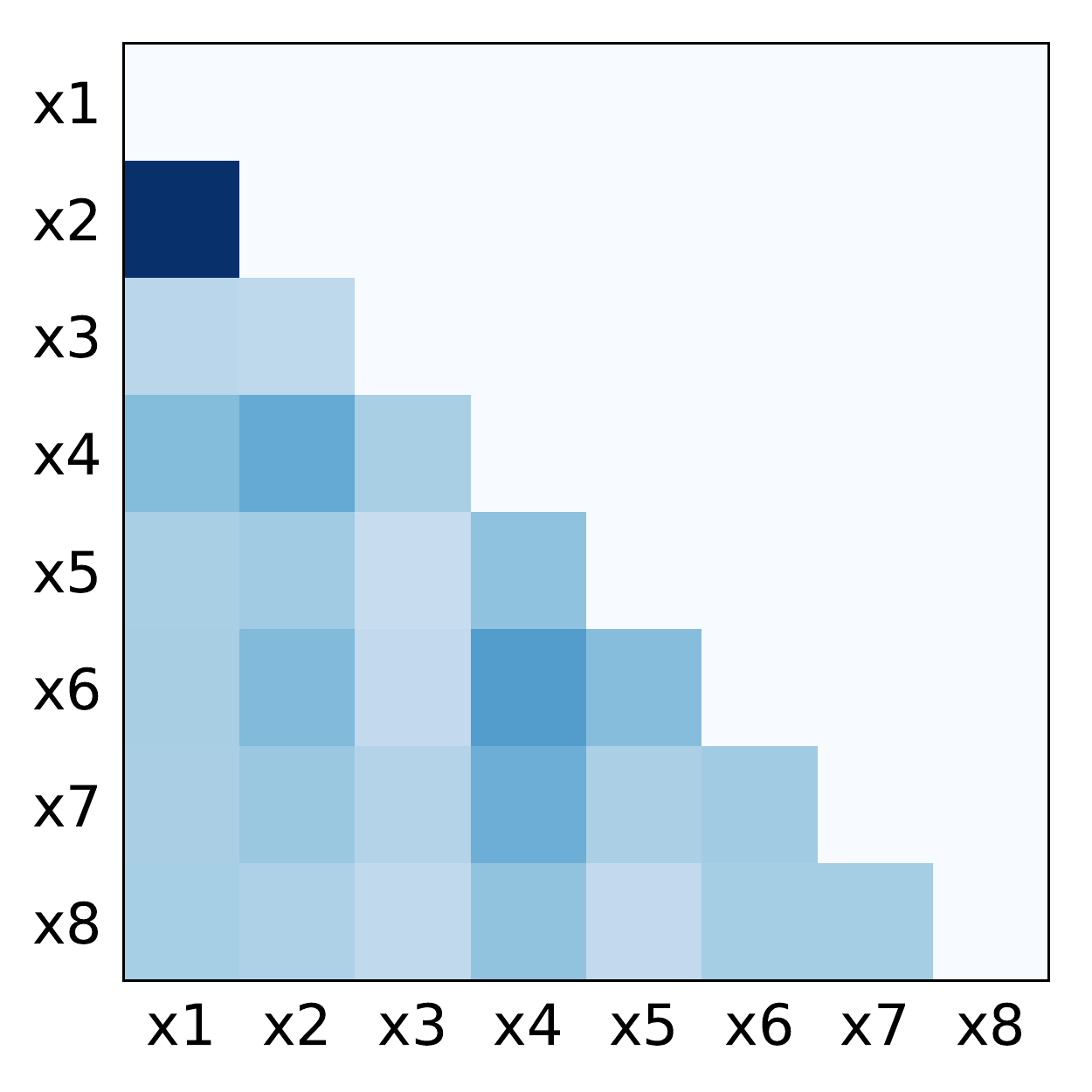}&
\includegraphics[scale=0.24]{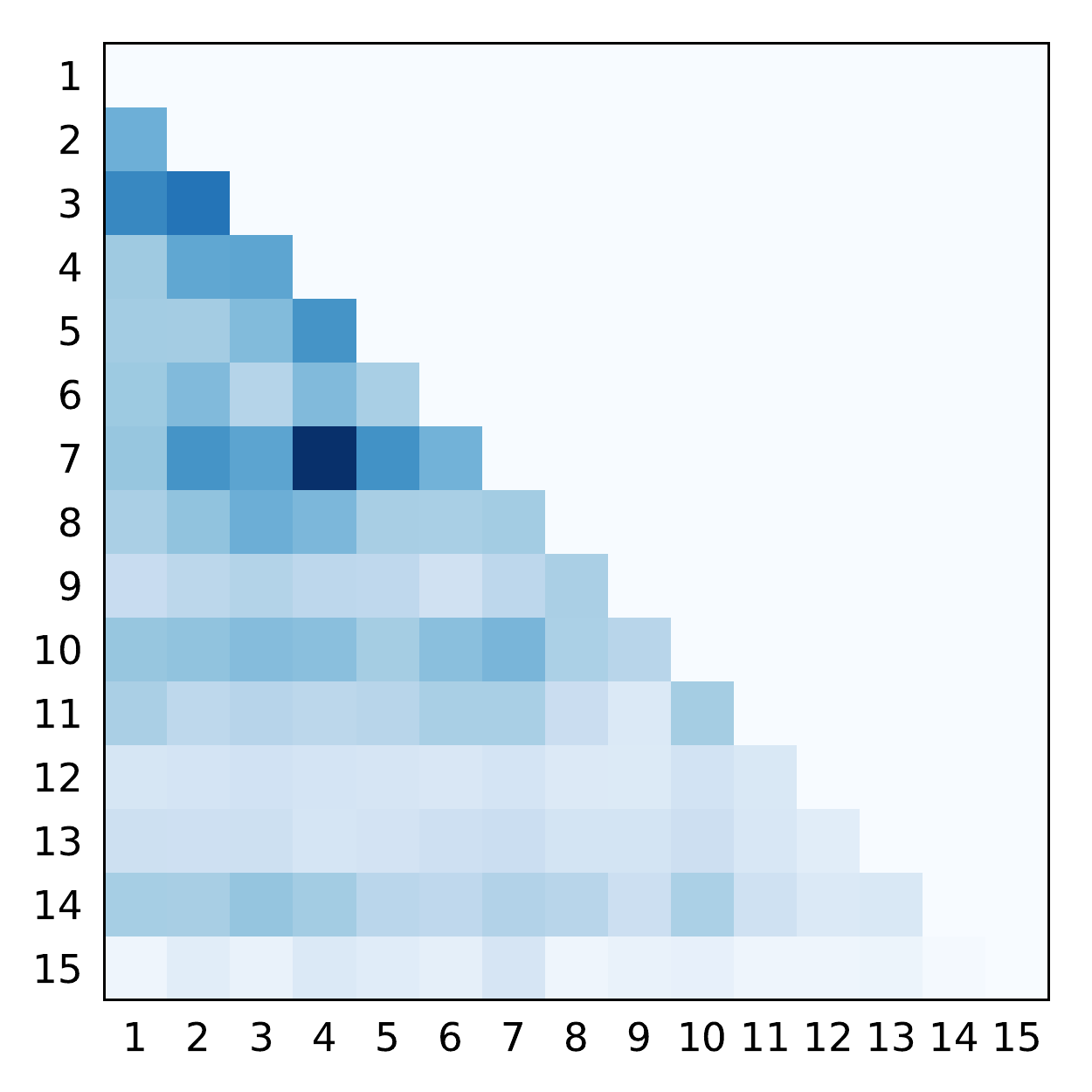}&
\includegraphics[scale=0.24]{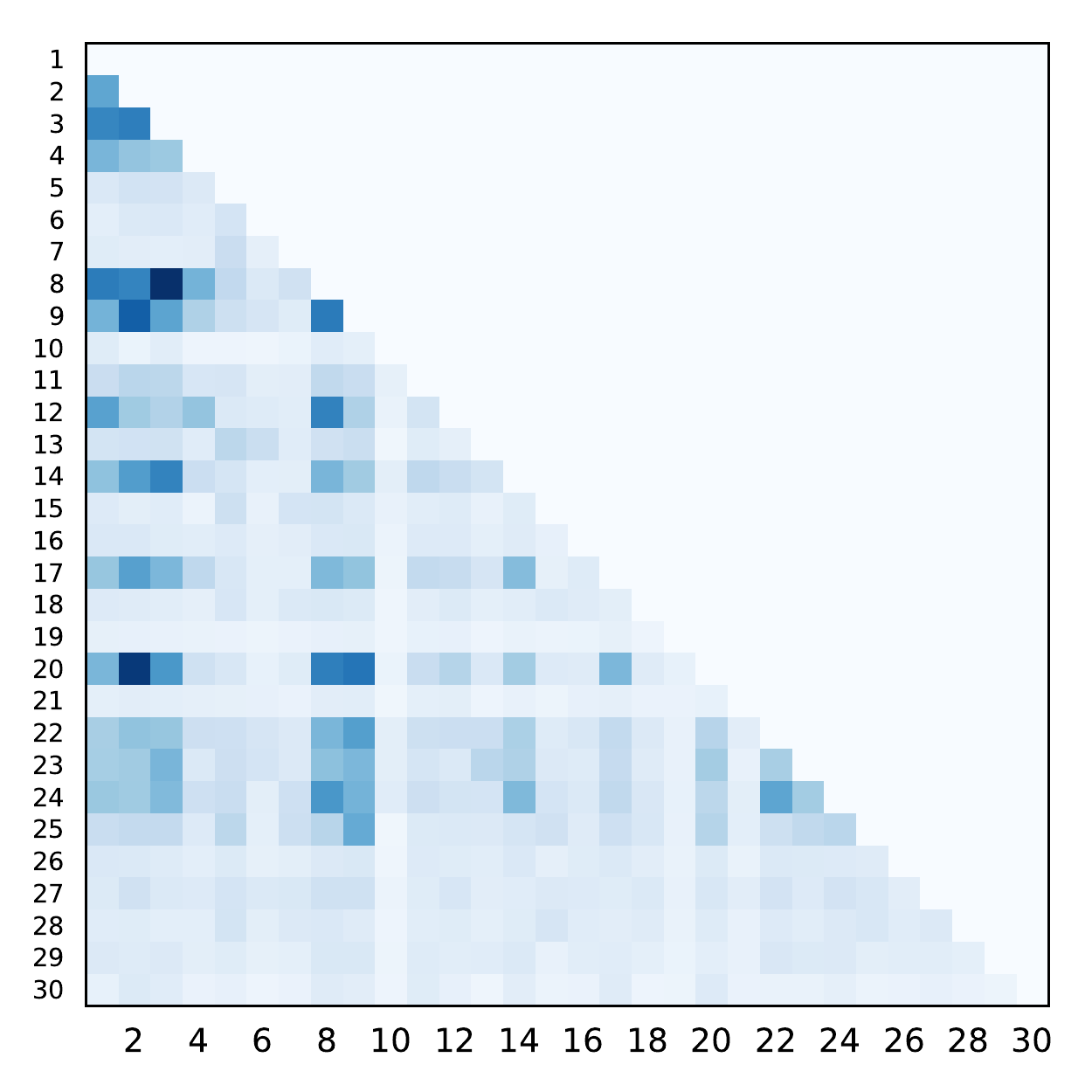}&
\includegraphics[scale=0.24]{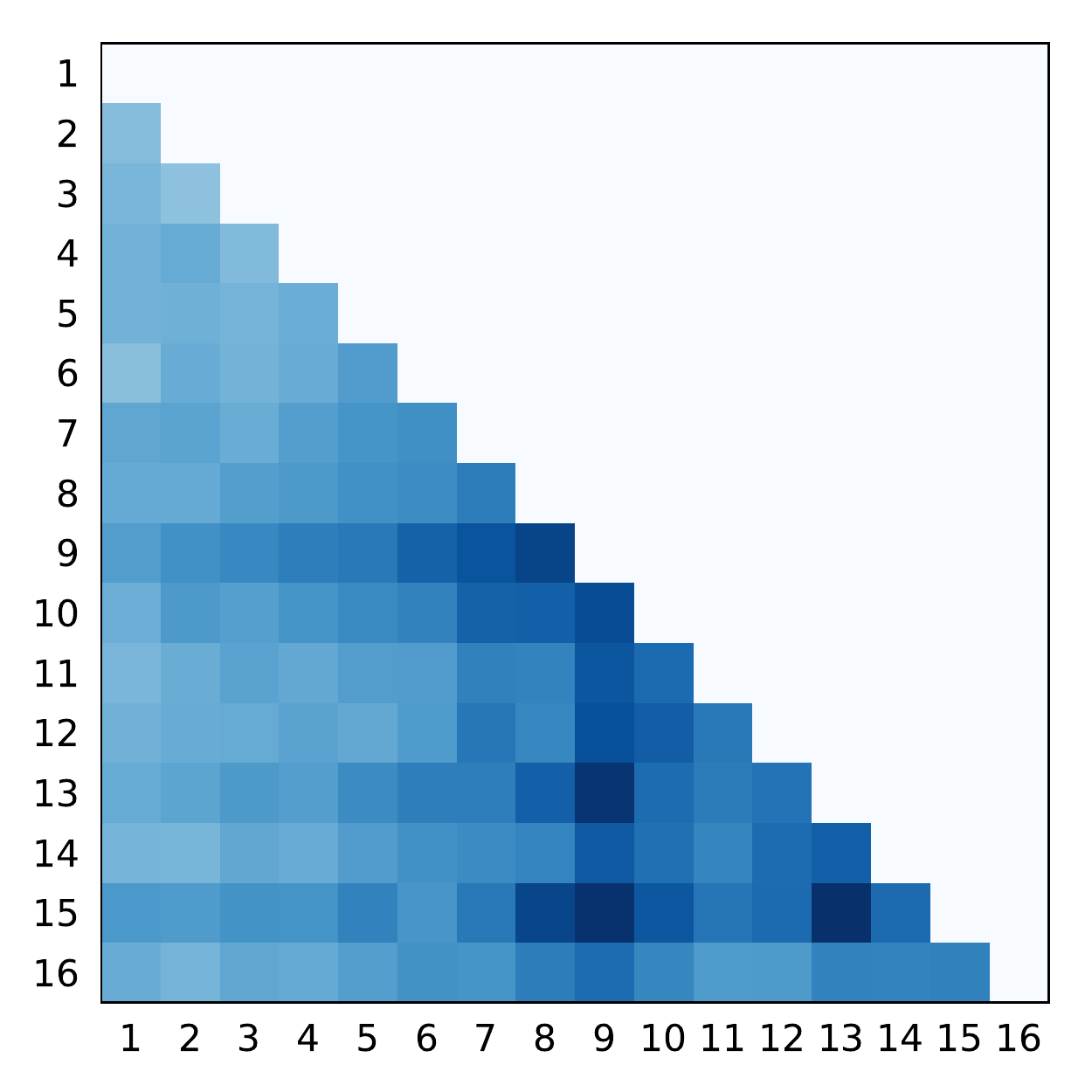}\\
cal housing & bike sharing & higgs boson &  letter \\
\end{tabular}
\end{center}
\vspace{-0.1in}
\caption{
Heat maps of pairwise interaction strengths proposed by our {\algName} framework on  {\mlpm} for real-world datasets.\label{fig:real_pairwise}
\vspace{-0.15in}
}
\end{figure*}

\vspace{-0.1in}
\subsection{Higher-Order Interaction Detection}
\vspace{-0.05in}

\begin{figure*}[t]
\begin{center}
\setlength\tabcolsep{0.5pt} 
\begin{tabular}{ccccc}
\includegraphics[scale=0.2]{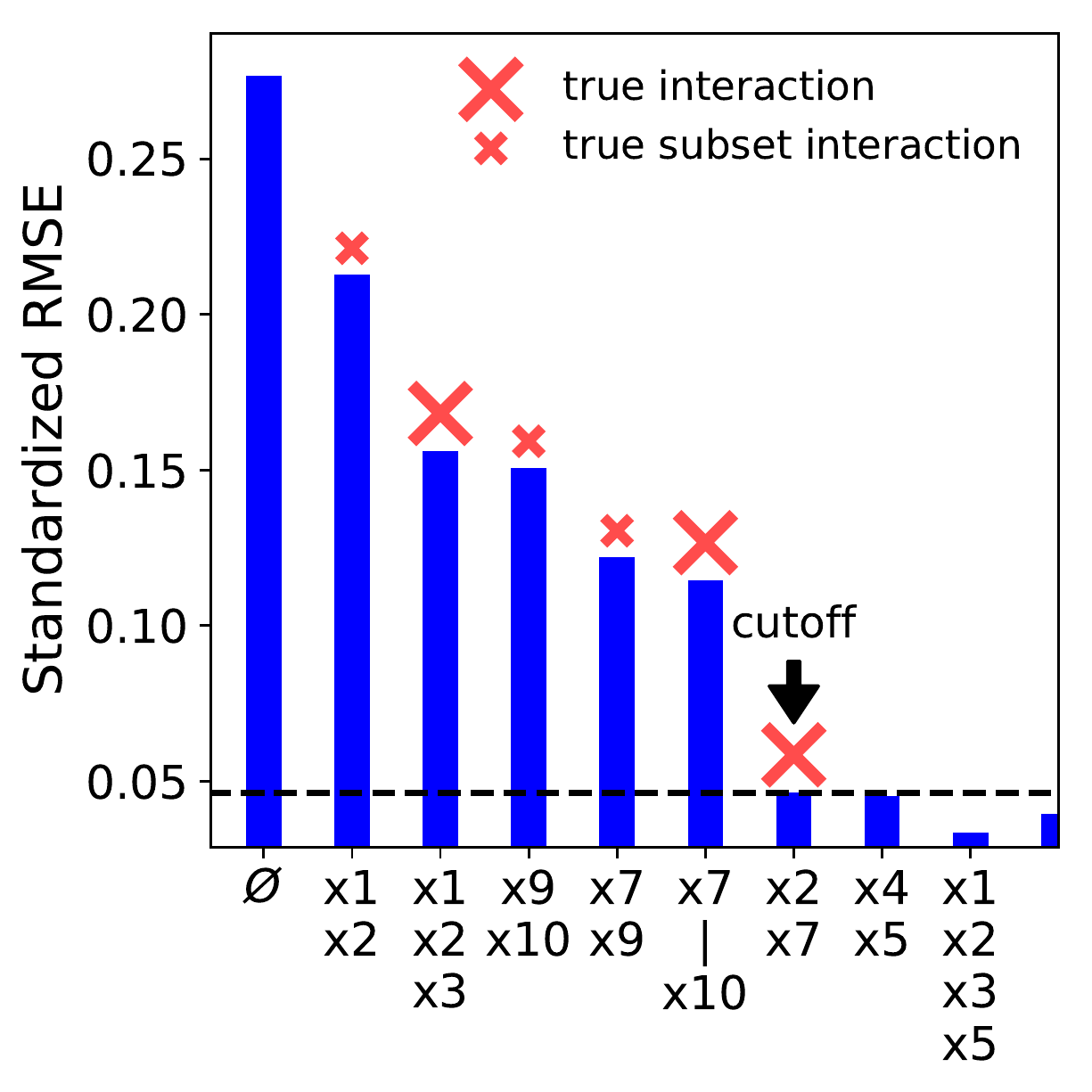}&
\includegraphics[scale=0.2]{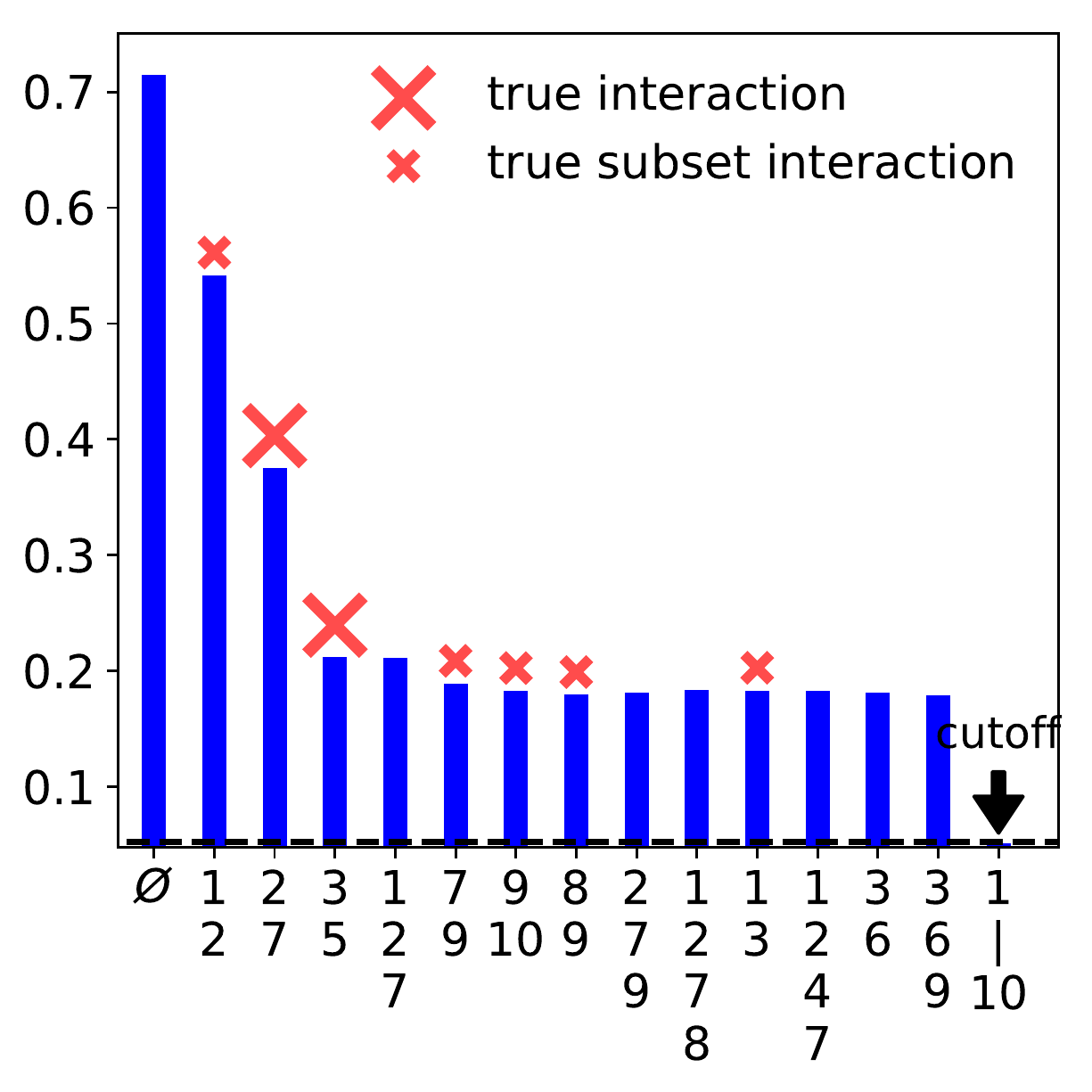}&
\includegraphics[scale=0.2]{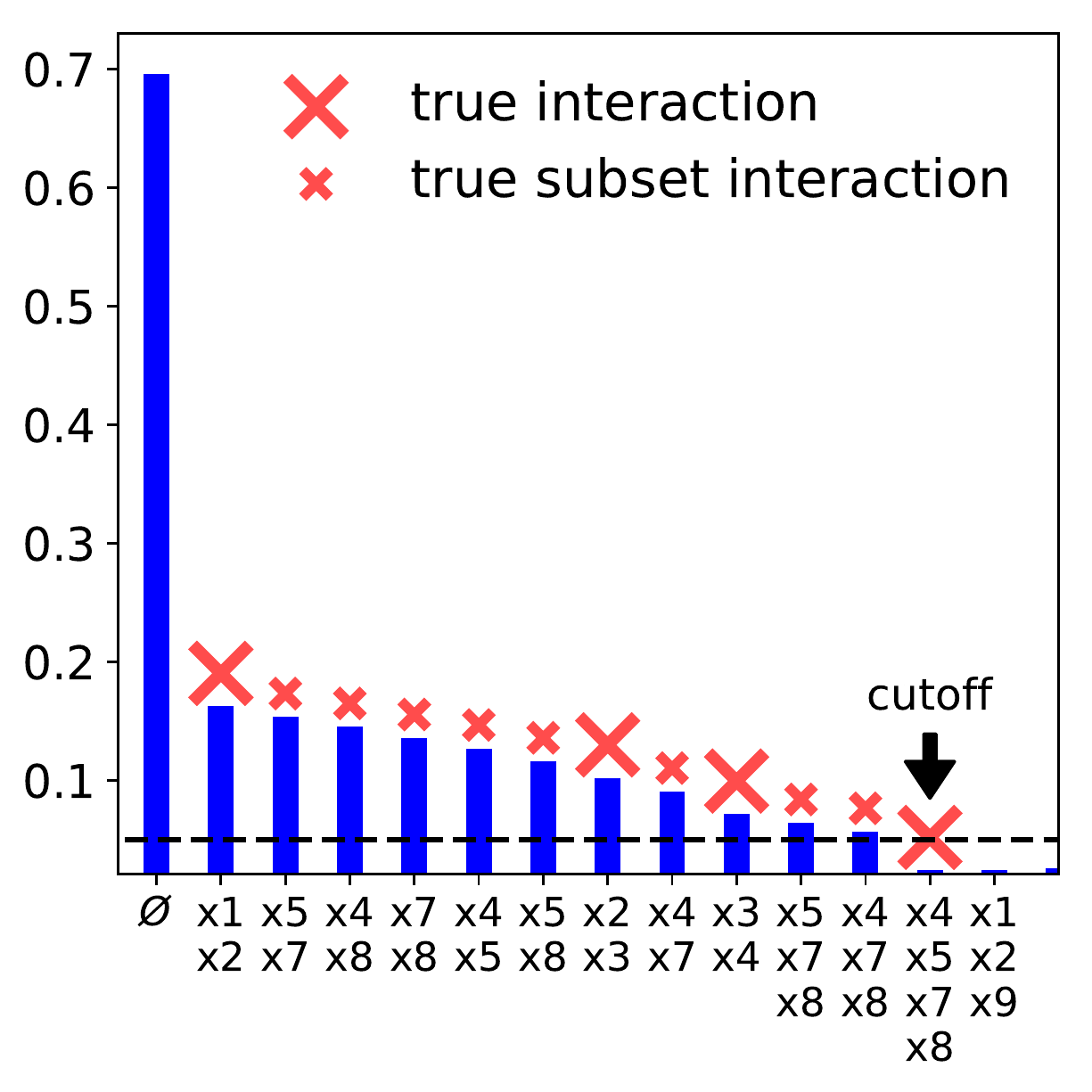}&
\includegraphics[scale=0.2]{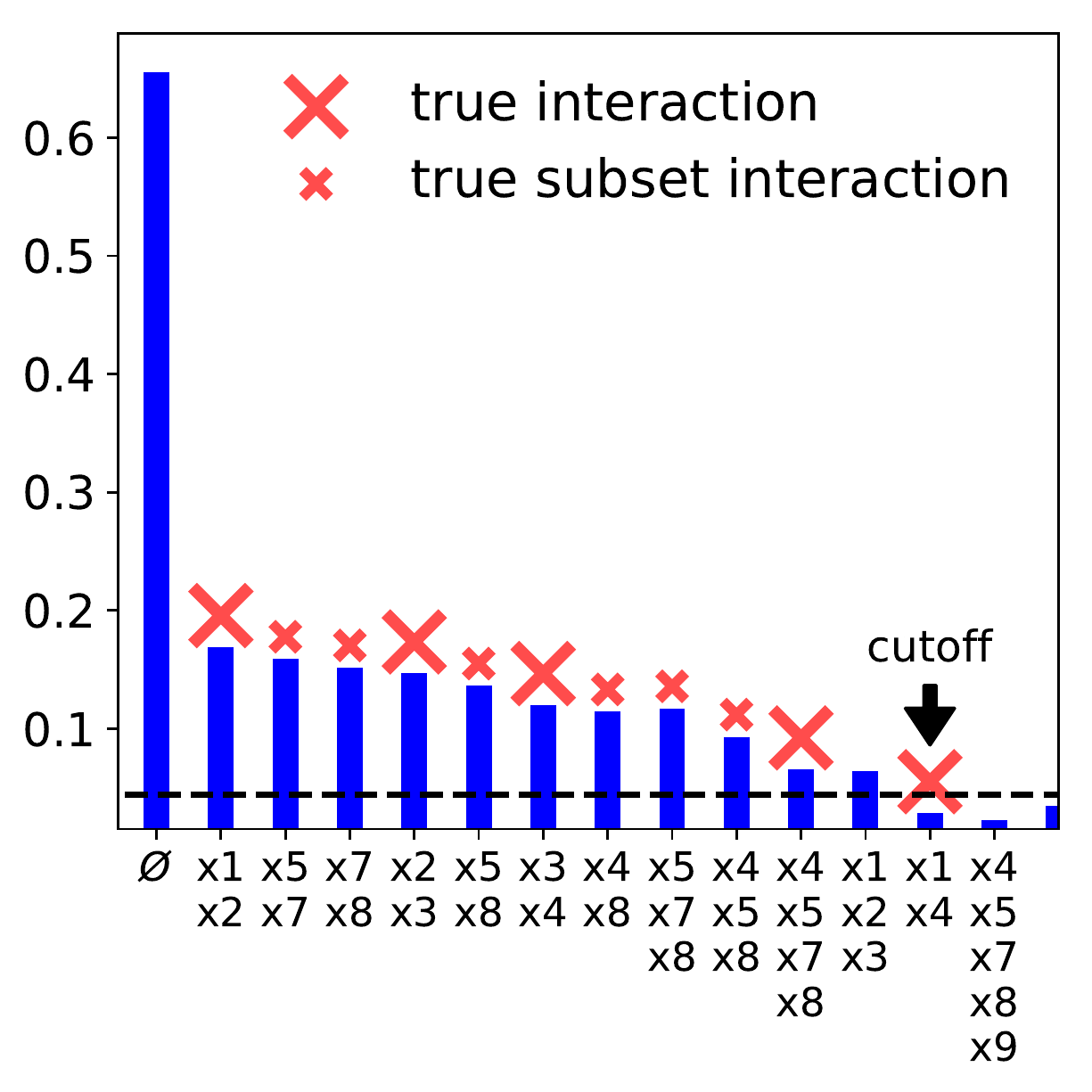}&
\includegraphics[scale=0.2]{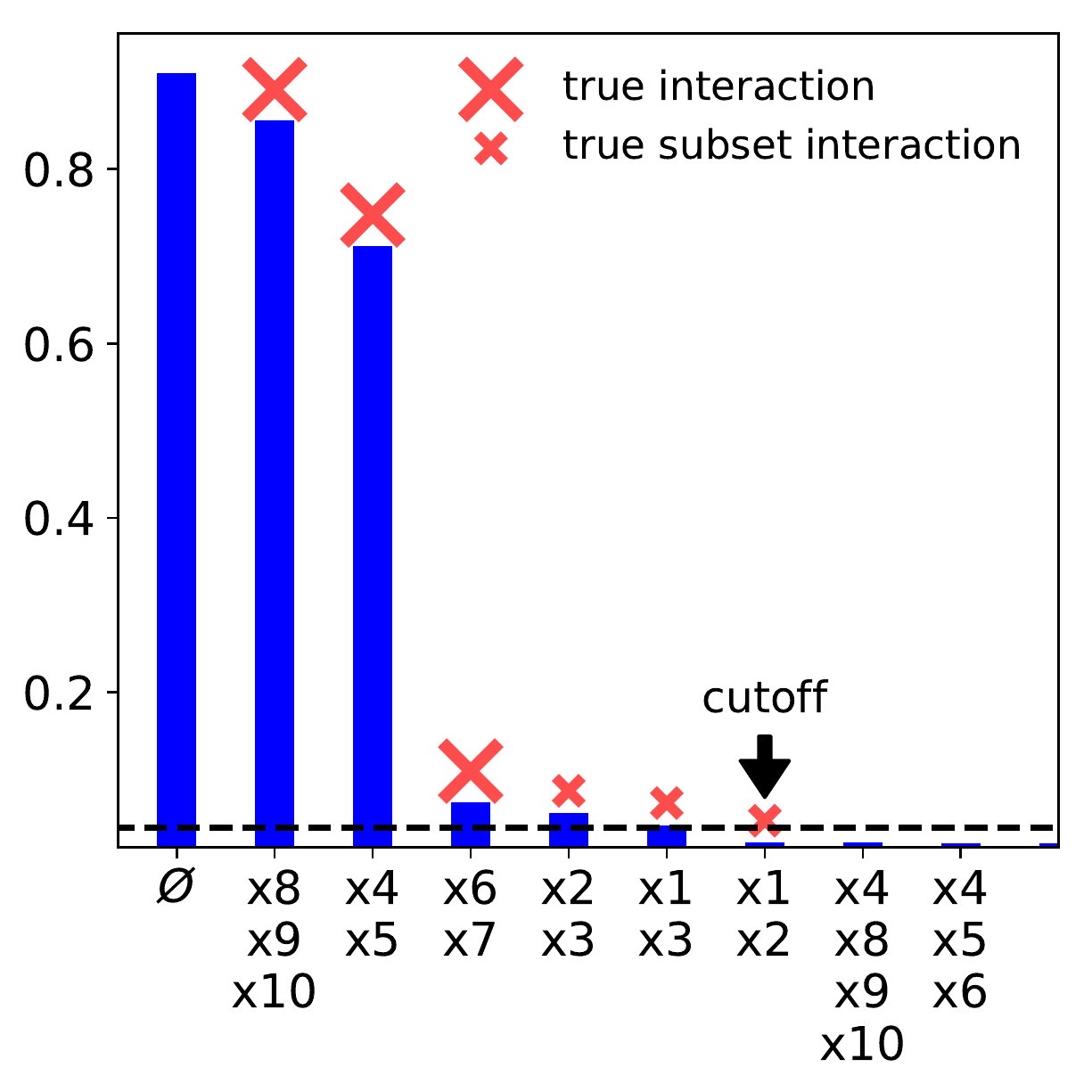}
\\
$F_{1}$ & $F_{2}$ & $F_{3}$ & $F_{4}$ & $F_{5}$ \\
\includegraphics[scale=0.2]{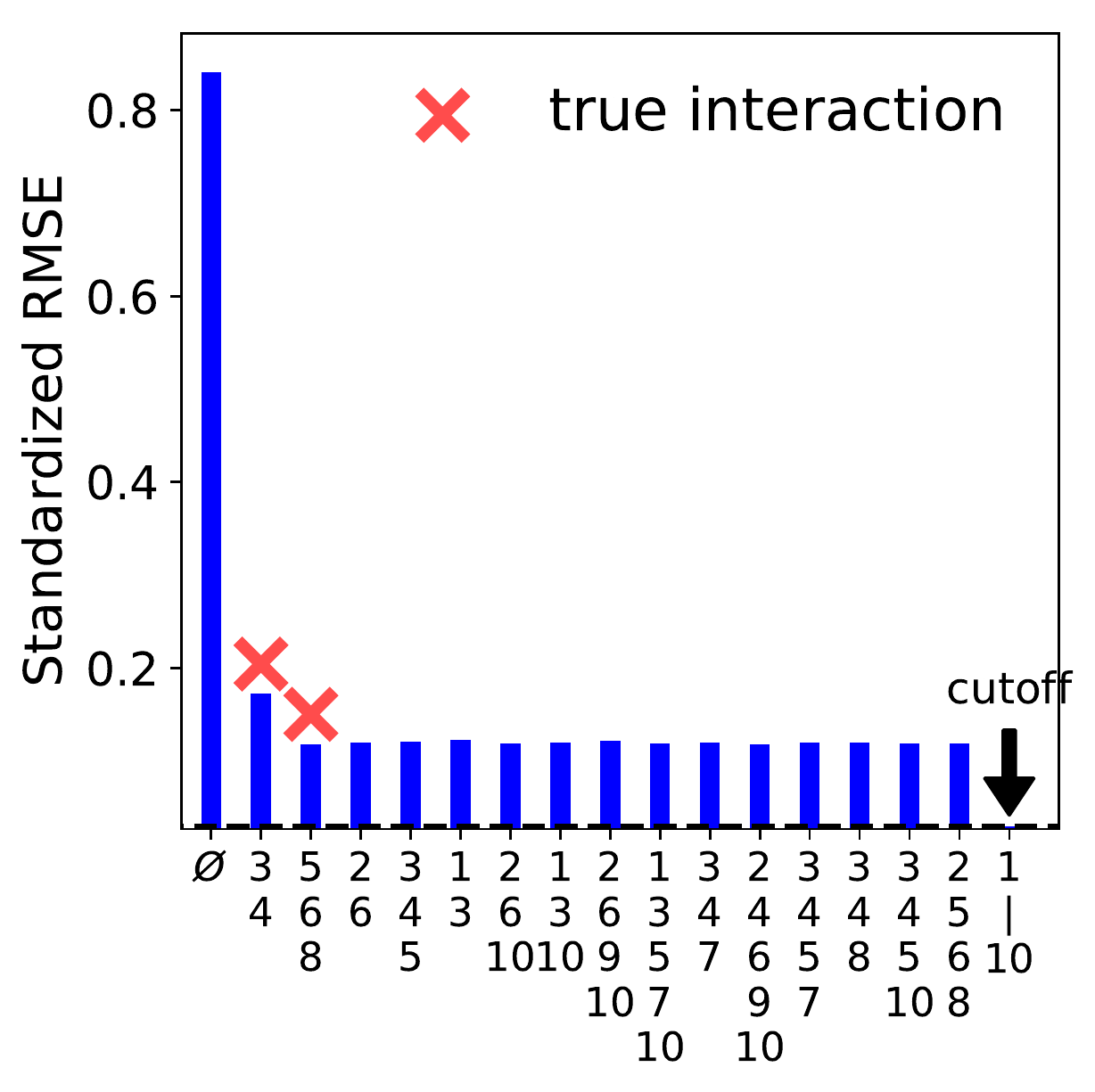}&
\includegraphics[scale=0.2]{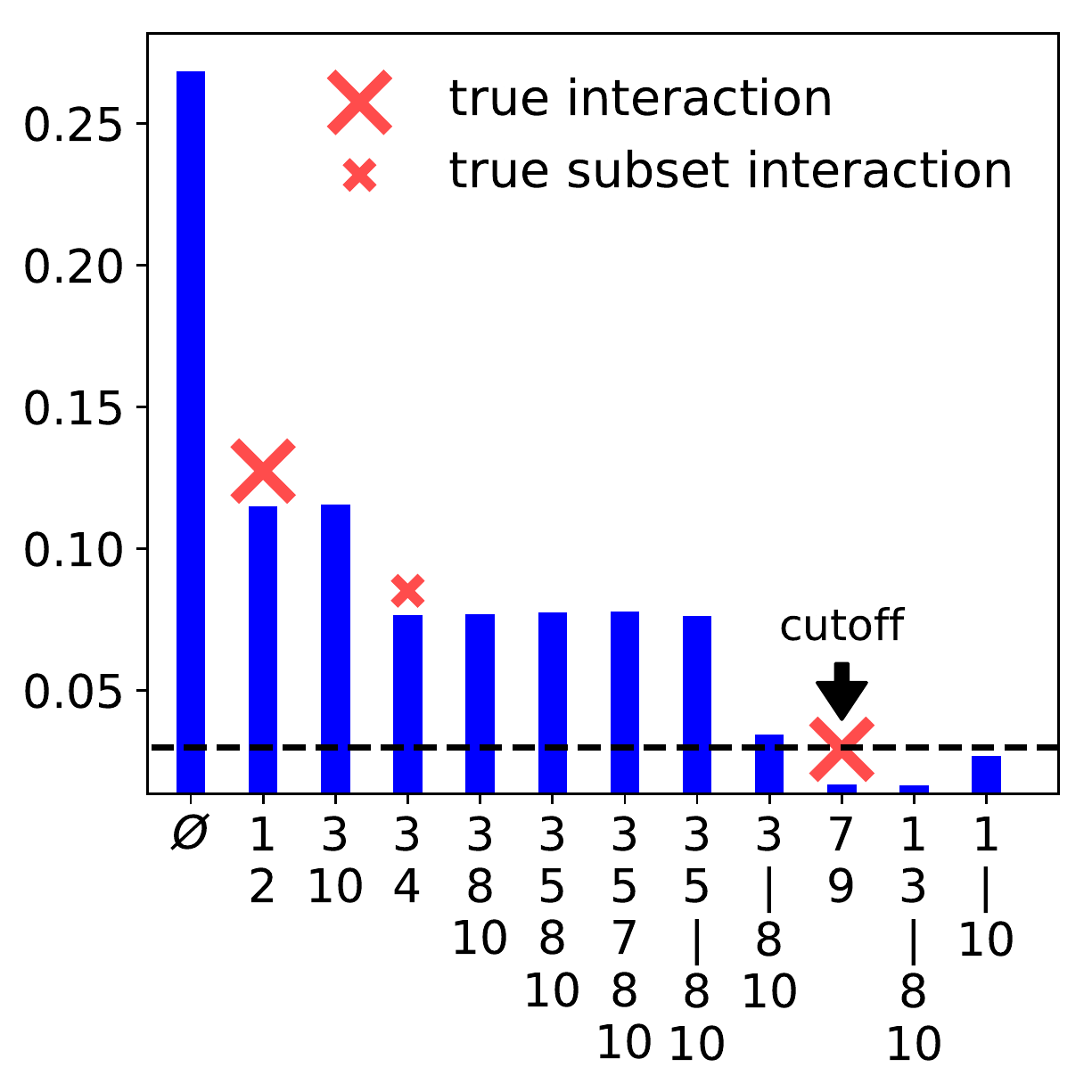}&
\includegraphics[scale=0.2]{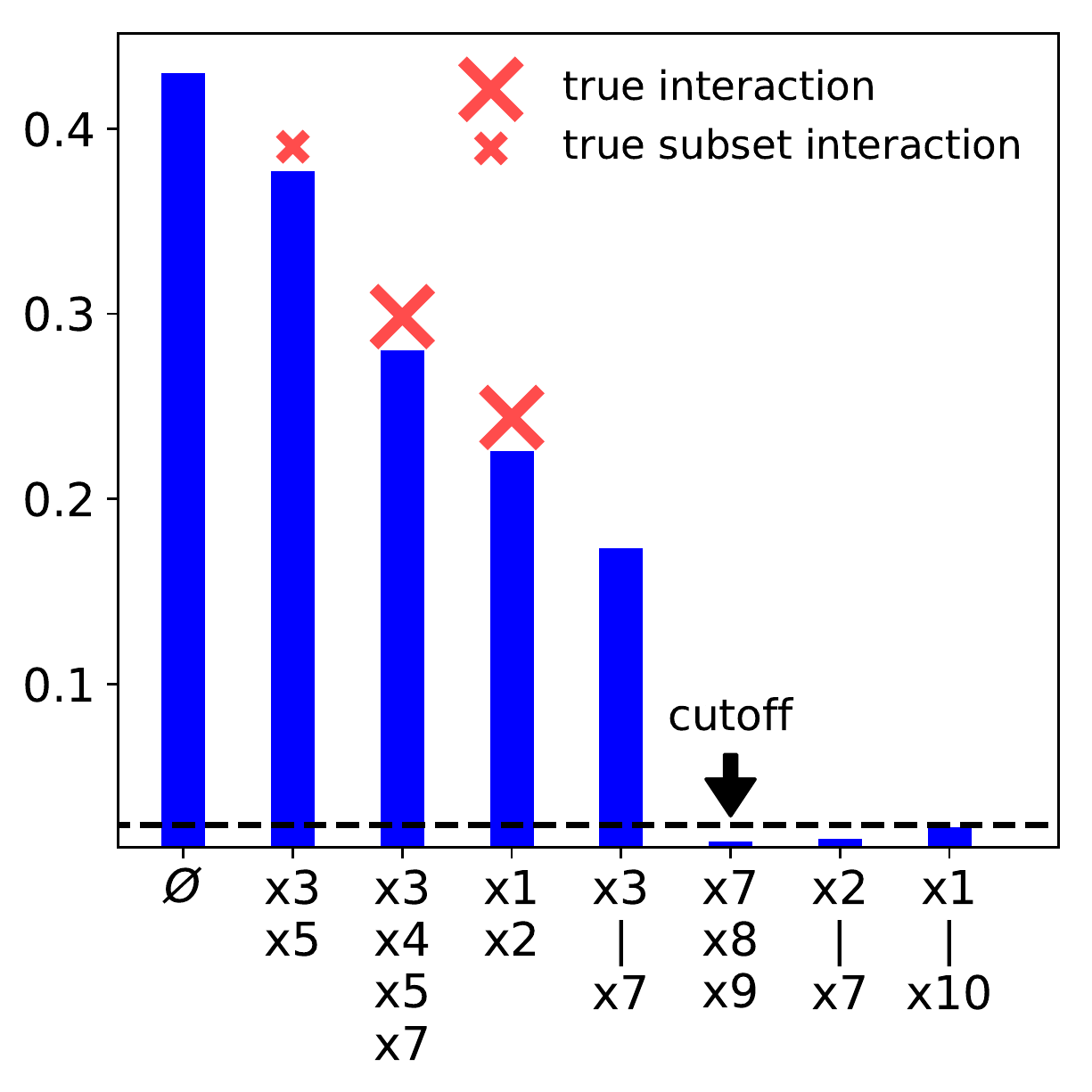}&
\includegraphics[scale=0.2]{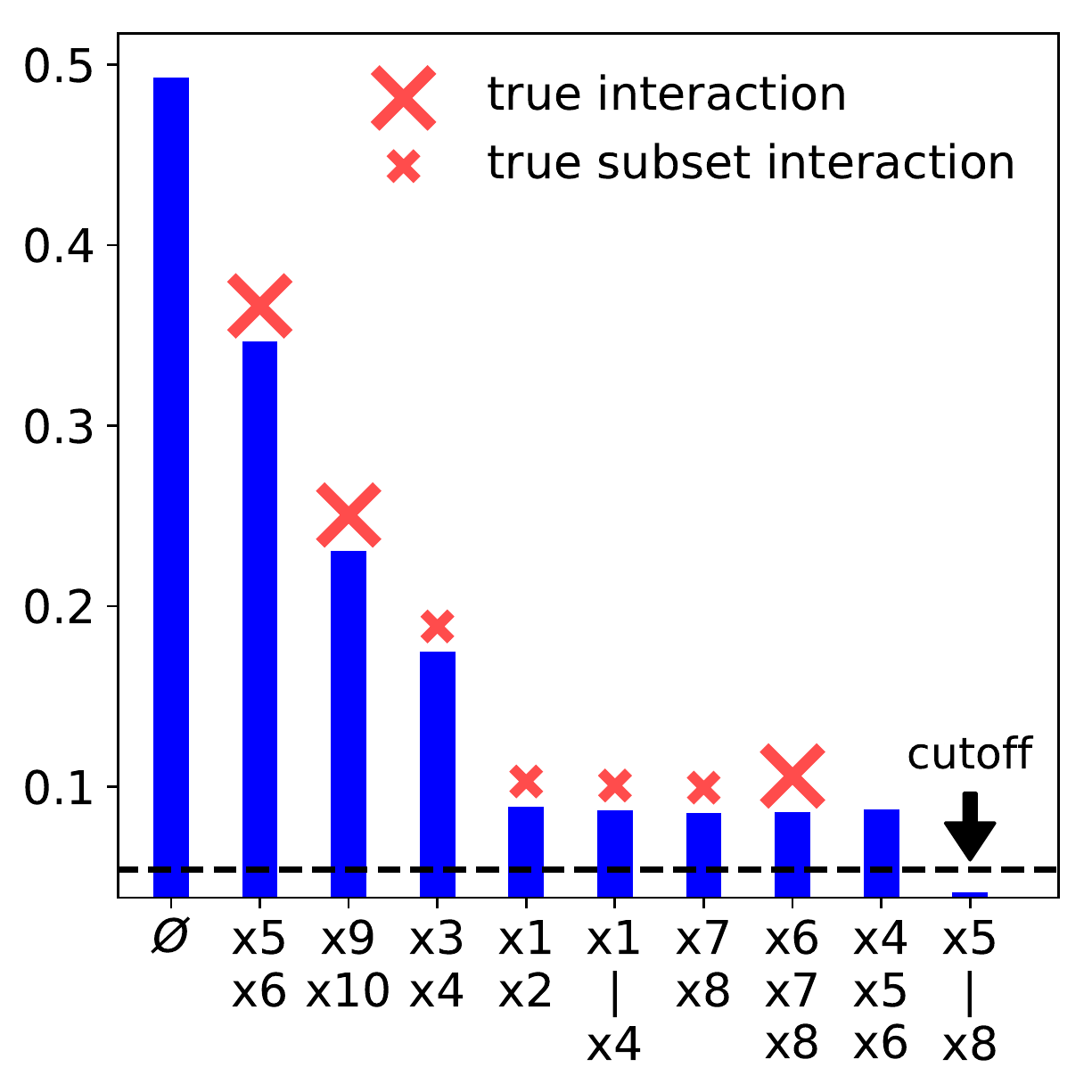}&
\includegraphics[scale=0.2]{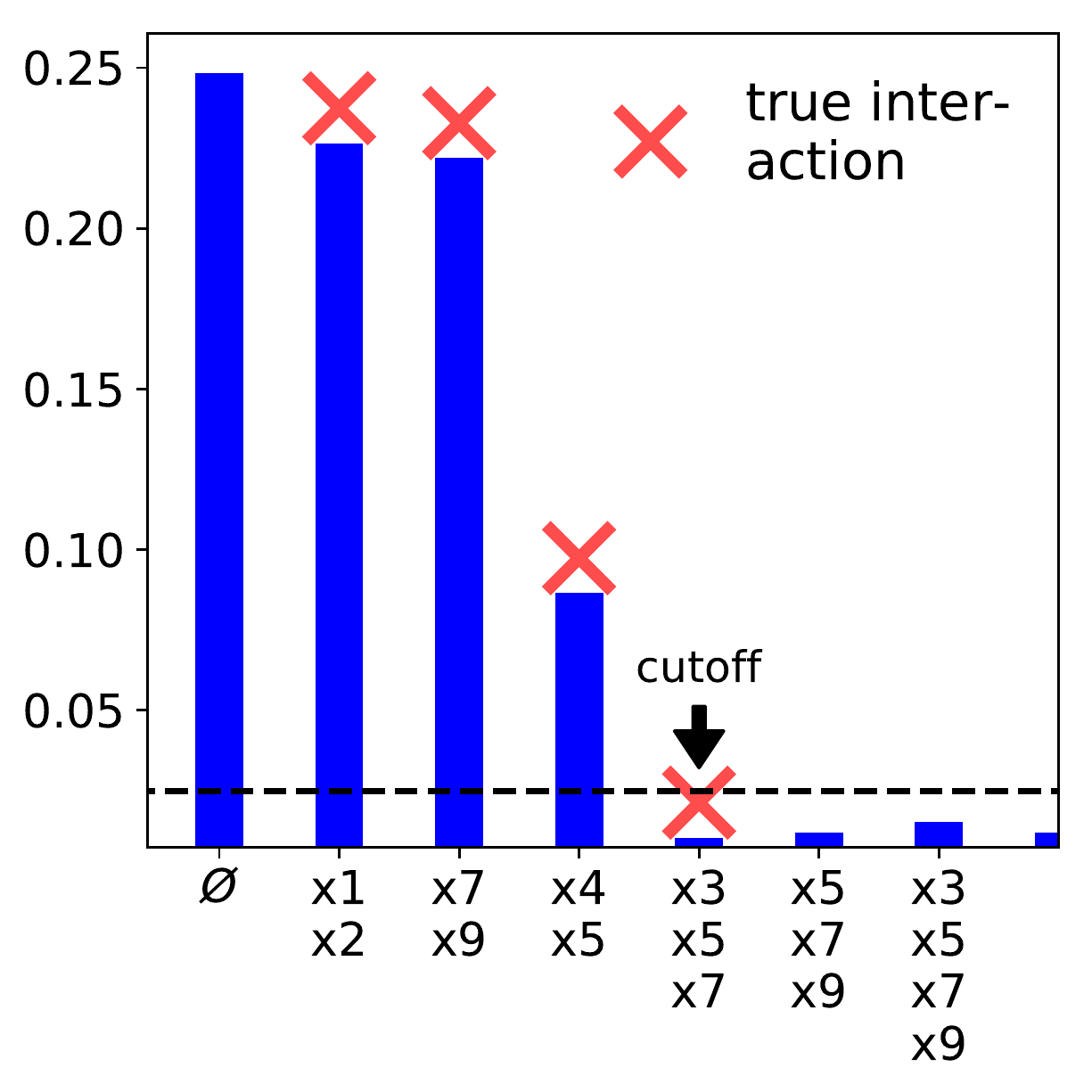}
\\
$F_{6}$ & $F_{7}$ & $F_{8}$ & $F_{9}$ & $F_{10}$  \\
\end{tabular}
\end{center}
\vspace{-0.1in}
\caption{{\mlpc} error with added top-ranked interactions (along $x$-axis) of $F_1$-$F_{10}$ (Table~\ref{table:testsuite}), where the interaction rankings were generated by the {\algName} framework applied to {\mlpm}. Red cross-marks indicate ground truth interactions, and {\O} denotes {\mlpc} without any interactions. Subset interactions become redundant when their true superset interactions are found.\label{fig:synth_highorder}}
\vspace{-0.1in}
\end{figure*}

\begin{figure*}[t]
\begin{center}
\setlength\tabcolsep{1.5pt} 
\begin{tabular}{cccc}
\includegraphics[scale=0.24]{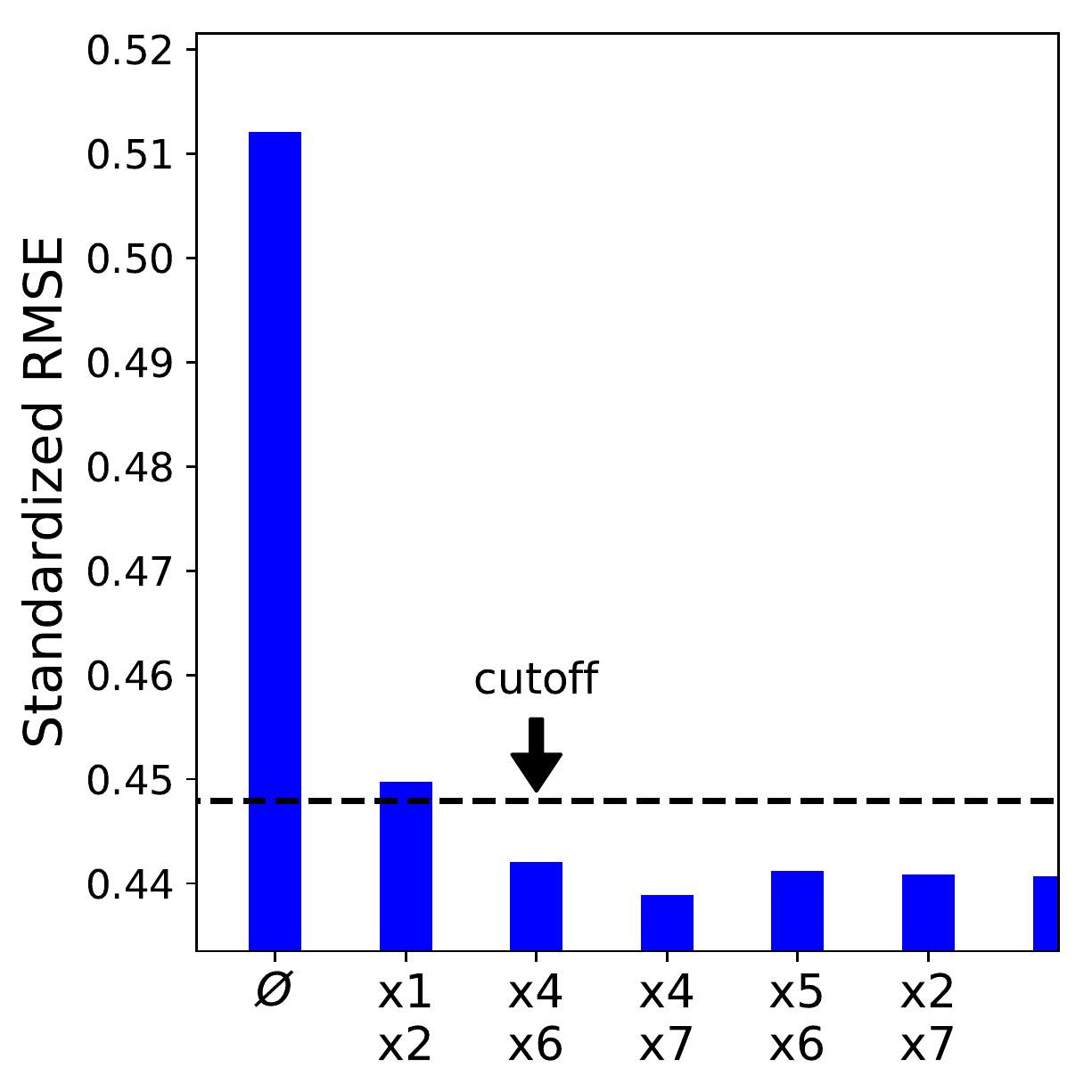}&
\includegraphics[scale=0.24]{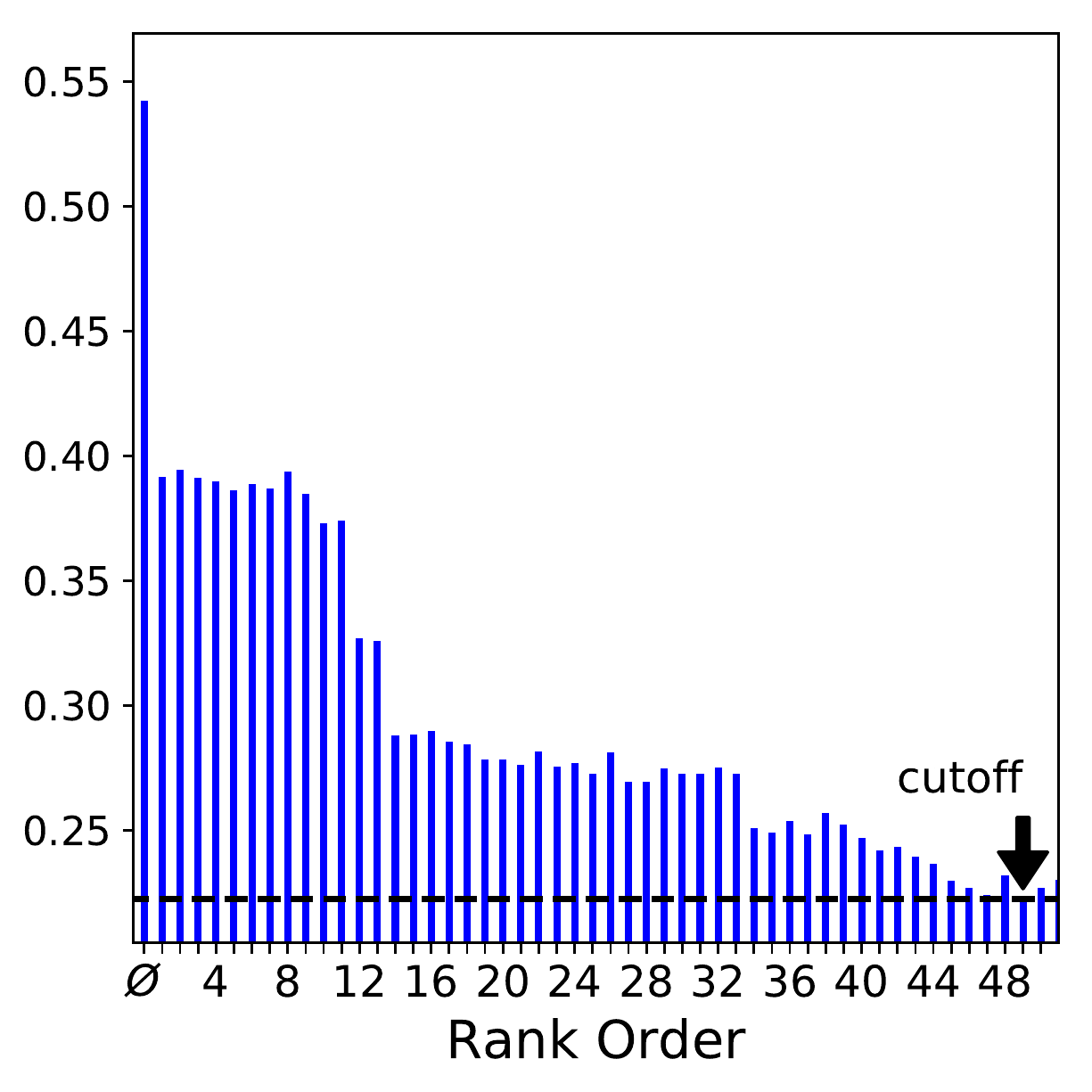}&
\includegraphics[scale=0.24]{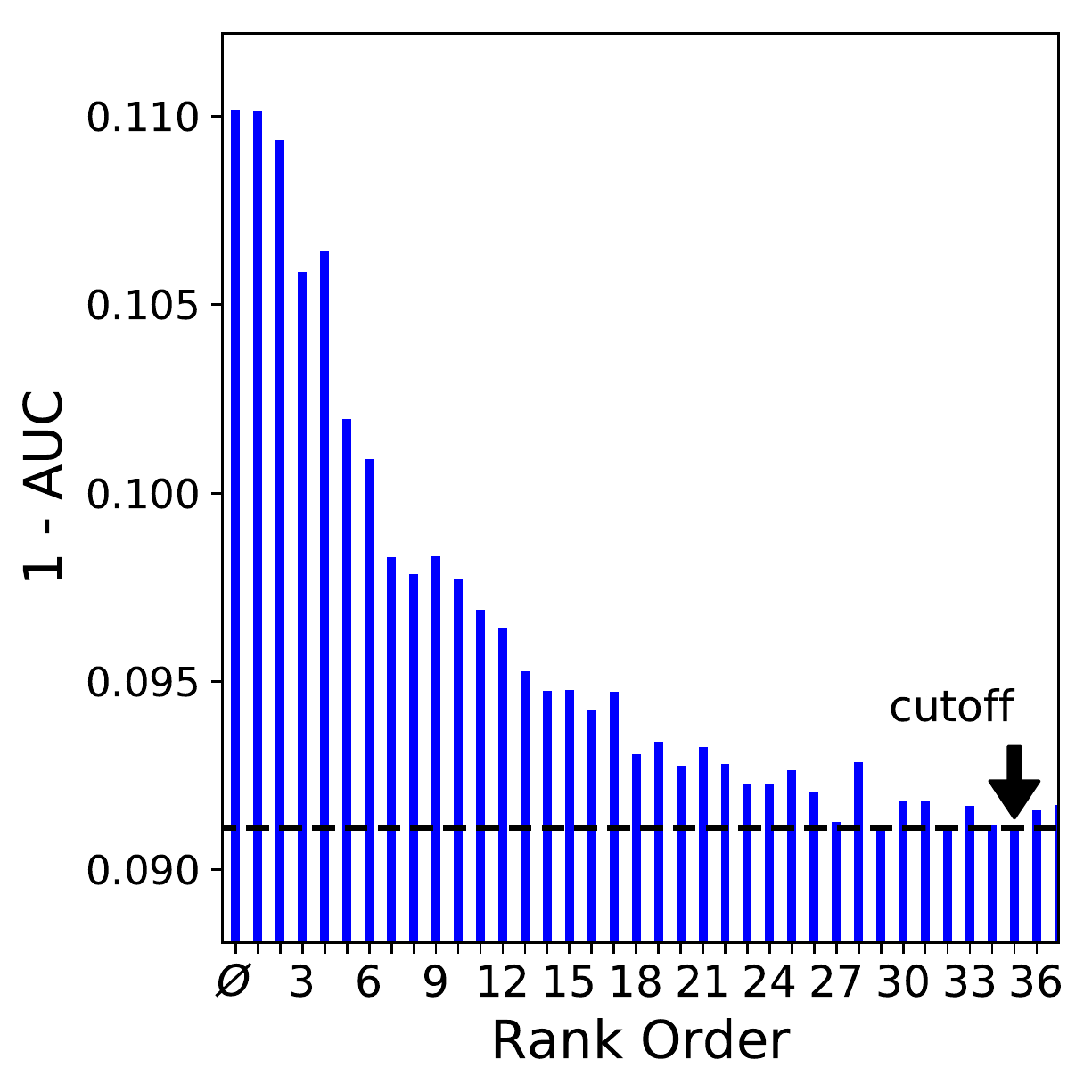}&
\includegraphics[scale=0.24]{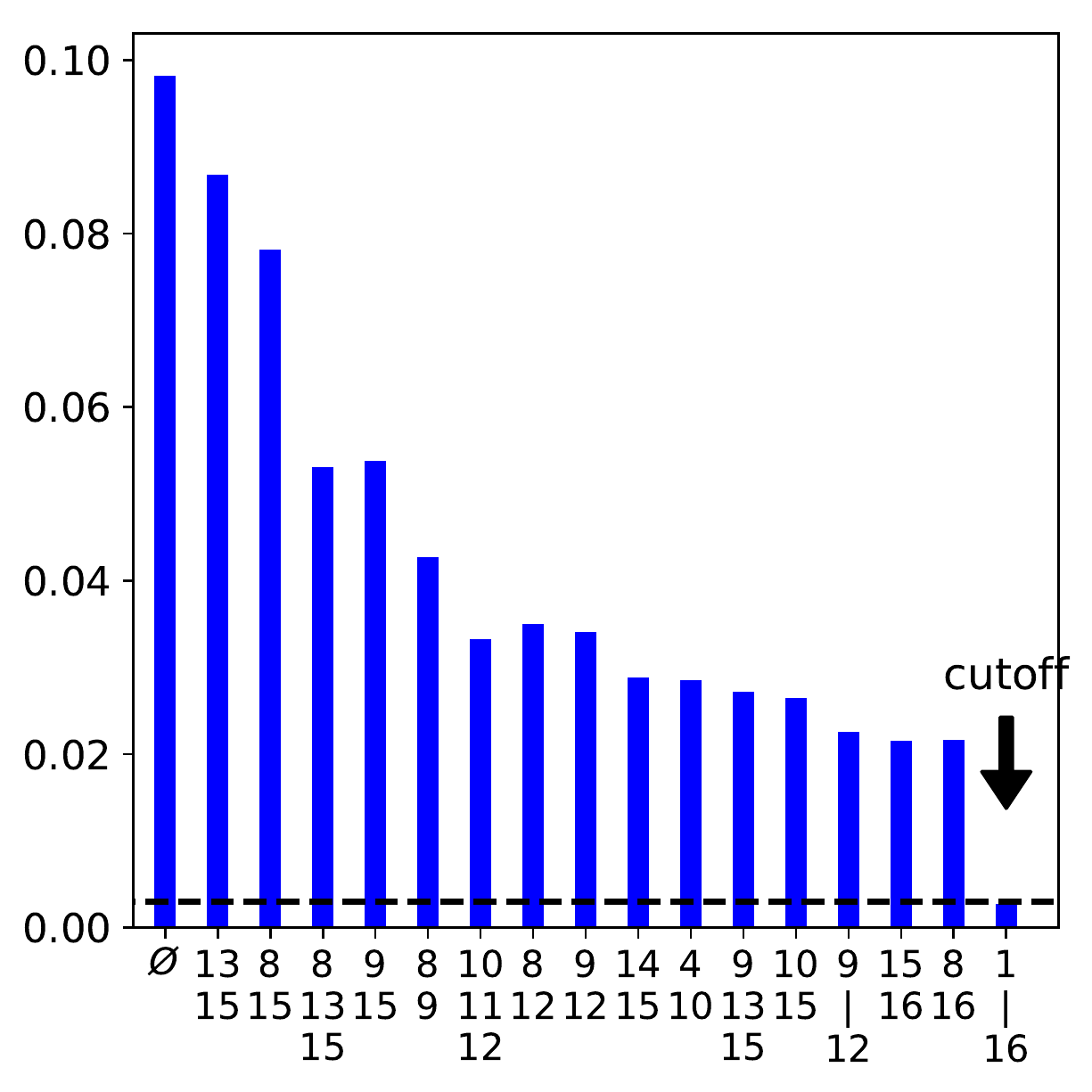}\\
cal housing & bike sharing & higgs boson & letter \\
\end{tabular}
\end{center}
\caption{{\mlpc} error with added top-ranked interactions (along $x$-axis) of real-world datasets (Table~\ref{table:testsuite}), where the interaction rankings were generated by the {\algName} framework on {\mlpm}. {\O} denotes {\mlpc} without any interactions.\label{fig:real_highorder} 
\vspace{0 in}}
\end{figure*}

\begin{table}[t]
\begin{center}
\caption{Test performance improvement when adding top-$K$ discovered interactions to {\mlpc} on real-world datasets and select synthetic datasets. Here, the median $\bar{K}$ excludes subset interactions, and $\bar{\abs{\II}}$ denotes average interaction cardinality. RMSE values  are standard scaled.
\label{table:improvement}}
\begin{tabular}{l|ccccc}
\hline
\multicolumn{1}{p{1.8cm}|}{\centering \hspace{0.05cm} \\ Dataset} &  \multicolumn{1}{p{0.5cm}}{\centering \hspace{0.05cm} \\ $p$}  & \multicolumn{1}{p{3.5cm}}{\centering Relative Performance \\ Improvement} & \multicolumn{1}{p{3.5cm}}{\centering Absolute Performance \\ Improvement} & \multicolumn{1}{p{0.5cm}}{\centering \hspace{0.05cm} \\ $\bar{K}$} & \multicolumn{1}{p{1.5cm}}{\centering \hspace{0.05cm} \\  $\bar{\abs{\II}}$} \\
\hline
 cal housing &  $8$ & $99\%\pm4.0\%$ & $0.09\pm1.3\mathrm{e}{-2}$ RMSE & $2$  & $2.0$\\
 bike sharing & $12$ & $98.8\%\pm0.89\%$ & $0.331\pm4.6\mathrm{e}{-3}$ RMSE & $12$  & $4.7$\\
 higgs boson & $30$ & $98\%\pm1.4\%$ & $0.0188\pm5.9\mathrm{e}{-4}$ AUC & $11$ & $4.0$\\
 letter & $16$ & $ 101.1\%\pm0.58\%$ & $0.103\pm5.8\mathrm{e}{-3}$ AUC & $1$ & $16$\\
\hline
 $F_{3}(\vx)$ & $10$ & $104.1\%\pm0.21\%$ & $0.672\pm2.2\mathrm{e}{-3}$ RMSE & $4$ & $2.5$\\
 $F_{5}(\vx)$ & $10$ & $102.0\%\pm0.30\%$ & $0.875\pm2.2\mathrm{e}{-3}$ RMSE & $6$ & $2.2$\\
 $F_{7}(\vx)$ & $10$ & $105.2\%\pm0.30\%$ & $0.2491\pm6.4\mathrm{e}{-4}$ RMSE & $3$ & $3.7$\\
$F_{10}(\vx)$ & $10$ & $105.5\%\pm0.50\%$ & $0.234\pm1.5\mathrm{e}{-3}$ RMSE & $4$ & $2.3$\\
 \hline
\end{tabular}
\end{center}
\end{table}

\label{sec:highorder_experiments}

We use our greedy interaction ranking algorithm (Algorithm \ref{alg:rankingAlg}) to perform higher-order interaction detection without an exponential search of interaction candidates. We first visualize our higher-order interaction detection algorithm on synthetic and real-world datasets, then we show  how the predictive capability of detected interactions closes the performance gap between {\mlpc} and {\mlpm}. Next, we discuss our experiments comparing {\algName} and \emph{AG} with added noise, and lastly we verify that our algorithm obtains significant improvements in runtime.

We visualize higher-order interaction detection on synthetic and real-world datasets in Figures \ref{fig:synth_highorder} and \ref{fig:real_highorder} respectively.
The plots correspond to higher-order interaction detection as the ranking cutoff is applied (Section \ref{sec:cutoff}). The interaction rankings generated by {\algName} for {\mlpm} are shown on the $x$-axes, and the blue bars correspond to the validation performance of {\mlpc} as interactions are added. For example, the plot for cal housing shows that adding the first interaction significantly reduces RMSE. We keep adding interactions into the model until reaching a cutoff point. In our experiments, we use a cutoff heuristic where interactions are no longer added after {\mlpc}'s validation performance reaches or surpasses {\mlpm}'s validation performance (represented by horizontal dashed lines). 

As seen with the red cross-marks, our method finds true interactions in the synthetic data of $F_1$-$F_{10}$ before the cutoff point. Challenges with detecting interactions are again mainly associated with $F_6$ and $F_7$, which have also been difficult for baselines in the pairwise detection setting (Table~\ref{table:auc}).
For the cal housing dataset, we obtain the top interaction $\{1,2\}$ just like in our pairwise test (Figure \ref{fig:real_pairwise}, cal housing), where now the $\{1,2\}$ interaction contributes a significant improvement in {\mlpc} performance. Similarly, from the letter dataset we obtain a 16-way interaction, which is consistent with its highly interacting pairwise heat map (Figure~\ref{fig:real_pairwise}, letter). For the bike sharing and higgs boson datasets, we note that even when considering many interactions, {\mlpc} eventually reaches the cutoff point with a relatively small number of superset interactions. This is because many subset interactions become redundant when their corresponding supersets are found. 
  
In our evaluation of interaction detection on real-world data, we study detected interactions via their predictive performance. By comparing the test performance of {\mlpc} and {\mlpm} with respect to {\mlpc} without any interactions (${\mlpc}^{\text{\O}}$), we can compute the relative test performance improvement obtained by including detected interactions. These relative performance improvements are shown in Table~\ref{table:improvement} for the real-world datasets as well as four selected synthetic datasets, where performance is averaged over ten trials per dataset.   
The results of this study show that a relatively small number of interactions of variable order are highly predictive of their corresponding datasets, as true interactions should.

We further study higher-order interaction detection of our {\algName} framework by comparing it to \emph{AG} in both interaction ranking quality and runtime. To assess ranking quality, we design a metric, \emph{top-rank recall}, which computes a recall of proposed interaction rankings by only considering those interactions that are correctly ranked before any false positive. The number of top correctly-ranked interactions is then divided by the true number of interactions. Because subset interactions are redundant
in the presence of corresponding superset interactions, 
only such superset interactions can count as true interactions, 
and our metric ignores any subset interactions in the ranking. We compute the \emph{top-rank recall} of {\algName} on {\mlp} and {\mlpm}, the scores of which are averaged across all tests in the test suite of synthetic functions (Table~\ref{table:testsuite}) with 10 trials per test function. For each test, we remove two trials with max and min recall. We conduct the same tests using the state-of-the-art interaction detection method \emph{AG}, except with only one trial per test because \emph{AG} is very computationally expensive to run. In Figure~\ref{fig:highorder_comparison}a, we show \emph{top-rank recall} of {\algName} and \emph{AG} at different Gaussian noise levels\footnote{Gaussian noise was to applied to both features and the outcome variable after standard scaling all variables.}, and in Figure~\ref{fig:highorder_comparison}b, we show runtime comparisons on real-world and synthetic datasets. As shown, {\algName} can obtain similar top-rank recall as \emph{AG} while running orders of magnitude times faster.

\begin{figure*}[t!]
	\vspace{-0.09in}
    \centering
    \begin{subfigure}[t]{0.35\textwidth}
    	\vskip 0pt
        \vspace{-0.02in}
        \centering
       \includegraphics[scale=0.45]{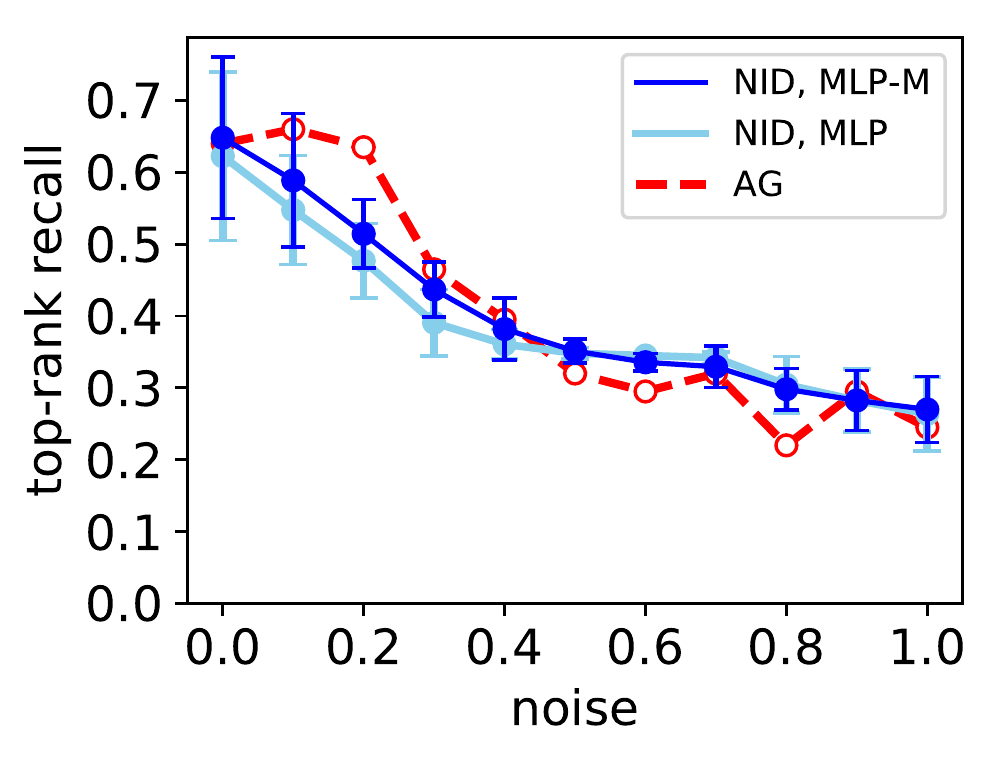}
        \vspace{-0.07in}
        \caption{}
    \end{subfigure}%
    ~ 
    \begin{subfigure}[t]{0.65\textwidth}
     	\vskip 0pt
        \centering
        \includegraphics[scale=0.348]{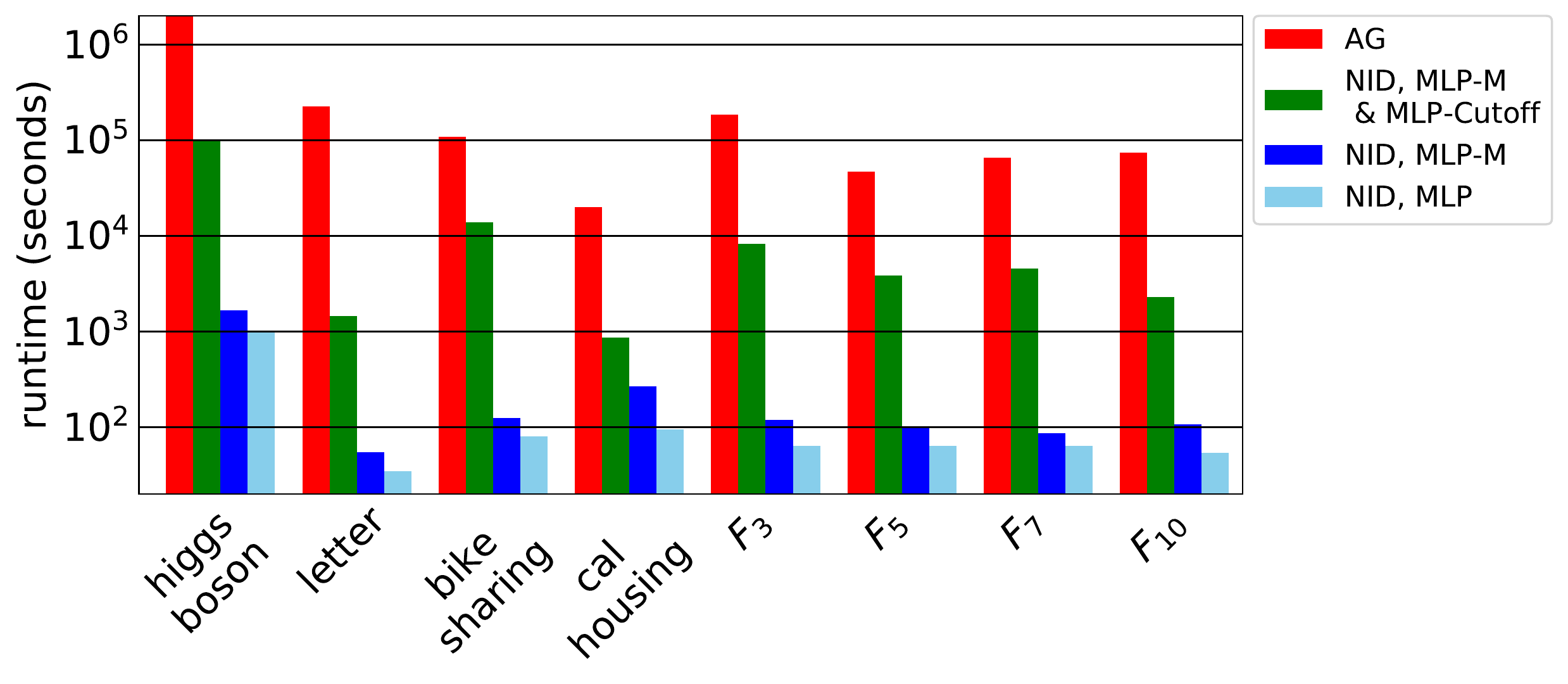}
        \vspace{-0.22in}
        \caption{}
    \end{subfigure}
    \vspace{-0.1in}
    \caption{Comparisons between \emph{AG} and {\algName} in higher-order interaction detection. (a) Comparison of top-ranked recall at different noise levels on the synthetic test suite (Table~\ref{table:testsuite}), (b) comparison of runtimes, where {\algName} runtime with and without cutoff are both measured. {\algName} detects interactions with top-rank recall close to the state-of-the-art \emph{AG} while running orders of magnitude times faster.\label{fig:highorder_comparison}\vspace{-0.05in}}
\end{figure*}

\vspace{-0.08in}
\subsection{Limitations}
\vspace{-0.05in}

In higher-order interaction detection, our {\algName} framework can have difficulty detecting interactions 
from functions with interlinked interacting variables.
For example, a clique $x_1x_2 + x_1x_3 + x_2x_3$ 
 only contains pairwise interactions. 
When detecting pairwise interactions (Section \ref{sec:pairwise_experiments}), $\algName$ often obtains an AUC of 1.
However, in higher-order interaction detection, the interlinked pairwise interactions are often confused for single higher-order interactions. This issue could mean that our higher-order interaction detection algorithm 
fails to separate interlinked pairwise interactions encoded in a neural network, or the network approximates interlinked low-order interactions as higher-order interactions.
Another limitation of our framework is that 
it sometimes detects spurious interactions or misses interactions as a result of correlations between features; however, correlations are known to cause such problems for any interaction detection method~\citep{sorokina2008detecting,lou2013accurate}.

\vspace{-0.1in}
\section{Conclusion}
\vspace{-0.1in}

We presented our {\algName} framework, which detects statistical interactions by interpreting the learned weights of a feedforward neural network. The framework has the practical utility of accurately detecting general types of interactions without searching an exponential solution space of interaction candidates.
Our core insight was that interactions between features must be modeled at common hidden units, and our framework decoded the weights according to this insight.

In future work, we plan to detect feature interactions by accounting for common units in intermediate hidden layers of feedforward networks. We would also like to use the perspective of interaction detection to interpret weights in other deep neural architectures.

\vspace{-0.05in}
\subsubsection*{Acknowledgments}
\vspace{-0.05in}

We are very thankful to Xinran He, Natali Ruchansky, Meisam Razaviyayn, Pavankumar Murali (IBM), Harini Eavani (Intel), and anonymous reviewers for their thoughtful advice. This work was supported by National Science Foundation awards IIS-1254206, IIS-1539608 and USC Annenberg Fellowships for MT and DC.
\bibliography{refs}
\bibliographystyle{iclr2018_conference}

\newpage
\begin{appendix}
\section{Proof and Discussion for Proposition \ref{thm:commonhidden}}
\label{apd:commonhidden}

Given a trained feedforward neural network as defined in Section \ref{sec:preliminaries}, we can construct a directed acyclic graph $G=\left(V,E\right)$ based on non-zero weights as follows.
We create a vertex for each input feature and hidden unit in the neural network:
$
V =\{v_{\ell,i} | \forall i, \ell\},
$
where $v_{\ell,i}$ is the vertex corresponding to the $i$-th hidden unit in the $\ell$-th layer. Note that the final output $y$ is not included.
We create edges based on the non-zero entries in the weight matrices, i.e.,
$
E =\{ \left( v_{\ell-1,i}, v_{\ell,j} \right) 
| \W^{\ell}_{j,i} \neq 0, \forall i,j, \ell\}.
$ 
Note that under the graph representation, the value of any hidden unit is a function of parent hidden units.
In the following proposition, we will use vertices and hidden units interchangeably. 

\commonhidden*

\begin{proof}
We prove Proposition \ref{thm:commonhidden} by contradiction. 

Let $\II$ be an interaction where there is no vertex in the associated graph which satisfies the condition. Then, for any vertex $v_{L,i}$ at the $L$-th layer, the  value $f_{i}$ of the corresponding hidden unit is a function of its ancestors at the input layer $\II_i$ where $\II \not\subset \II_i$. 

Next, we group the hidden units at the $L$-th layer into non-overlapping subsets by the first missing feature with respect to the interaction $\II$. That is, for element $i$ in $\II$, we create an index set $\mathcal{S}_i \in [p_L]$:
\begin{align*}
\mathcal{S}_i
=\{
j \in [p_L] |  i \not\in \II_j\text{ and }
\forall i'< i, j\not\in \mathcal{S}_{i'} 
\}.
\end{align*}
Note that the final output of the network is a weighed summation over the hidden units at the $L$-th layer:
\begin{align*}
\varphi\left(\V{x}\right)
= b^y+
\sum_{i \in \II} \sum_{j\in \mathcal{S}_i} w^y_{j} f_j\left(\V{x}_{\II_j}\right),
\end{align*}
Since that $\sum_{j\in \mathcal{S}_i} w^y_{j} f_j\left(\V{x}_{\II_j}\right)$ is not a function of $x_i$, we have that $\varphi\left(\cdot\right)$ is a function without the interaction $\II$, which contradicts our assumption.
\end{proof}

The reverse of this statement, that a common descendant will create an interaction among input features, holds true in most cases.
The existence of counterexamples is manifested when early hidden layers capture an interaction that is negated in later layers.  For example, the effects of two interactions may be directly removed in the next layer, as in the case of the following expression: $\max\{w_1x_1+w_2x_2,0\}-\max\{-w_1x_1-w_2x_2,0\} = w_1x_1+w_2x_2$. Such an counterexample is legitimate; however, due to random fluctuations, it is highly unlikely in practice that the $w_1$s and the $w_2$s from the left hand side are exactly equal.

\section{Pairwise Interaction Strength via Quadratic Approximation}
\label{thm:minWeight} 

We can provide a finer interaction strength analysis on a bivariate ReLU function: $ \max \{ \alpha_1 x_1 + \alpha_2 x_2,0  \}$, where $x_1, x_2$ are two variables and $\alpha_1,\alpha_2$ are the weights for this simple network.
We quantify the strength of the interaction between $x_1$ and $x_2$ with the cross-term coefficient of the best quadratic approximation. That is,
\begin{align*}
\beta_0,\dots,\beta_5 
=\argmin_{\beta_i,i=0,\dots,5}\iint_{-1}^{1}\  \Big [ \beta_0+\beta_1 x_1+\beta_2 x_2+\beta_3 &x_1^2  + \beta_4 x_2^2+\beta_5 x_1x_2
\\
&
-\max\{ \alpha_1 x_1 + \alpha_2 x_2,0 \} \Big ]^2 \,dx_1\,dx_2.
\end{align*}
 Then for the coefficient of interaction $\{x_1,x_2\}$, $\beta_5$,
  we have that, 
\begin{equation}
\abs{\beta_5}
=
\frac{3}{4}
\left(
	1-\frac{ 
				\min\{\alpha^2_1,\alpha_2^2\}
			  }{
			  5 \max\{\alpha^2_1,\alpha_2^2\}
			  }
\right)
\min\{\abs{\alpha_1},\abs{\alpha_2}\}.
\label{eq:minApprox}\end{equation}

Note that the choice of the region $(-1,1) \times  (-1,1)$ is arbitrary: 
for larger region $(-c,c) \times  (-c,c)$ with $c>1$, we found that $\abs{\beta_5}$ scales with $c^{-1}$. By the results of Proposition \ref{thm:minWeight}, 
the strength of the interaction can be well-modeled by the minimum value
 between $\abs{\alpha_1}$ and $\abs{\alpha_2}$. Note that the factor before $\min\{\abs{\alpha_1},\abs{\alpha_2}\}$ in 
  Equation (\ref{eq:minApprox}) is almost a constant with less than $20\%$ fluctuation.

\section{Proof for Lemma \ref{thm:lipschitz}}
\label{apd:proofLipz}
\lipschitz*
\begin{proof}
For non-differentiable $\phi\left(\cdot\right)$ such as the ReLU function, 
we can replace it with a series
 of differentiable $1$-Lipschitz functions that converges to $\phi\left(\cdot\right)$ in the limit.
 Therefore, without loss of generality, we assume that
 $\phi\left(\cdot\right)$ is differentiable with $|\partial_x \phi(x) | \leq  1$.  
 We can take the partial derivative of the final output with respect to $h^{(\ell)}_i$,
 the $i$-th unit at the $\ell$-th hidden layer:
\begin{align*}
\frac{\partial y}{\partial h^{(\ell)}_i}
= & 
\sum_{j_{\ell+1},\dots,j_L}
\frac{\partial y}{\partial h^{(L)}_{j_L}}
\frac{\partial h^{(L)}_{j_L}}{\partial h^{(L-1)}_{j_{L-1}}}
\cdots 
\frac{\partial h^{(\ell+1)}_{j_{\ell+1}}}{\partial h^{(\ell)}_i}
\\
=&
{\V{w}^{y}}^\T 
\mathrm{diag}(\bm{\dot{\phi}}^{(L)}) 
\W^{(L)}
\cdots 
\mathrm{diag}(\bm{\dot{\phi}}^{(\ell+1)})
\W^{(\ell+1)},
\end{align*}
where $\bm{\dot{\phi}}^{(\ell)}\in \real^{p_\ell}$ is a vector that
\begin{align*}
{\dot{\phi}}^{(\ell)}_k
=
\partial_x \phi
\left( \W^{(\ell)}_{k,:}\vh^{(\ell-1)} + b^{(\ell)}_k \right).
\end{align*}

We can conclude the Lemma by proving the following inequality:
\begin{align*}
\abs{
\frac{\partial y}{\partial  h^{(\ell)}_{i} }
}
\leq  
\abs{\V{w}^{y}}^\T 
\abs{\W^{(L)}}
\cdots 
\abs{\W^{(\ell+1)}_{:,i}} = {z}^{(\ell)}_{i}.
\end{align*}
The left-hand side can be re-written as 
\begin{align*}
\sum_{j_{\ell+1},\dots,j_L}
{w}_{j_L}^{y} 
{\dot{\phi}}^{(L)}_{j_L}
W^{(L)}_{j_L,j_{L-1}}
{\dot{\phi}}^{(L-1)}_{j_{L-1}}
\cdots 
{\dot{\phi}}^{(\ell+1)}_{j_{\ell+1}}
W^{(\ell+1)}_{j_{\ell+1},i}.
\end{align*}
The right-hand side can be re-written as 
\begin{align*}
\sum_{j_{\ell+1},\dots,j_L}
\abs{ {w}_{j_L}^{y} }
\abs{W^{(L)}_{j_L,j_{L-1}} }
\cdots 
\abs{ W^{(\ell+1)}_{j_{\ell+1},i} }.
\end{align*}
We can conclude by noting that $|\partial_x \phi(x) | \leq 1$.
\end{proof}

\section{Proof for Theorem \ref{thm:boost}}

\label{apd:boost}
\boost*
\begin{proof}

Suppose for the purpose of contradiction that $\mathcal{S}\subseteq \mathcal{R}$ and $\forall s \in \mathcal{S}$, $ \omega(s) \geq \omega(\II)$. Because $\omega_j(\II) > \frac{1}{d}\omega_j(s)$,
\begin{align*}
\omega(\II) = \sum_{s\in\mathcal{S}\cap\mathcal{R}}\sum_{j \text{ propose } s} z_j \omega_j(\II) &> \frac{1}{d}\sum_{s\in\mathcal{S}\cap\mathcal{R}}\sum_{j \text{ propose } s} z_j \omega_j(s) = \frac{1}{d}\sum_{s\in\mathcal{S}\cap\mathcal{R}}\omega(s).
\end{align*}

Since $\forall s \in \mathcal{S}$, $ \omega(s) \geq \omega(\II)$,

\begin{align*}
 \frac{1}{d}\sum_{s\in\mathcal{S}\cap\mathcal{R}}\omega(s) &\geq \frac{1}{d} \sum_{s\in\mathcal{S}\cap\mathcal{R}}\omega(\II)\\
\end{align*}

Since $\mathcal{S}\subseteq \mathcal{R}$, $|\mathcal{S}\cap \mathcal{R}| = d$. Therefore,

\begin{align*}
 \frac{1}{d}\sum_{s\in\mathcal{S}\cap\mathcal{R}}\omega(\II) &\geq \frac{1}{d} \omega(\II)d \geq  \omega(\II),
\end{align*}
which is a contradiction.

\end{proof}

\section{ROC Curves}

\begin{figure*}[h]
\begin{center}
\setlength\tabcolsep{0.5pt} 
\begin{tabular}{ccccc}
\includegraphics[scale=0.27]{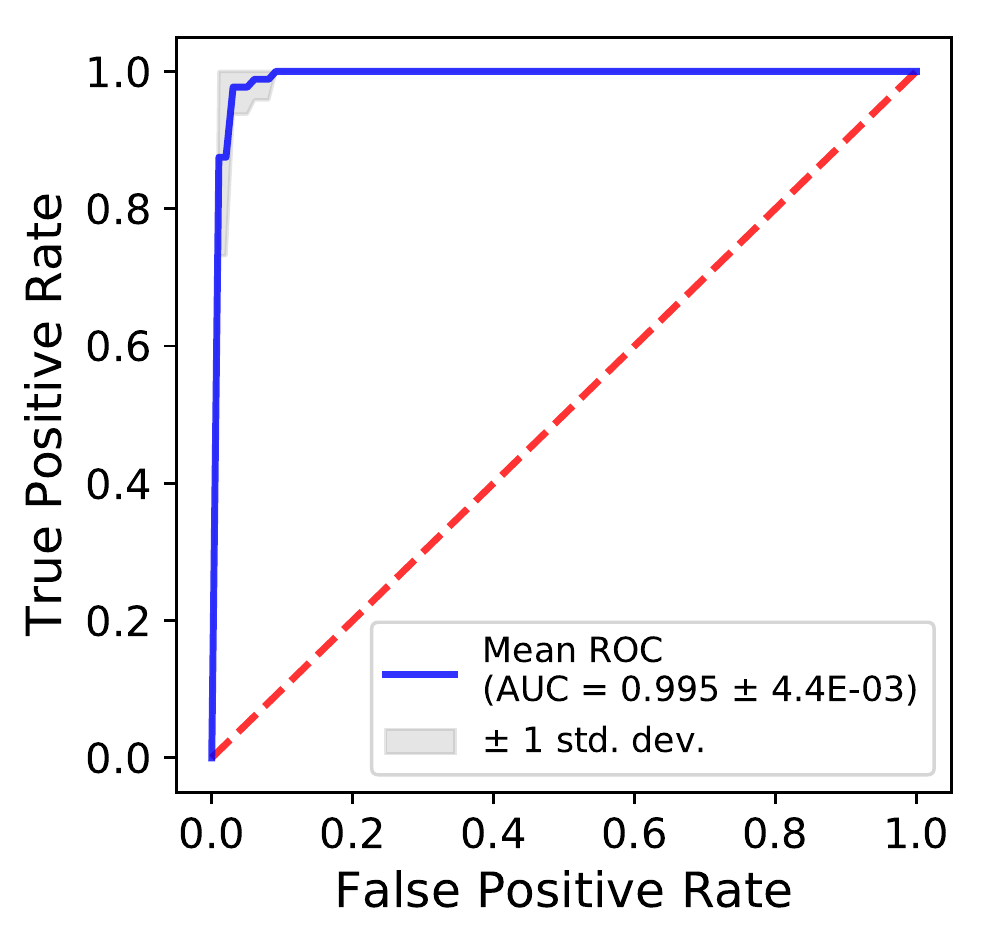}&
\includegraphics[scale=0.27]{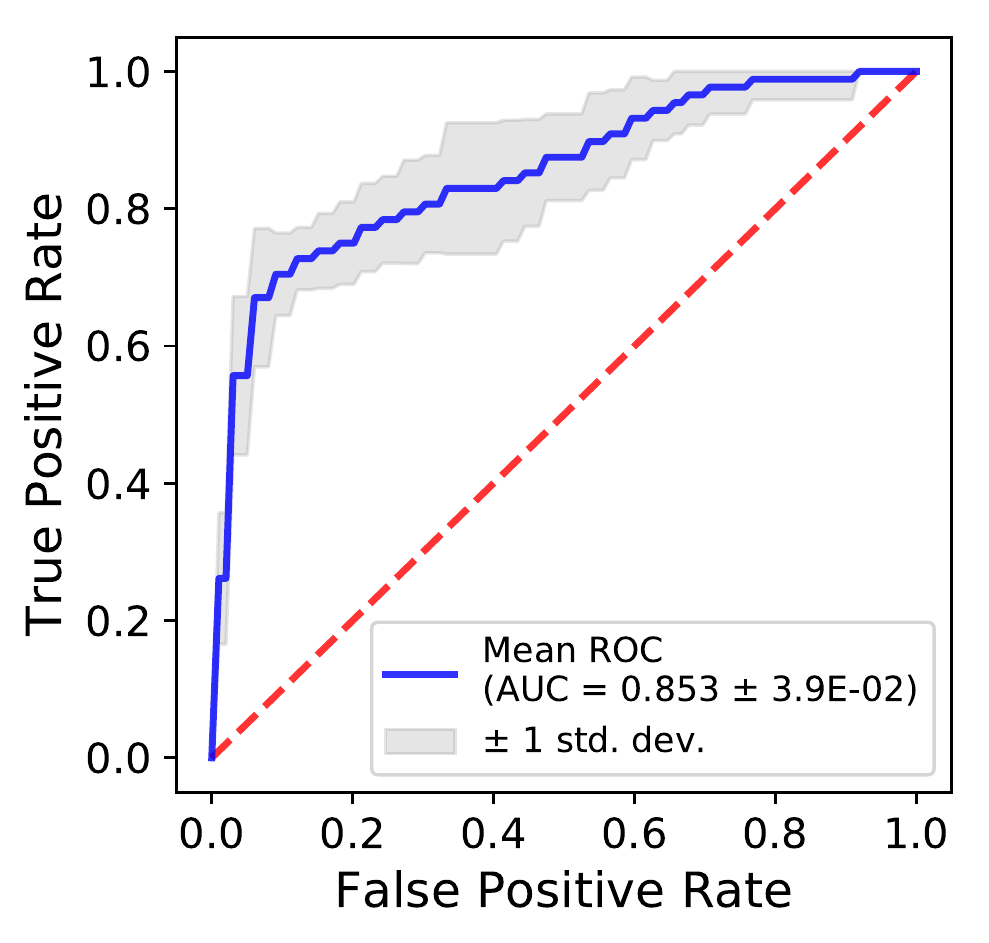}&
\includegraphics[scale=0.27]{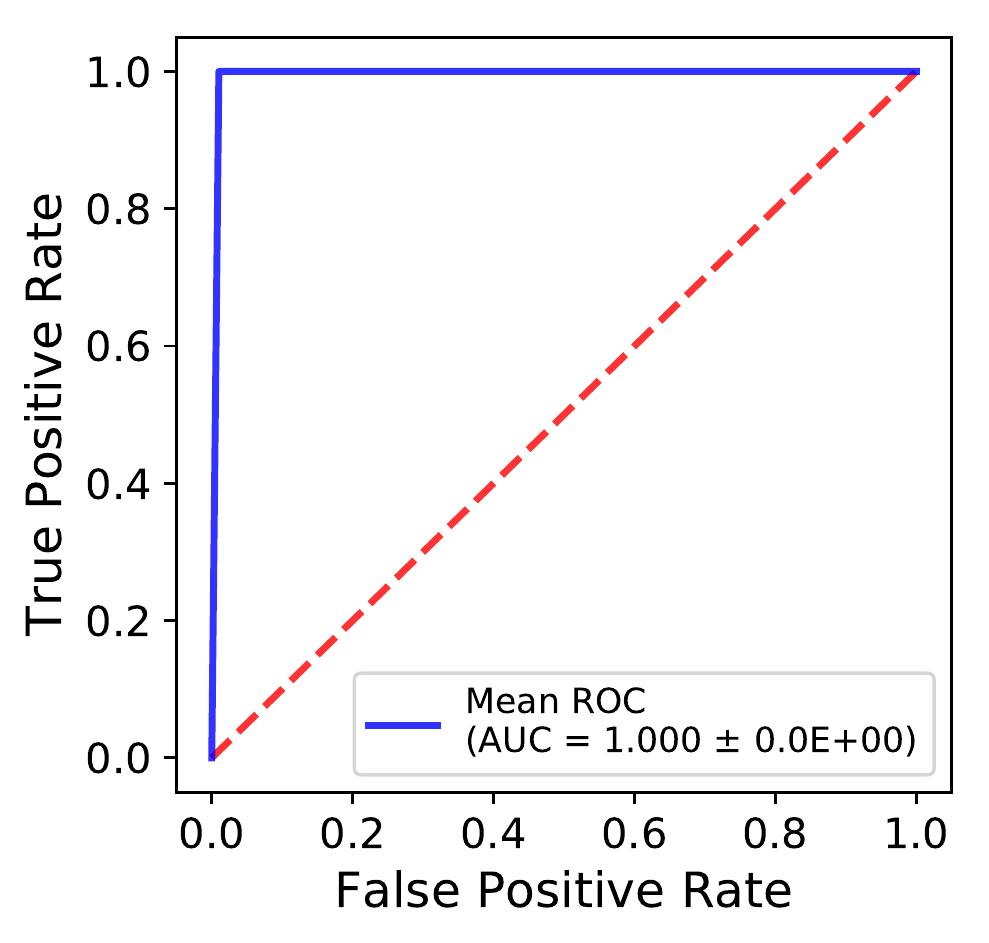}&
\includegraphics[scale=0.27]{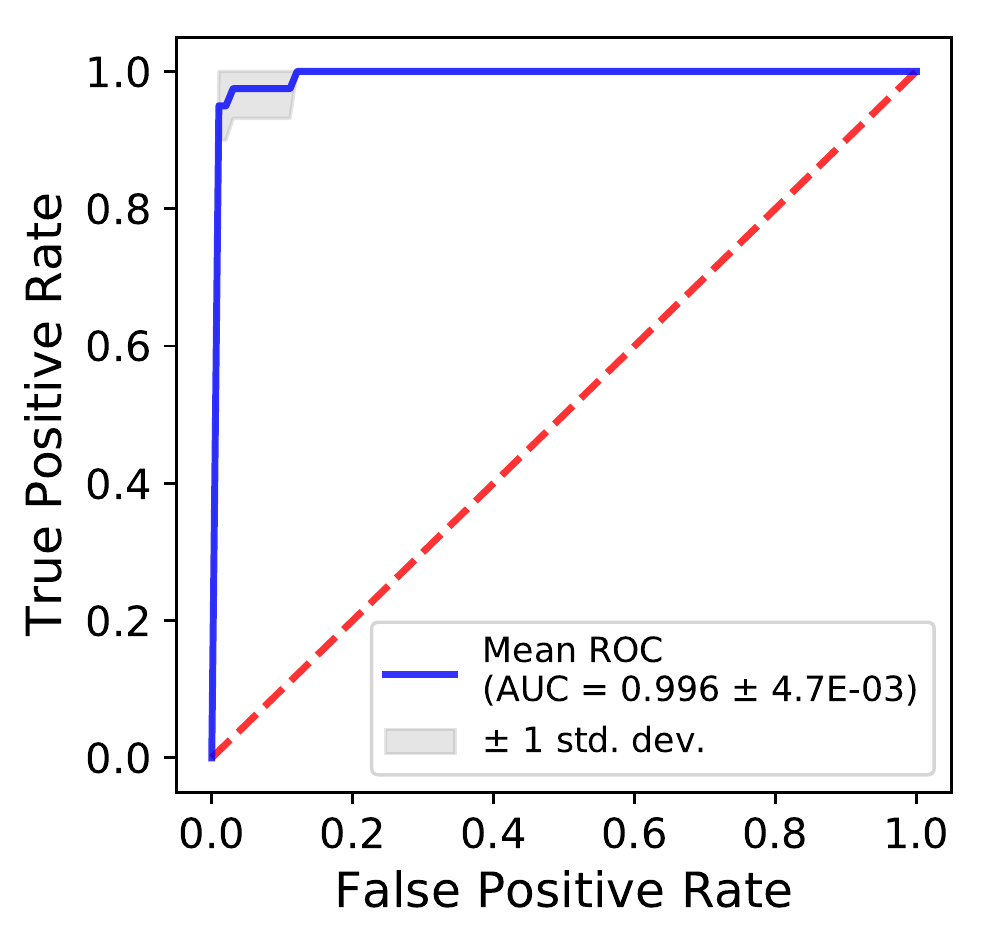}&
\includegraphics[scale=0.27]{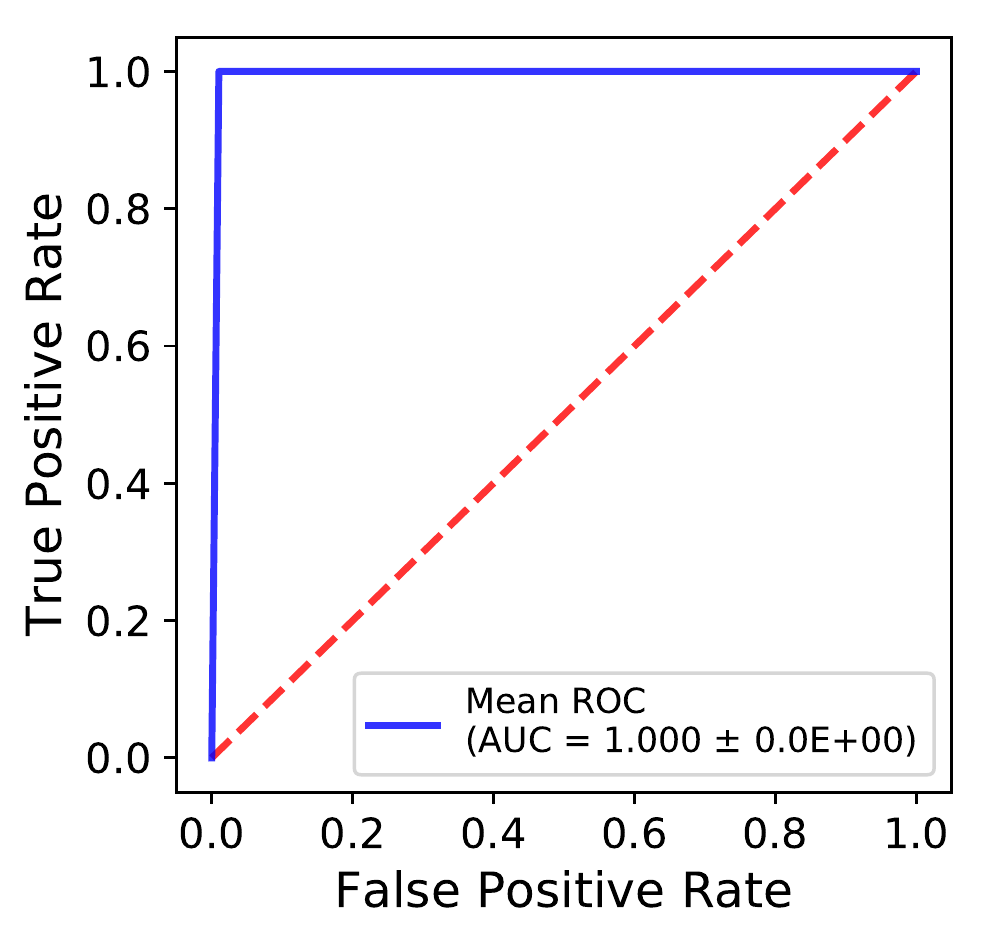}\\

$F_{1}$ & $F_{2}$ & $F_{3}$ & $F_{4}$ & $F_{5}$  \\
\includegraphics[scale=0.27]{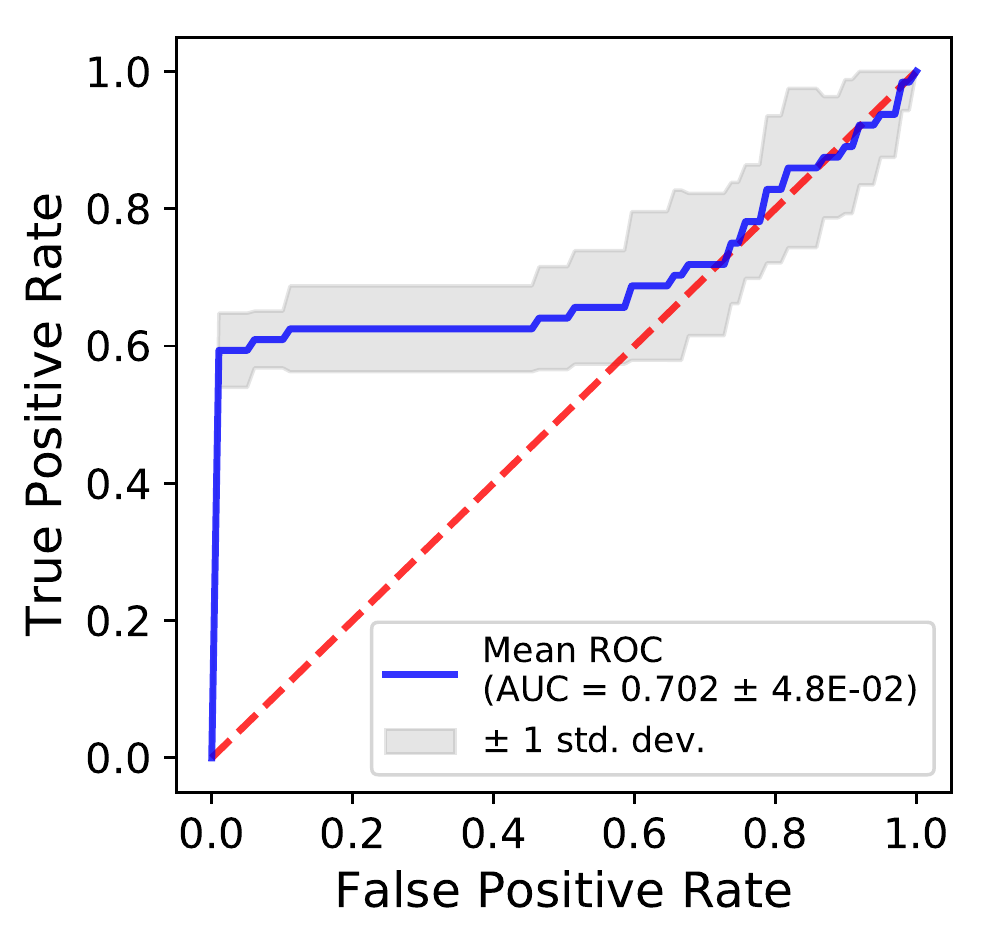}&
\includegraphics[scale=0.27]{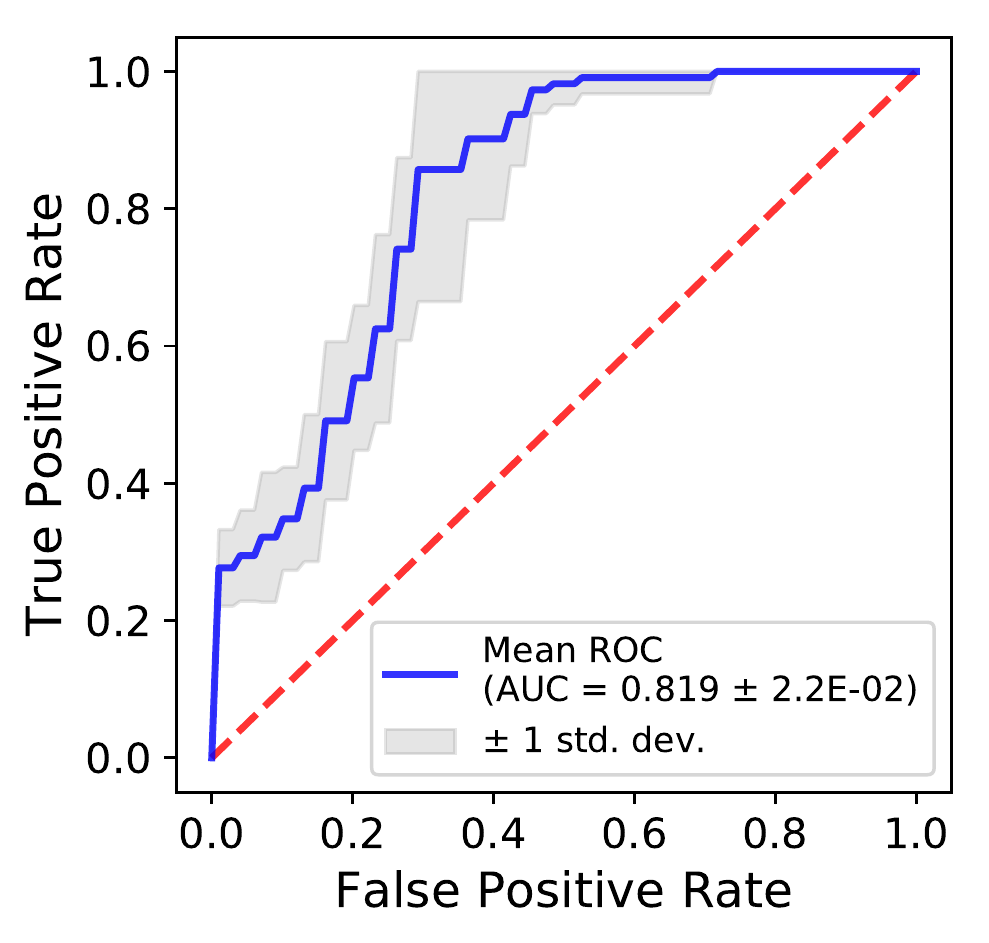}&
\includegraphics[scale=0.27]{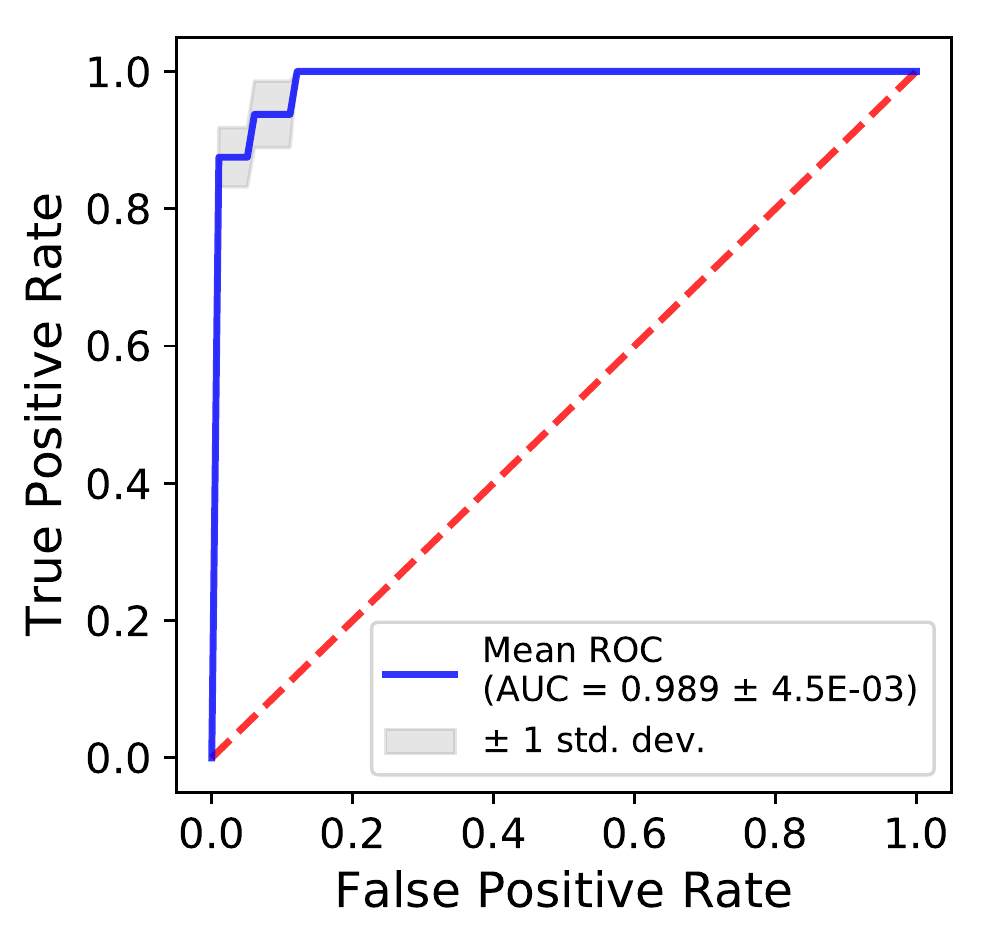}&
\includegraphics[scale=0.27]{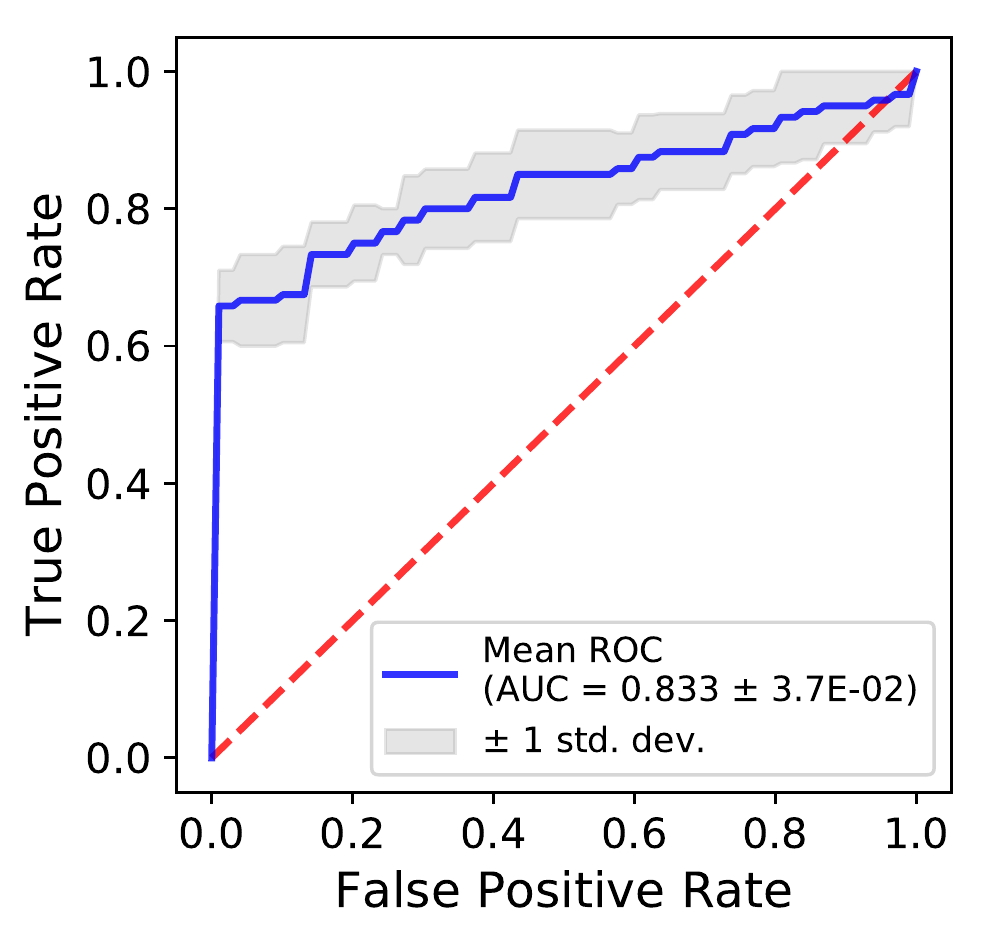}&
\includegraphics[scale=0.27]{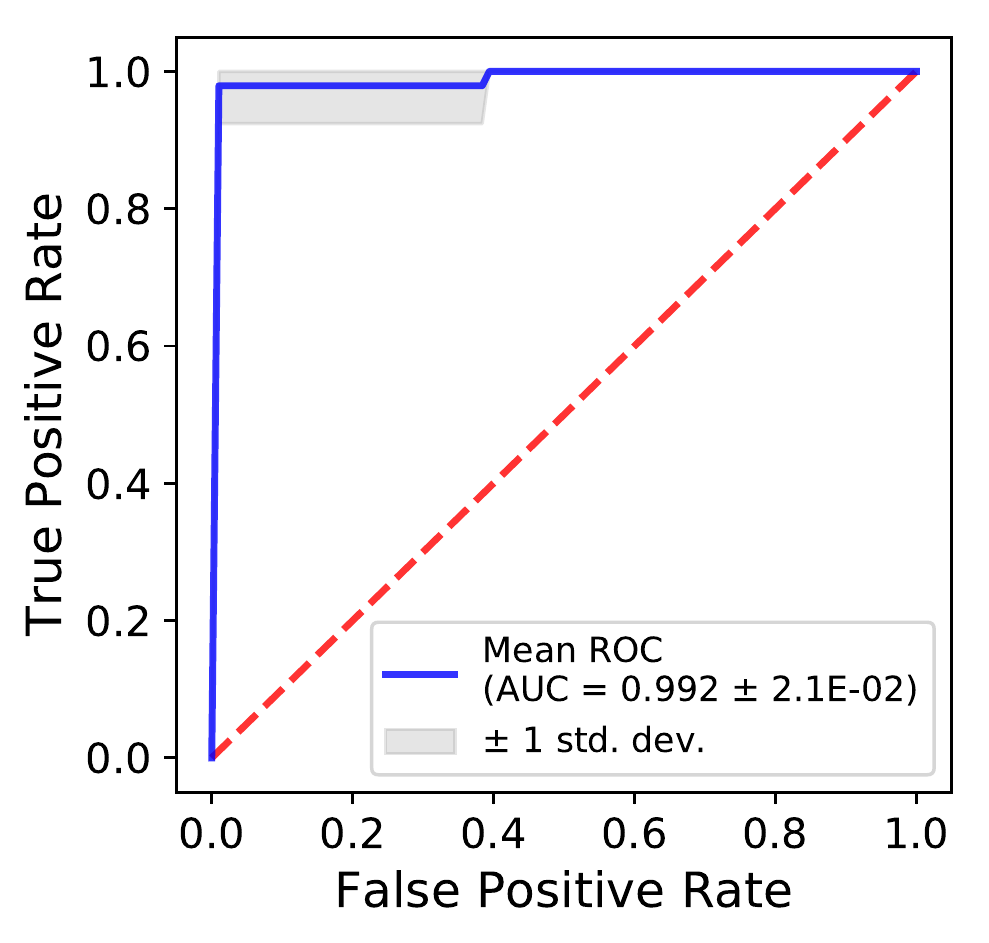}
\\
$F_{6}$ & $F_{7}$ & $F_{8}$ & $F_{9}$ & $F_{10}$  \\
\end{tabular}
\end{center}
\caption{ROC curves of {\algName}-{\mlpm} corresponding to Table {\ref{table:auc}}}
\end{figure*}

\section{Large $p$ Experiment}

\begin{wrapfigure}{R}{0.37\textwidth}
 \center
  \includegraphics[scale=0.4]{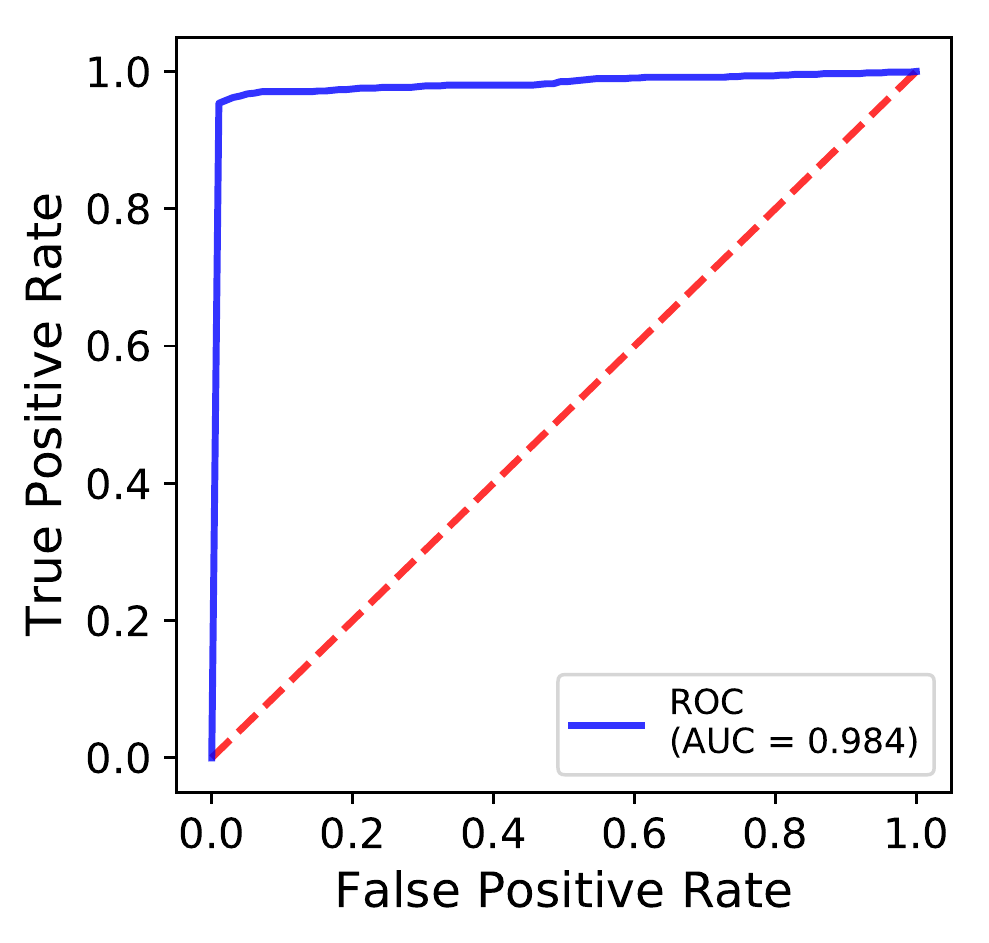}
  \caption{ROC curve for large $p$ experiment  \label{fig:largep_roc}}
\end{wrapfigure}

We evaluate our approach in a large $p$ setting with pairwise interactions using the same synthetic function as in~\citet{purushotham2014factorized}. Specifically, we generate a dataset of $n$ samples and $p$ features $\{\left(\V{X}^{(i)},y^{(i)}\right)\}$ using the function
\begin{align*}
y^{(i)} = \bm{\beta}^{\T}\V{X}^{(i)} + \V{X}^{(i)\T}\V{W}\V{X}^{(i)}+\epsilon^{(i)},
\end{align*} where $\V{X}^{(i)}\in\real^p$ is the $i^{th}$ instance of the design matrix $\V{X}\in\real^{p\times n}$, $y^{(i)}\in\real$ is the $i^{th}$ instance of the response variable $\V{y}\in\real^{n\times 1}$, $\V{W}\in\real^{p\times p}$ contains the weights of pairwise interactions, $\bm{\beta}\in\real^{p}$ contains the weights of main effects, $\epsilon^{(i)}$ is noise, and $i=1,\dots,n$. $\V{W}$ was generated as a sum of $K$ rank one matrices, $\V{W} = \sum_{k=1}^{K} \bm{a}_k\bm{a}_k^\T$. 

In this experiment, we set $p=1000$, $n=1\mathrm{e}{4}$, and $K=5$. $\V{X}$ is normally distributed with mean $0$ and variance $1$, and $\epsilon^{(i)}$ is normally distributed with mean $0$ and variance $0.1$. Both $\bm{a}_k$ and $\bm{\beta}$ are sparse vectors of $2\%$ nonzero density and are normally distributed with mean $0$ and variance $1$. We train {\mlpm} with the same hyperparameters as before (Section~\ref{sec:setup}) but with a larger main network architecture of five hidden layers, with first-to-last layers sizes of $500$, $400$, $300$, $200$, and $100$. Interactions are then extracted using the {\algName} framework.

From this experiment, we obtain a pairwise interaction strength AUC of 0.984 on 950 ground truth pairwise interactions, where the AUC is measured in the same way as those in Table \ref{table:auc}. The corresponding ROC curve is shown in Figure~\ref{fig:largep_roc}.

\section{Comparing Regularization Methods} 

\label{apd:regularization}
We compare the average performance of {\algName} for different regularizations on {\mlpm} networks. Specifically, we compare interaction detection performance when an {\mlpm} network has L1, L2, or group lasso regularization\footnote{We omit dropout in these experiments because it significantly lowered the predictive performance of {\mlpm} in our regression evaluation.}. While L1 and L2 are common methods for regularizing neural network weights, group lasso is used to specifically regularize groups in the input weight matrix because weight connections into the first hidden layer are especially important in this work. In particular, we study group lasso by 1) forming groups associated with input features, and 2) forming groups associated with hidden units in the input weight matrix. In this experimental setting, group lasso effectively conducts variable selection for associated groups.
~\citet{scardapane2017group} define \emph{group lasso} regularization for neural networks in Equation 5.
Denote group lasso with input groups a $R_{\text{GL}}^{(i)}$ and group lasso with hidden unit groups as $R_{\text{GL}}^{(h)}$.
In order to apply both group and individual level sparsity,~\citet{scardapane2017group} further define~\emph{sparse group lasso} in Equation 7. Denote sparse group lasso with input groups as $R_{\text{SGL}}^{(i)}$ and sparse group lasso with hidden unit groups as $R_{\text{SGL}}^{(h)}$.

Networks that had group lasso or sparse group lasso applied to the input weight matrix had L1 regularization applied to all other weights. In our experiments, we use large dataset sizes of $1\mathrm{e}{5}$ and tune the regularizers by gradually increasing their respective strengths from zero until validation performance worsens from underfitting. The rest of the neural network hyperparameters are the same as those discussed in Section~\ref{sec:setup}. In the case of the group lasso and sparse group lasso experiments, L1 norms were tuned the same as in the standard L1 experiments. 

In Table~\ref{table:regularizer}, we report average pairwise interaction strength AUC over 10 trials of each function in our synthetic test suite (Table~\ref{table:testsuite}) for the different regularizers.

\begin{table}[h]
\begin{center}
\caption{Average AUC of pairwise interaction strengths proposed by {\algName} for different regularizers\label{table:regularizer}. Evaluation was conducted on the test suite of synthetic functions (Table~\ref{table:testsuite}).}
\resizebox{\columnwidth}{!}{%
\bgroup
\def\arraystretch{1.5}
\begin{tabular}{l|cccccc}
\hline 
 & L1& L2 & $R_{\text{GL}}^{(i)}$ & $R_{\text{GL}}^{(h)}$ &  $R_{\text{SGL}}^{(i)}$ & $R_{\text{SGL}}^{(h)}$ \\
 \hline
 average & $0.94\pm2.9\mathrm{e}{-2}$  &  $0.94\pm2.5\mathrm{e}{-2}$ & $0.95\pm2.4\mathrm{e}{-2}$ &     $0.94\pm2.5\mathrm{e}{-2}$ & $0.93\pm3.2\mathrm{e}{-2}$ & $0.94\pm3.0\mathrm{e}{-2}$\\
\hline
\end{tabular}
\egroup
}
\end{center}
\end{table}

\section{Comparisons with Logistic Regression Baselines}

We perform experiments with our {\algName} approach on synthetic datasets that have binary class labels as opposed to continuous outcome variables (e.g. Table \ref{table:testsuite}). In our evaluation, we compare our method against two logistic regression methods for multiplicative interaction detection, \emph{Factorization Based High-Order Interaction Model (FHIM)}~\citep{purushotham2014factorized} and \emph{Sparse High-Order Logistic Regression (Shooter)}~\citep{min2014interpretable}. In both comparisons, we use dataset sizes of  $p=200$ features and $n=1\mathrm{e}{4}$ samples based on $\mlpm$'s fit on the data and the performance of the baselines. We also make the following modifications to $\mlpm$ hyperparameters based on validation performance: the main $\mlpm$ network has first-to-last layer sizes of $100$, $60$, $20$ hidden units, the univariate networks do not have any hidden layers, and the L1 regularization constant is set to $5\mathrm{e}{-4}$. All other hyperparameters are kept the same as in Section~\ref{sec:setup}.

\textbf{FHIM}~\citet{purushotham2014factorized} developed a feature interaction matrix factorization method with L1 regularization, \emph{FHIM}, that identifies pairwise multiplicative interactions in flexible $p$ settings. When used in a logistic regression model, \emph{FHIM} detects pairwise interactions that are predictive of binary class labels. For this comparison, we used data generated by Equation 5 in~\citet{purushotham2014factorized}, with $K=2$ and sparsity factors being $5\%$ to generate 73 ground truth pairwise interactions. 

In Table~\ref{table:fhim}, we report average pairwise interaction detection AUC over 10 trials, with a maximum and a minimum AUC removed.   

\begin{table}[h]
\begin{center}
\caption{Pairwise AUC comparison between \emph{FHIM} and {\algName} with $p=200$, $n=1\mathrm{e}{4}$\label{table:fhim}} 
\bgroup
\def\arraystretch{1.5}
\begin{tabular}{l|ccccc}
\hline 
 & \emph{FHIM} & {\algName}   \\
 \hline
 average  & $0.925\pm2.3\mathrm{e}{-3}$ & $0.973\pm6.1\mathrm{e}{-3}$  \\
\hline
\end{tabular}
\egroup
\end{center}
\end{table}

\textbf{Shooter}~\citet{min2014interpretable} developed \emph{Shooter}, an approach of using a tree-structured feature expansion to identify pairwise and higher-order multiplicative interactions in a L1 regularized logistic regression model. This approach is special because it relaxes our hierarchical hereditary assumption, which requires subset interactions to exist when their corresponding higher-order interaction also exists (Section \ref{sec:properties}). Specifically, \emph{Shooter} relaxes this assumption by only requiring \emph{at least one} $(d-1)$-way interaction to exist when its corresponding $d$-way interaction exists.    

With this relaxed assumption, \emph{Shooter} can be evaluated in depth per level of interaction order. We compare {\algName} and \emph{Shooter} under the relaxed assumption by also evaluating {\algName} per level of interaction order, where Algorithm~\ref{alg:rankingAlg} is specifically being evaluated. We note that our method of finding a cutoff on interaction rankings (Section \ref{sec:cutoff}) strictly depends on the hierarchical hereditary assumption both within the same interaction order and across orders, so instead we set cutoffs by thresholding the interaction strengths by a low value, $1\mathrm{e}{-3}$.

For this comparison, we generate and consider interaction orders up to degree $5$ ($5$-way interaction) using the procedure discussed in ~\citet{min2014interpretable}, where the interactions do not have strict hierarchical correspondence. We do not extend beyond degree 5 because {\mlpm}'s validation performance begins to degrade quickly on the generated dataset. The sparsity factor was set to be $5\%$, and to simplify the comparison, we did not add noise to the data. 

In Table \ref{table:shooter}, we report precision and recall scores of \emph{Shooter} and {\algName}, where the scores for {\algName} are averaged over 10 trials. While \emph{Shooter} performs near perfectly, {\algName} obtains fair precision scores but generally low recall. When we observe the interactions identified by {\algName} per level of interaction order, we find that the interactions across levels are always subsets or supersets of another predicted interaction. This strict hierarchical correspondence would inevitably cause {\algName} to miss many true interactions under this experimental setting. The limitation of \emph{Shooter} is that it must assume the form of interactions, which in this case is multiplicative.

\begin{table}[h]
\begin{center}
\caption{Precision and recall of interaction detection per interaction order, where $\abs{\II}$ denotes interaction cardinality. The \emph{Shooter} implementation is deterministic.}
\label{table:shooter}
\begin{tabular}{c|cc|cc}
\hline\\
$\abs{\II}$ &  \multicolumn{2}{c}{\emph{Shooter}}  & \multicolumn{2}{c}{\algName}  \\
 & precision & recall & precision&recall\\
\hline
 2 &$87.5\%$ &$100\%$ &$90\%\pm11\%$  &$21\%\pm9.6\%$ \\   
 3 &$96.0\%$ &$96.0\%$ &$91\%\pm8.4\%$  &$19\%\pm8.5\%$ \\   
 4 &$100\%$ &$100\%$ &$60\%\pm13\%$  &$21\%\pm9.3\%$ \\   
 5 &$100\%$ &$100\%$ &$73\%\pm8.4\%$  &$30\%\pm13\%$\\   \hline
\end{tabular}

\end{center}
\end{table}

\section{Spurious Main Effect Approximation}
\label{apd:spurious} 

In the synthetic function $F_6$ (Table \ref{table:auc}), the $\{8,9,10\}$ interaction, $\sqrt{x_8^2 + x_9^2 + x_{10}^2}$, can be approximated as main effects for each variable ${x_8}$, ${x_9}$, and $x_{10}$ when at least one of the three variables is close to $-1$ or $1$. Note that in our experiments, these variables were uniformly distributed between $-1$ and $1$. 

For example, let $x_{10} = 1$ and $z^2=x_8^2 + x_9^2$, then by taylor series expansion at $z=0$, 
\[\sqrt{z^2 + 1}\approx 1+\frac{1}{2}z^2=1+\frac{1}{2}x_8^2+\frac{1}{2}x_9^2.
\]
By symmetry under the assumed conditions, 
\[\sqrt{x_8^2 + x_9^2 + x_{10}^2}\approx c+\frac{1}{2}x_8^2+\frac{1}{2}x_9^2 + \frac{1}{2}x_{10}^2,
\]
where $c$ is a constant.

In Figure \ref{fig:spurious}, we visualize the $x_8$, $x_9$, $x_{10}$ univariate networks of a $\mlpm$ (Figure \ref{fig:structure}) that is trained on $F_6$. The plots confirm our hypothesis that the $\mlpm$ models the \{8,9,10\} interaction as spurious main effects with parabolas scaled by $\frac{1}{2}$. 

\begin{figure*}[t]
\begin{center}
\setlength\tabcolsep{1.5pt} 
\begin{tabular}{ccc}
\includegraphics[scale=0.37]{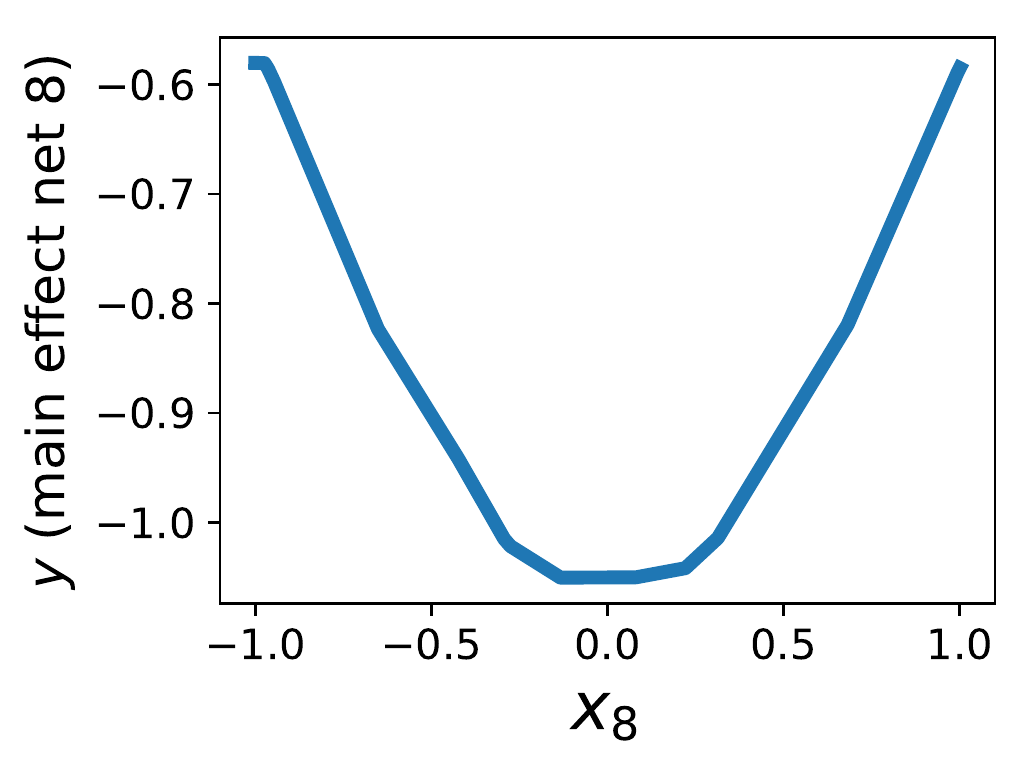}&
\includegraphics[scale=0.37]{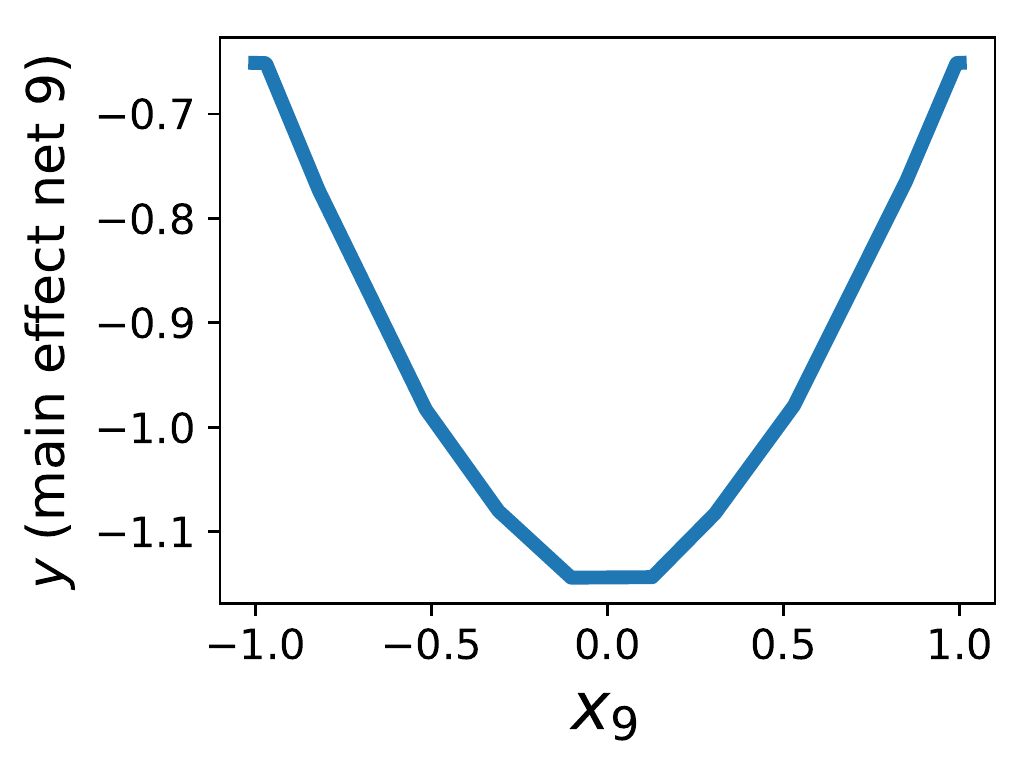}&
\includegraphics[scale=0.37]{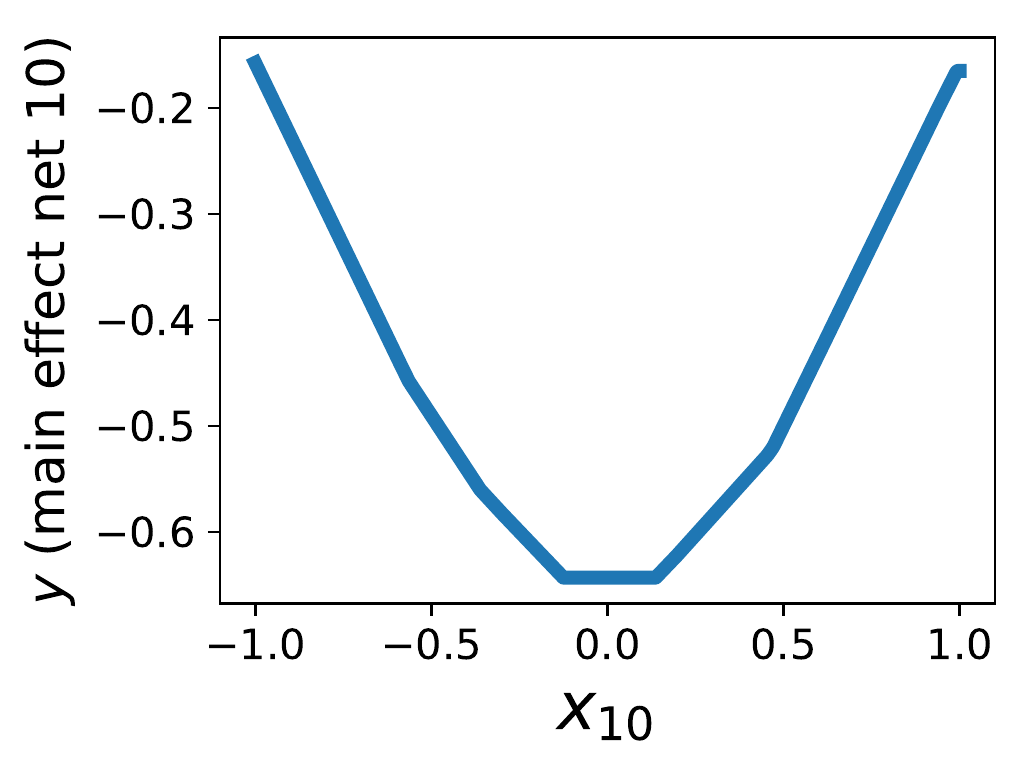}\\
\end{tabular}
\end{center}
\vspace{-0.05in}\caption{Response plots of an {\mlpm}'s univariate networks corresponding to variables $x_8$, $x_9$, and $x_{10}$. The {\mlpm} was trained on data generated from synthetic function $F_6$ (Table \ref{table:auc}). Note that the plots are subject to different levels of bias from the {\mlpm}'s main multivariate network.\label{fig:spurious}}
\end{figure*}

\section{Interaction Visualization}
\label{apd:visualize}

\begin{figure}[h]
\begin{center}
\includegraphics[scale=0.5]{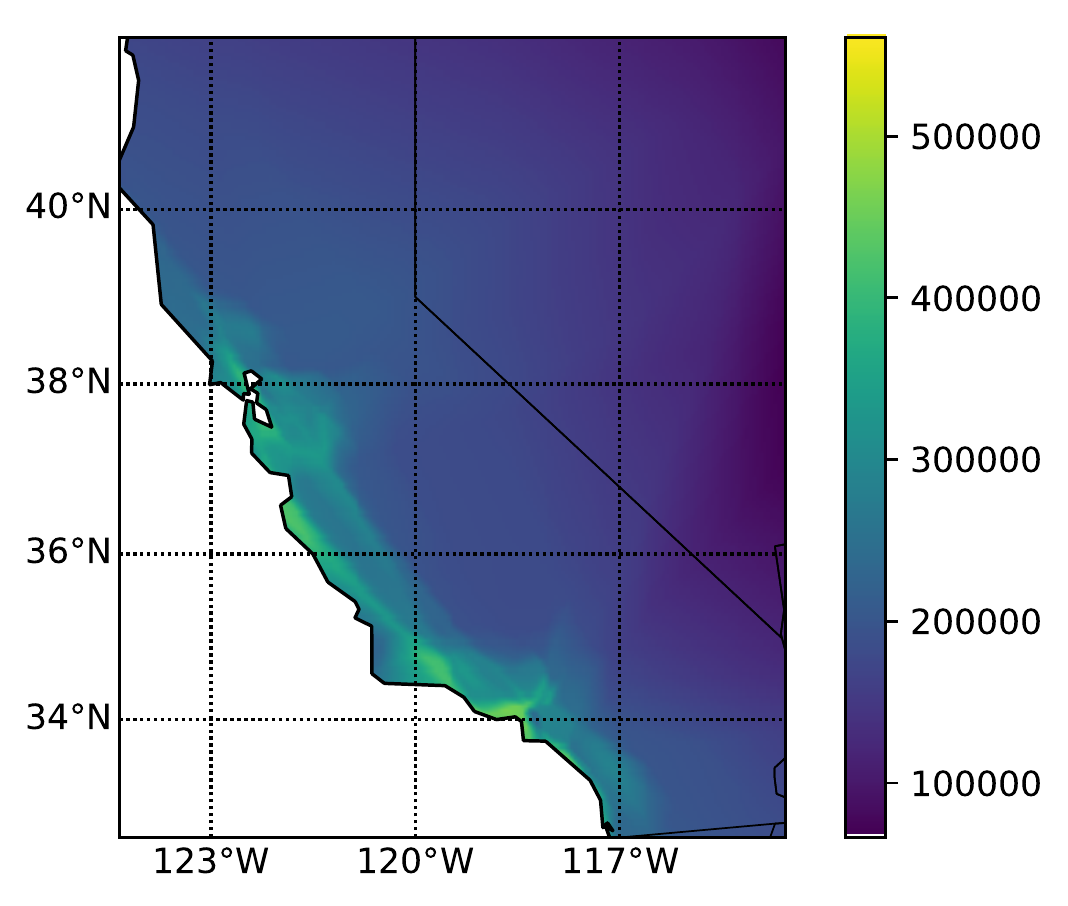}
\end{center}
\caption{A heatmap showing relative housing prices in California based on longitude and latitude.  \label{fig:cal_interaction}}
\end{figure}

In Figure \ref{fig:cal_interaction}, we visualize the interaction between longitude and latitude for predicting relative housing price in California. This visualization is extracted from the longitude-latitude interaction network within {\mlpc}\footnote{{\mlpc} is a generalization of  GA$^{2}$Ms, which are known as intelligible models~\citep{lou2013accurate}.}, which was trained on the cal housing dataset~\citep{pace1997sparse}. We can see that this visualization cannot be generated by longitude and latitude information in additive form, but rather the visualization needs special joint information from both features. The highly interacting nature between longitude and latitude confirms the high rank of this interaction in our {\algName} experiments (see the $\{1,2\}$ interaction for cal housing in Figures~\ref{fig:real_pairwise} and~\ref{fig:real_highorder}).

\end{appendix}

\end{document}